%%%%%%%% ICML 2026 EXAMPLE LATEX SUBMISSION FILE %%%%%%%%%%%%%%%%%

\documentclass{article}

% Recommended, but optional, packages for figures and better typesetting:
\usepackage{microtype}
\usepackage{graphicx}
\usepackage{subcaption}
\usepackage{booktabs} % for professional tables

% hyperref makes hyperlinks in the resulting PDF.
% If your build breaks (sometimes temporarily if a hyperlink spans a page)
% please comment out the following usepackage line and replace
% \usepackage{icml2026} with \usepackage[nohyperref]{icml2026} above.
\usepackage{hyperref}

% Attempt to make hyperref and algorithmic work together better:

% Use the following line for the initial blind version submitted for review:
% \usepackage{icml2026}

% For preprint, use
% \usepackage[preprint]{icml2026}

% If accepted, instead use the following line for the camera-ready submission:
\usepackage[accepted]{icml2026}

\usepackage{amsmath}
\usepackage{amssymb}
\usepackage{mathtools}
\usepackage{amsthm}

% if you use cleveref..
\usepackage[capitalize,noabbrev]{cleveref}

%%%%%%%%%%%%%%%%%%%%%%%%%%%%%%%%
% THEOREMS
%%%%%%%%%%%%%%%%%%%%%%%%%%%%%%%%
\theoremstyle{plain}
\usepackage{pifont}
\usepackage{multirow}

\newtheorem{theorem}{Theorem}[section]
\newtheorem{proposition}[theorem]{Proposition}

\theoremstyle{definition}

\theoremstyle{remark}

%%%%%%%%%%%%%%%%%%%%%%%%%%%%%%%%%%
% OUR PACKAGES 
%%%%%%%%%%%%%%%%%%%%%%%%%%%%%%%%%%
\usepackage{enumitem}
\usepackage[colorinlistoftodos,bordercolor=orange,backgroundcolor=orange!20,linecolor=orange, textsize=tiny]{todonotes}

\usepackage{IEEEtrantools}
\usepackage{bbm}
\usepackage{makecell}
\usepackage{calc}
\usepackage{enumitem}

\newcommand{\calR}{\mathcal{R}}
\newcommand{\bbE}{\mathbb{E}}
\DeclareMathOperator{\sgn}{sgn}

%============== Theorem style ================%
% DELETED BECAUSE ALREADY DEFINED 

%============== Math operators ================%

\DeclareUnicodeCharacter{211D}{\ensuremath{\mathbb R}}

\newcommand{\rb}{\mathbb{R}}

\def \ds{\displaystyle}
 
\def \1{{\mathbbm{1}}}

\def \S{  { \Sigma} }

\def \E{{\mathbb E}}
\def \P{{\mathbb P}}

\renewcommand{\cite}{\citet}

\usepackage[page]{appendix}

\makeatletter
\let\oldappendix\appendices

\renewcommand{\appendices}{%
  \clearpage
  \RestoreAddContentsLine % <-- restore it locally for appendices
  \renewcommand{\thesection}{\Roman{section}}
  \let\tf@toc\tf@app
  \addtocontents{app}{\protect\setcounter{tocdepth}{2}}
  \immediate\write\@auxout{%
    \string\let\string\tf@toc\string\tf@app^^J
  }
  \oldappendix
}%

\newcommand{\listofappendices}{%
  \begingroup
  \renewcommand{\contentsname}{Contents}
  \let\@oldstarttoc\@starttoc
  \def\@starttoc##1{\@oldstarttoc{app}}
  \tableofcontents% Reusing the code for \tableofcontents with different \contentsname and different file handle app
  \endgroup
}

\makeatother

\makeatletter
\newcommand{\RestoreAddContentsLine}{%
  \ifcsname hyper@anchor\endcsname
    % hyperref loaded: contentsline has a 4th argument (anchor)
    \def\addcontentsline##1##2##3{%
      \addtocontents{##1}{%
        \protect\contentsline{##2}{##3}{\thepage}{\@currentHref}%
      }%
    }%
  \else
    % no hyperref: contentsline has 3 arguments
    \def\addcontentsline##1##2##3{%
      \addtocontents{##1}{%
        \protect\contentsline{##2}{##3}{\thepage}%
      }%
    }%
  \fi
}
\makeatother

%%%%%%%%%%%%%%%%%%%%%%%%%%%%%%%%%%
% END OF OUR PACKAGES 
%%%%%%%%%%%%%%%%%%%%%%%%%%%%%%%%%%

% Todonotes is useful during development; simply uncomment the next line
%    and comment out the line below the next line to turn off comments
%\usepackage[disable,textsize=tiny]{todonotes}
\usepackage[textsize=tiny]{todonotes}

% The \icmltitle you define below is probably too long as a header.
% Therefore, a short form for the running title is supplied here:
\icmltitlerunning{Conditional Coverage Diagnostics for Conformal Prediction}

\begin{document}

\twocolumn[
  \icmltitle{Conditional Coverage Diagnostics for Conformal Prediction}

  % It is OKAY to include author information, even for blind submissions: the
  % style file will automatically remove it for you unless you've provided
  % the [accepted] option to the icml2026 package.

  % List of affiliations: The first argument should be a (short) identifier you
  % will use later to specify author affiliations Academic affiliations
  % should list Department, University, City, Region, Country Industry
  % affiliations should list Company, City, Region, Country

  % You can specify symbols, otherwise they are numbered in order. Ideally, you
  % should not use this facility. Affiliations will be numbered in order of
  % appearance and this is the preferred way.
  \icmlsetsymbol{equal}{*}

  \begin{icmlauthorlist}
    \icmlauthor{Sacha Braun}{sierra,ens}
    \icmlauthor{David Holzmüller}{soda}
    \icmlauthor{Michael I. Jordan}{berk,sierra}
    \icmlauthor{Francis Bach}{sierra,ens}
  \end{icmlauthorlist}

  \icmlaffiliation{sierra}{Sierra team, Inria Paris, France}
  \icmlaffiliation{ens}{Ecole Normale Supérieure, PSL Research University, Paris}
  \icmlaffiliation{soda}{Soda team, Inria Paris-Saclay, France}
  \icmlaffiliation{berk}{Departments of EECS and Statistics, UC Berkeley, USA}

  \icmlcorrespondingauthor{Sacha Braun}{sacha.braun@inria.fr}
  % \icmlcorrespondingauthor{David Holzmüller}{david.holzmuller@inria.fr}
  % \icmlcorrespondingauthor{Michael I. Jordan}{jordan@cs.berkeley.edu}
  % \icmlcorrespondingauthor{Francis Bach}{francis.bach@inria.fr}

  % You may provide any keywords that you find helpful for describing your
  % paper; these are used to populate the "keywords" metadata in the PDF but
  % will not be shown in the document
  \icmlkeywords{Machine Learning, ICML}
  \icmlkeywords{Conditional coverage}
  \icmlkeywords{Metric}
  \icmlkeywords{Conformal prediction}

  \vskip 0.3in
]

% this must go after the closing bracket ] following \twocolumn[ ...

% This command actually creates the footnote in the first column listing the
% affiliations and the copyright notice. The command takes one argument, which
% is text to display at the start of the footnote. The \icmlEqualContribution
% command is standard text for equal contribution. Remove it (just {}) if you
% do not need this facility.

% Use ONE of the following lines. DO NOT remove the command.
% If you have no special notice, KEEP empty braces:
\printAffiliationsAndNotice{}  % no special notice (required even if empty)
% Or, if applicable, use the standard equal contribution text:
% \printAffiliationsAndNotice{\icmlEqualContribution}

\begin{abstract}
Evaluating conditional coverage remains one of the most persistent challenges in assessing the reliability of predictive systems. 
Although conformal methods can give guarantees on marginal coverage, no method can guarantee to produce sets with correct conditional coverage, leaving practitioners without a clear way to interpret local deviations.
To overcome sample-inefficiency and overfitting issues of existing metrics, we cast conditional coverage estimation as a classification problem. Conditional coverage is violated if and only if some classifier can achieve lower risk than the target coverage. Through the choice of a (proper) loss function, the resulting risk difference gives a conservative estimate of natural miscoverage measures such as L1 and L2 distance, and can even separate the effects of over- and under-coverage, as well as handle non-constant target coverages. We call the resulting family of metrics \emph{excess risk of the target coverage} (ERT). We show experimentally that the use of modern classifiers provides much higher statistical power than simple classifiers underlying established metrics like CovGap.
Additionally, we use our metric to benchmark different conformal prediction methods. 
Finally, we release an open-source package for ERT as well as previous conditional coverage metrics. Together, these contributions provide a new lens for understanding, diagnosing, and improving the conditional reliability of predictive systems.
\end{abstract}

\section{Introduction}

\begin{figure}[t]
    \centering
    \includegraphics[width=0.48\textwidth]{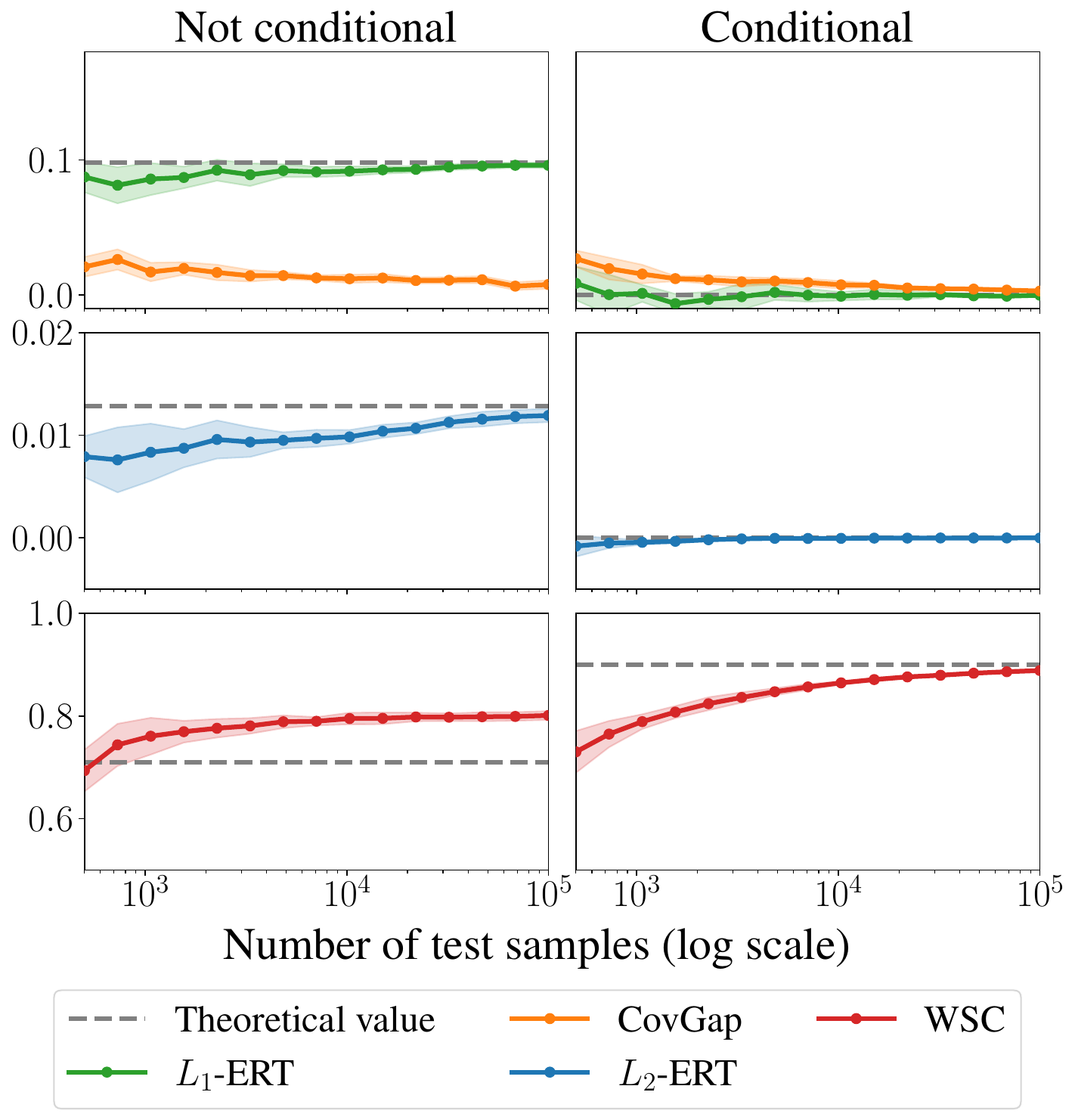}\hfill
    \caption{Estimated metrics as a function of the number of test samples. Top: CovGap and $L_1$-ERT (which both aim to estimate $\E_X[|\P(Y\in C_\alpha(X)|X)-(1-\alpha)|]$). Middle: $L_2$-ERT. Bottom: WSC. \textbf{Left:} Standard CP not conditional. \textbf{Right:} Oracle conditional sets. Theoretical values are estimated using the true value $\mathbb{P}(Y \in C_\alpha(X)\mid X)$ with 300,000 samples from $\mathbb{P}_X$.}
    \label{figure:metrics:as:number:points}
\end{figure}

Uncertainty quantification is central to decision-making across science, engineering, and policy. In many applications, the goal is not a single point prediction but a set of plausible outcomes with a desired confidence level. This is formalized by a predictive set rule $C(\cdot)$, which outputs a region expected to contain the true outcome with a desired probability. Such predictive sets capture data noise, model imperfections, and variability, supporting safer decisions and clearer communication of model confidence.

Conformal prediction (CP) offers a general framework for constructing prediction sets with finite-sample coverage guarantees \citep{vovk2005algorithmic, shafer2008tutorial}. Its only requirement is that the available data samples are exchangeable: a condition even weaker than the standard independent and identically distributed (i.i.d.) assumption.
In recent years, CP has rapidly emerged as a go-to tool for adding rigorous, model-agnostic uncertainty estimates to modern black-box predictors \citep{angelopoulos2021gentle}. This makes it especially appealing for scientific and industrial applications where formal guarantees on model predictions are essential.

CP’s simplicity hides an important drawback: it only guarantees \emph{marginal coverage}, which means that the constructed prediction set contains the true outcome with probability $1 - \alpha$ on average across the population. That is, the coverage is right on average, but not necessarily for each individual. In practice one often desires \emph{conditional coverage}; i.e., asking that the coverage guarantee holds not only on average but also for specific subpopulations or feature values. 

Achieving exact conditional coverage is impossible in general without strong distributional assumptions \citep{vovk2012conditional, lei2014distribution, foygel2021limits}, and even approximate versions are notoriously difficult to deploy. Improving conditional coverage in CP typically requires carefully designed nonconformity scores. 
Common strategies rely on models that provide uncertainty estimates, such as quantile regression \citep{romano2019conformalized}, predictive distributions \citep{izbicki2022cd, braun2025multivariate}, or local score adjustments \citep{guan2023localized, messoudi2022ellipsoidal, thurin2025optimal}. 
Recent work proposes post hoc corrections that directly model conditional quantiles of the nonconformity score \citep{plassier2025rectifying}. 
There is, however, a difficulty in assessing whether progress is being made in this literature, which is the lack of a standard way to \emph{evaluate} conditional coverage and thereby compare algorithms.  The fundamental problem of the evaluation of conditional coverage is the focus of the current paper.

Evaluating conditional coverage is difficult. 
Group-based diagnostics, such as fairness-style coverage gaps \citep{ding2023class}, require large sample sizes per group and are highly sensitive to group definitions. 
Geometric scans such as worst-case slab coverage \citep[WSC,][]{cauchois2021knowing} offer a more adaptive view but suffer from severe sample complexity in high dimensions. 
Dependence-based diagnostics \citep{feldman2021improving} capture correlations between coverage and auxiliary variables but do not provide a standalone notion of conditional validity. 
In short, there is still no robust, general-purpose metric for assessing conditional coverage in practice.

\paragraph{Contributions.} We address this gap by reframing conditional coverage evaluation as a supervised prediction task: given features \(X\in \mathcal{X}\), predict whether the label \(Y\in \mathcal{Y}\) falls inside the predictive set \(C_\alpha(X)\), where $(X, Y)\sim\P_{X,Y}$. Under perfect conditional coverage, for any proper loss \(\ell\) the Bayes-optimal predictor, $h:\mathcal{X}\to [0,1]$, which is defined as the minimizer of the risk $\E[\ell(h(X), \1\{Y\in C_\alpha(X)\})]$
is the constant \(1-\alpha\). Consequently, any predictor that consistently outperforms this constant directly exposes a violation of conditional coverage.  
Building on this insight, we introduce the \emph{excess risk of the target coverage} ($\ell\textrm{-ERT}$) metric to quantify deviations from conditional validity. Our metric provides an estimate of $\E_X[D_\varphi(p(X)\|1-\alpha)]$ where $D_\varphi$ is the Bregman divergence of a convex function $\varphi$ \citep{bregman1967relaxation}. For instance, we can reliably estimate the quantity $\mathbb{E}[\,|\P(Y\in C_\alpha(X)|X)-(1-\alpha)|\,]$ and our estimator is guaranteed to provide a lower bound on its true value.

To contextualize our contribution, we establish a formal connection between existing group-based diagnostics and our metrics, showing that our formulation generalizes partition-based estimators to arbitrary predictor classes. This unified perspective unlocks the full potential of functional estimation, integrating both parametric and nonparametric approaches to assess conditional coverage, moving beyond heuristic group evaluations toward principled, model-based inference.

Through experiments, we demonstrate that our proposed conditional coverage metrics are empirically more robust, that is, less prone to misleading diagnostics, than existing alternatives. Finally, we benchmark several conformal prediction methods on real-world regression and classification tasks to compare their conditional coverage performance. To support reproducibility and to catalyze further progress in conformal prediction, we release \texttt{covmetrics}\footnote{\url{https://github.com/ElSacho/covmetrics}} an open-source package for evaluating conditional coverage using our approach alongside established metrics.

Overall, this work casts \emph{conditional coverage evaluation} as a key missing piece for practical conformal prediction, and offers new tools to fill the gap.

\paragraph{Background on conformal prediction.}
We summarize the usual conformal prediction \citep{papadopoulos2002inductive, lei2018distribution, angelopoulos2021gentle} procedure in \Cref{app:background:conformal}.
It is a procedure to build a predictive set $C_\alpha(X_\text{test})$ such that
\begin{align}
\label{eq:cp:guarantees}
    \P_{X,Y} \!\left( Y_\text{test} \in C_\alpha(X_\text{test}) \right) = 1 - \alpha .
\end{align}
It is important to note that the guarantee in Eq.~\eqref{eq:cp:guarantees} is marginal, which means that coverage holds on average over the distribution of $X_\textrm{test}$ and a dataset used to build $C_\alpha$. A stronger requirement is \emph{conditional coverage}, which demands
\begin{align}
\label{eq:basic:CP:conditional:guarantee}
    \P_{Y|X} \!\left( Y_\text{test} \in C_\alpha(X_\text{test}) \,\middle|\, X_\text{test} \right) = 1 - \alpha ,
\end{align}
for almost every $X_\text{test}$, but achieving~\eqref{eq:basic:CP:conditional:guarantee} is impossible in general without additional assumptions. For a given conformal prediction strategy that achieves marginal coverage~\eqref{eq:cp:guarantees}, it is essential to be able to measure how close to a conditional guarantee~\eqref{eq:basic:CP:conditional:guarantee} we are.

\paragraph{Notation.} We denote by $\1_{x\in A}$ the indicator function, equal to 1 if $x\in A$ and to 0 if $x\notin A$, for
some set $A$. We denote by $\sgn(x)$ the function that returns the sign of $x\in \rb$, where $\sgn(0)=0$. We write \mbox{$\Delta_d := \Big\{ p \in \mathbb{R}^d \;\Big|\; p_i \ge 0 \ \forall i = 1,\dots,d, \ \sum_{i=1}^d p_i = 1 \Big\}$} to denote the probability simplex.

\section{Related Work}
\label{app:overview}

In the following, we assume that a predictive model has already been trained to produce a predictive set rule $C_\alpha(\cdot)$ for the output variable $Y\in\mathcal{Y}$, given a feature vector $X\in \mathcal{X}$. 
We are given a test dataset, $\mathcal{D}_\text{test} = \{(X_i, Y_i)\}_{i=1}^m$, to evaluate the conditional coverage of this predictive  strategy. 
The test samples are assumed to be sampled i.i.d.\ from the distribution $\P_{X,Y}$ and unseen during training.

\paragraph{Group-based diagnostics.}
A common way to study conditional coverage for a predictive set \(C_\alpha(\cdot)\) is to evaluate coverage over subpopulations or other partitions of the data. To make this concrete, fix a finite set of groups \(\mathcal{G}\) and a mapping \(g:\mathcal{X}\to\mathcal{G}\) that assigns each feature \(x\in\mathcal{X}\), to a group \(g(x)\in\mathcal{G}\).  For a given group \(\textbf{g}\in\mathcal{G}\) and associated test indices \(\mathcal{I}_\textbf{g}=\{i: g(X_i)=\textbf{g}\}\), the empirical coverage in group \(\textbf{g}\) is
\[
C_\textbf{g} \;=\; \frac{1}{|\mathcal{I}_\textbf{g}|}\sum_{i\in\mathcal{I}_\textbf{g}} \1\{Y_i\in C_\alpha(X_i)\}.
\]
We review several strategies for defining these groups in Appendix \ref{app:additional:metrics}. Unless stated otherwise, in the following, the groups are obtained by clustering the feature space using the k-means algorithm. Below we review strategies that use such groupings to diagnose conditional miscoverage. 

\begin{itemize}[leftmargin=*, labelsep=0.4em]
  \item \textbf{Coverage gap (CovGap).} This measures the average absolute deviation from the target coverage across groups,
  \[
  \text{CovGap} \;=\; \frac{1}{|\mathcal{G}|}\sum_{\mathbf{g}\in\mathcal{G}} \big| C_\textbf{g} - (1-\alpha)\big|.
  \]
   This metric is one of the most commonly used metrics in the literature (see, e.g,\ \citealt{ding2023class, kaur2025conformal, zhu2025predicate, fillioux2024foundation, liu2025conformal}). A problem is that CovGap requires a large number of samples within each group to be consistent.

  \item \textbf{Weighted coverage gap (WCovGap).} To connect CovGap to our main metrics, we introduce its corrected version that assigns a weight to each group’s coverage gap:
\[
\textrm{WCovGap} \;=\; \sum_{\mathbf{g}\in\mathcal{G}} \frac{|\mathcal{I}_\mathbf{g}|}{m} \, \big| C_\mathbf{g} - (1-\alpha) \big|.
\]

This formulation highlights that this metric can be interpreted as a nonparametric estimator of the quantity,
\[
 \mathbb{E}_X\big[\,\big|\mathbb{P}_{Y|X}(Y\in C_\alpha(X)\mid X)-(1-\alpha)\big|\,\big].
\]
Indeed, $\mathrm{CovGap}(X) := C_{g(X)}$ is an estimate of \mbox{$\mathbb{P}(Y \in C_\alpha(X) \mid g(X)=\mathbf{g})$}, and $\frac{|\mathcal{I}_\mathbf{g}|}{m}$ an estimate of $\P_X(g(X)=\mathbf{g})$. Under standard regularity assumptions, specifically, if the partition $\mathcal{G}$ becomes increasingly fine (i.e., the number of groups tends to infinity while their diameters shrink to zero) and if $h^*(X):=\mathbb{E}[1_{Y\in C_\alpha(X)}\mid X]$ is Lipschitz-continuous, then the groupwise coverage $C_\mathbf{g}$ converges to the true conditional coverage $\mathbb{P}_{Y|X}(Y \in C_\alpha(X)\mid X)$ (see, e.g, \citealt{gyorfi2005distribution, bach2024learning}). Consequently, the weighted CovGap (WCovGap) metric provides a nonparametric estimate of the stated quantity. If the groups are balanced in size, the metric CovGap admits the same type of probabilistic interpretation.

This observation is central to understanding the positioning of our work: previous strategies can be seen as partition-wise estimators of conditional coverage. In contrast, our approach leverages modern classifiers to obtain a more accurate estimation of conditional coverage.
\end{itemize}

\paragraph{Worst-case slab diagnostic.}
Rather than pre-specified groups, some diagnostics scan geometric slices of the feature space, if $\mathcal{X}\subset \rb^d$. This strategy is commonly referred to as the worst-case slab coverage \citep[WSC,][]{cauchois2021knowing}. 

\begin{itemize}[leftmargin=*, labelsep=0.4em]
    \item \textbf{Worst-case slab coverage (WSC).} For a direction \(v\in\mathbb{R}^d\) and scalars \(a<b\), define the slab
\[
S_{v,a,b}:=\{x\in\mathbb{R}^d : a\le v^\top x\le b\}.
\]
 Let \(\mathcal{I}_{v,a,b}=\{i: X_i\in S_{v,a,b}\}\). For a mass threshold \(\delta\in(0,1]\), the empirical WSC in direction \(v\) $\text{WSC}_n\big(C_\alpha(\cdot),v\big)$ is
\begin{align*}
\inf_{a<b} \left\{ \frac{1}{|\mathcal{I}_{v,a,b}|} \sum_{i\in\mathcal{I}_{v,a,b}} \1\{Y_i\in C_\alpha(X_i)\}
\middle| \frac{|\mathcal{I}_{v,a,b}|}{n}\ge\delta \right\}.
\end{align*}
In practice, the induced metric requires a finite set of directions $V$ and computes:
\begin{align*}
    \text{WSC} = \inf_{v\in V}\text{WSC}_n\big(C_\alpha(\cdot),v\big).
\end{align*}
 This set is typically generated by sampling vectors at random from $\rb^d$. When evaluating over a finite set of directions \(V\), \(\text{WSC}\) uniformly approximates the population slab-coverage with high probability; the approximation error depends on the VC dimension \citep{vapnik2013nature} of the class of slabs induced by \(V\). However, under conditional coverage, without sufficient test data WSC tends to provide a pessimistic estimate of the conditional coverage violation by overstating apparent conditional coverage violations as detailed in~\Cref{app:additional:metrics}. Furthermore, this strategy does not adapt well to categorical data.
 \end{itemize}

In Appendix~\ref{app:additional:metrics} we review additional metrics. One of them is feature-stratified coverage (FSC), a group-based metric that reports the group with the worst coverage. This metric often appears in fairness-related work (see e.g,\ \citealt{angelopoulos2021gentle, ding2023class, jung2022batch}). Several grouping strategies besides categorical attributes or clustering have also been studied. For example, equal opportunity of coverage (EOC) \citep{wang2023equal} forms groups based on the output, and size-stratified coverage (SSC) \citep{angelopoulos2020uncertainty} groups examples by the size of the prediction set. Another approach is to avoid explicit grouping and instead measure statistical dependence between the coverage indicator
$Z := \mathbf{1}\{Y\in C_\alpha(X)\}$ and the prediction-set size. \cite{feldman2021improving} introduced two such dependence measures based on Pearson's correlation and the Hilbert–Schmidt independence criterion (HSIC). \\

Each diagnostic has strengths and limitations. Group-based metrics (CovGap, FSC, EOC, SSC) are intuitive and directly tied to fairness-style guarantees, but their statistical power depends strongly on the choice of groups and on having enough data per group. Geometric scans like WSC explore slices of \(\mathcal{X}\) without pre-specified semantic groups but suffer from the complexity of the feature space. Representation-based measures (Pearson, HSIC) provide complementary, model-driven checks for dependence between coverage and auxiliary signals, but low dependence does not prove full conditional coverage. A central challenge in catalyzing research progress in conformal prediction is the lack of reliable ways to assess conditional coverage empirically. 
Although many recent methods are designed to improve conditional coverage \citep{gibbs2025conformal, ding2023class, kaur2025conformal, plassier2024probabilistic}, existing guarantees are largely theoretical, and robust practical metrics remain elusive. 

\section{Evaluating Conditional Coverage}

We would like the conditional coverage 
$$p(x):=\P(Y \in C_\alpha(X) \mid X=x)$$ to be equal to $1-\alpha$ $\P_X$-almost surely. Introducing the binary random variable $Z = \1\{Y \in C_\alpha(X)\}$, we can rewrite
$$p(x) = \P(Z=1|X=x).$$
Estimating $\P(Z=1|X=x)$ is a binary classification problem, as we have access to a dataset of pairs $(X_i, Z_i)$.  Some metrics such as CovGap or WSC implicitly learn classifiers based on histograms or slabs. However, these methods are rarely used for classification due to their poor practical performance. Explicitly reformulating conditional miscoverage estimation as a classification problem allows us to leverage strong and practically proven classifiers. Once a classifier $h: \mathcal{X} \to [0, 1]$ is trained, we still need to use it to assess conditional miscoverage. The key idea is that under conditional coverage, given a proper score $\ell$, no classifier can achieve a lower risk than the constant predictor $1 - \alpha$. If we can learn a predictor that performs better, then conditional coverage does not hold. This leads to a metric with theoretical guarantees and a clear interpretation. In particular, our metric is a conservative estimate of $\mathbb{E}[d(1-\alpha, p(X))]$ for any $d: [0, 1] \times [0, 1] \to \mathbb{R}$ such that for all $p \in [0, 1]$, $d(p, \cdot)$ is convex and minimized at $p$.

\subsection{Excess risk of the target coverage (ERT)}
\label{subsec:ern}

For a given classifier $h$ and loss function $\ell$, the associated risk is defined as
\[\calR_\ell(h) := \bbE_{X, Z}[ \ell(h(X), Z)].\]
The Bayes predictor in this task is (see, e.g., \citealt{devroye2013probabilistic}):
\[
h^*(x) \in  \underset{q\in [0,1]}{\textrm{argmin}}\,\E[\ell(q, Z) \mid X=x].
\]
 If the loss $\ell$ is a proper loss (see, e.g, \citealt{gneiting2007strictly, brocker2009reliability}), then it is optimal to predict the true probability \mbox{$h^*(X)=\E[Z|X] =p(X) \quad \P_X$-almost surely}. If conditional coverage holds, we get $h^*(X) = \E[Z|X]=1-\alpha  \quad \P_X$-almost surely, so no classifier can achieve lower risk than the constant $1-\alpha$ prediction. 
Proper losses include the Brier score, $\ell(p,y)=(p-y)^2$, and the log-loss score, $\ell(p,y)= -  y \log p - (1-y) \log (1-p)$, where \(p \in [0,1]\) denotes the predicted probability of the event occurring and \(y \in \{0,1\}\) denotes the observed outcome.

This motivates the \emph{excess risk of the target coverage} ($\ell\textrm{-ERT}$). For a general proper loss $\ell$, we define
\begin{align*}
    \ell\textrm{-ERT} &:= \calR_\ell(1-\alpha) - \calR_\ell(p).
\end{align*}
Larger values of $\ell\textrm{-ERT}$ correspond to greater violations of conditional coverage.

\paragraph{Interpretation and examples.} ERT has a probabilistic interpretation. Indeed, 
% \begin{align*}
%     \ell\textrm{-ERT} &= \calR_\ell(1-\alpha) - \calR_\ell(p) \\
%     &= \bbE_{X, Z}[\ell(1-\alpha, Z) - \ell(p(X), Z)]\\
%      &=\bbE_{X}[\E_Z[\ell(1-\alpha, Z) - \ell(p(X), Z)|X]]\\
%     &= \E_X[d_\ell(1-\alpha, p(X))],
% \end{align*}
\begin{align*}
    \ell\textrm{-ERT} &=  \E_X[d_\ell(1-\alpha, p(X))],
\end{align*}
where $d_\ell(p,q):=\E_{y\sim q}[\ell(p,y)-\ell(q,y)]$ is the divergence associated with the proper score $\ell$ (see, e.g, \citealt{brocker2009reliability}). A justification of this equality is provided in \Cref{app:proofs}. We summarize the $\ell$-ERT scores for different proper scores in \Cref{tab:proper_scores_new}. 

\begin{table}[h!]
\centering
\caption{Examples of proper scoring rules and their associated ERT scores.}
\resizebox{\linewidth}{!}{
\begin{tabular}{lcc}
\toprule
\textbf{Name} & \textbf{Proper score $\ell(p,y)$} & \textbf{$\ell$-ERT formula} \\
\midrule
$L_1$-ERT & $\sgn(p-(1-\alpha))(1-\alpha-y)$ & $\mathbb{E}_X[|p(X)-(1-\alpha)|]$ \\
$L_2$-ERT & $(y-p)^2$ & $\mathbb{E}_X[(1-\alpha-p(X))^2]$ \\
KL-ERT & $-\log p_y$ & $\mathbb{E}_X[D_{\mathrm{KL}}(p(X)\|1-\alpha)]$ \\
\bottomrule
\end{tabular}%
}
\label{tab:proper_scores_new}
\end{table}

We will show in \Cref{sec:general_distances} that a general class of convex distances can be estimated via ERTs. This allows us to define a proper score for $L_1$-ERT, that directly targets the estimation of the quantity $\mathbb{E}_X[|\P(Y \in C(X)|X) - (1-\alpha)|]$. 

\paragraph{Estimation from finite samples.} Since $p(X)$ is unknown in practice, we define the functional
\begin{align}
\label{eq:ern:h}
\ell\textrm{-ERT}(h) := \mathcal{R}_\ell(1-\alpha) - \mathcal{R}_\ell(h),    
\end{align}
% \vspace{-1mm} % DH: we should be very careful with this, it might violate the formatting rules

This metric quantifies how much better a predictor $h$ performs relative to the constant baseline~$1-\alpha$. While we cannot guarantee that the learned predictor $h$ coincides with the Bayes-optimal predictor $h^*$, our procedure always provides a lower bound on the true $\ell\textrm{-ERT}$: for all measurable classifiers,
\begin{align*}
    \ell\textrm{-ERT}(h) \leq \ell\textrm{-ERT}.
\end{align*}
Therefore, it suffices to find an $h$ that performs better than the constant $1-\alpha$ to conclude that conditional coverage is not achieved, and use $\ell\textrm{-ERT}(h)$ to lower-bound $\E[d_\ell(1-\alpha, p(X))]$. This probabilistic perspective makes the metric highly interpretable when assessing coverage deviations.

To estimate $\ell\textrm{-ERT}(h)$, we evaluate the empirical risk
\begin{equation*}
\widehat{\ell\textrm{-ERT}}(h) := \frac{1}{m} \sum_{i=1}^{m} \big[\ell(1-\alpha, Z_i) - \ell(h(X_i), Z_i)\big].
\end{equation*}
 To avoid overfitting and misleading diagnostics, we cannot train $h$ on the values $X_i$ that it is evaluated on. In general, cross-validation can be used for this purpose, where multiple classifiers are trained on different subsets of the data, such that each data point can be evaluated using a classifier that was not trained on it. For random forest, we can use out-of-bag predictions. The resulting algorithm for evaluating conditional coverage using $k$-fold cross-validation is summarized in Algorithm~\ref{alg:ERT}.

\begin{algorithm}[h!]
\caption{Compute $\widehat{\ell\textrm{-ERT}}$.}
\label{alg:ERT}
\begin{algorithmic}[1]
\STATE {\bfseries Input:} Data $\{(X_i, Z_i)\}_{i=1}^m$, number of folds $k \ge 2$, proper score $\ell$, level $\alpha$, classification method.
\STATE \textbf{Partition the data:} Randomly divide $\{1, \dots, m\}$ into $k$ approx.\ equal-sized folds 
$\{\mathcal{I}_1, \dots, \mathcal{I}_k\}$.

\FOR{$j = 1$ to $k$}
    \STATE \textbf{Define folds:} $\ds \mathcal{I}_{\text{val}}^{(j)} = \mathcal{I}_j, 
    \quad 
    \mathcal{I}_{\text{tr}}^{(j)} = \{1, \dots, m\} \setminus \mathcal{I}_j.$
    
    \STATE \textbf{Train classifier:} Fit a classifier $h^{(j)}$ on $\{(X_i, Z_i) \mid i \in \mathcal{I}_{\text{tr}}^{(j)}\}$ using the specified method.
    \STATE \textbf{Evaluate on validation fold:} For each $i \in \mathcal{I}_{\text{val}}^{(j)}$, compute

    $
    \ds \hspace{-5mm}
    \widehat{\ell\textrm{-ERT}}^{(j)} = \cfrac{1}{|\mathcal{I}_{\text{val}}^{(j)}|}\sum_{i\in \mathcal{I}_{\text{val}}^{(j)}}\ell(1-\alpha, Z_i) - \ell(h^{(j)}(X_i), Z_i).
    $
    \vspace{-5mm}
\ENDFOR

\STATE \textbf{Aggregate across folds:} $\ds \widehat{\ell\textrm{-ERT}} = \sum_{j=1}^k \cfrac{|\mathcal{I}_j|}{m}\widehat{\ell\textrm{-ERT}}^{(j)}$.
\end{algorithmic}
\end{algorithm}

\begin{figure}[h!]
    \center
\includegraphics[width=1.\columnwidth]{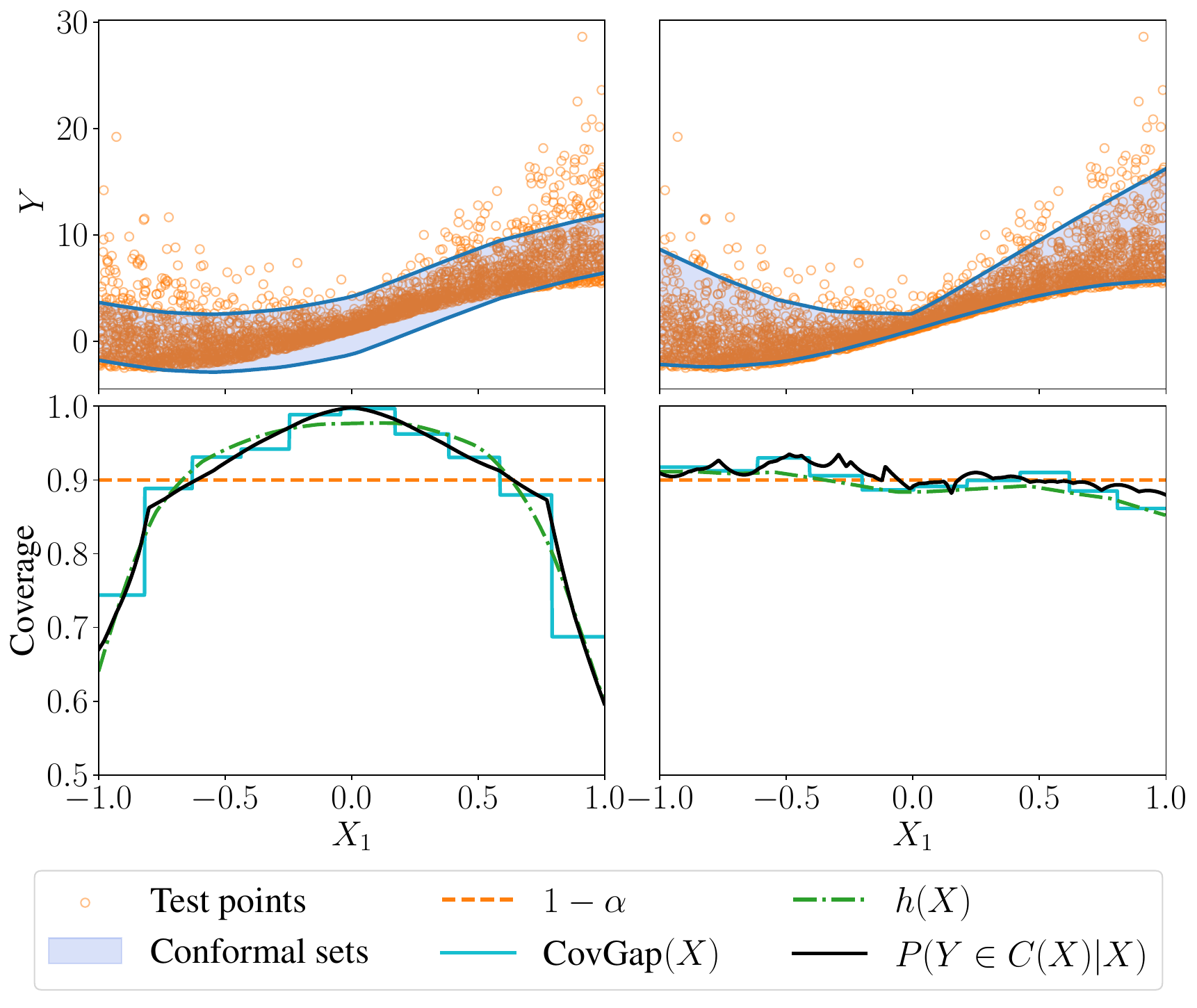}
    \caption{Illustration of conditional coverage estimation.
    The top panel shows data generated from \mbox{$Y \sim \mathcal{N}\large(f(X), \sigma(X)\large)$}, $X\sim\mathcal{U}\large([-1,1]\large)$ with $f(x)=3\sin(x)+e^x$ and $\sigma(x)=1/2+|x|+x^2$, and their predictive sets. The bottom panel shows the conditional coverage estimation $h$ used to estimate the $L_2\textrm{-ERT}(h)$, conditional coverage estimation induced by a partition-wise estimator, true conditional coverage, and desired $1-\alpha$ conditional coverage.
    \textbf{Left:} Conformal sets from the score $S(X,Y) = |Y-\hat f(X)|$. 
    \textbf{Right:} Conformal sets by fitting quantiles $\alpha/2$ and $1-\alpha/2$ of $\P_{Y|X}$ following the procedure in \cite{romano2019conformalized}.}
    \label{figure:conditional:toy}
\end{figure}

Figure~\ref{figure:conditional:toy} illustrates the usefulness of our metric by showing prediction sets produced under different conformal strategies together with their estimated conditional coverage. The function $h$ is estimated with a neural network that has two hidden layers of width 64. In the first strategy, which applies a non conditional conformal method, the prediction sets fail to reflect local variations in conditional miscoverage. This leads to large estimated ERT values, with $L_1\textrm{-ERT}(h)\approx 0.0757$ and \mbox{$L_2\textrm{-ERT}(h)\approx 0.0073$}. In the second strategy, where prediction sets are closer to satisfying conditional coverage, the estimator $h$ estimates coverage levels near $1-\alpha$. The resulting ERT values are closer to zero, with $L_1\textrm{-ERT}(h)\approx 0.0148$ and $L_2\textrm{-ERT}(h)\approx -0.00002$, which signals improved conditional behavior.

We also compare our functional estimator $h$ to the partition-based nonparametric estimator $\mathrm{CovGap}(X)$. In one dimension, the feature space is simple to cluster and the partition-based approach can approximate $p$ well. This advantage does not persist as the feature dimension grows, a point that will be demonstrated in Section~\ref{sec:experiments}.

\subsection{Estimating general distances} \label{sec:general_distances}

Previously, \Cref{tab:proper_scores_new} illustrated that specific choices of the proper loss $\ell$ can recover common distance functions. However, we can go much beyond that and estimate any convex distance function \mbox{$f(q) = d(1-\alpha, q)$} using an ERT, as long as the proper score $\ell$ is allowed to depend on $f$ and therefore the coverage $1-\alpha$ itself. The following proposition formalizes this statement.

\begin{proposition}[Representing convex losses as ERTs] \label{prop:any_convex}
    Let $f: [0, 1] \to \mathbb{R}_{\geq 0}$ be convex with \mbox{$f(1-\alpha) = 0$}. Let $f'$ be a subderivative of $f$ satisfying $f'(1-\alpha) = 0$. Then, the function with $(p \in [0, 1], y \in \{0, 1\})$
    \begin{equation*}
        \ell(p, y) := \ell_{f, f'}(p, y) := -f(p) - (y - p)f'(p)
    \end{equation*}
    is a proper score satisfying
    \begin{align*}
        \ell\textrm{-}\mathrm{ERT} = \E_{X}[f(p(X))]~.
    \end{align*}
\end{proposition}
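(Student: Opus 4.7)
The plan is to handle the two claims separately: first that $\ell_{f,f'}$ is a proper score, and then that its ERT equals $\E_X[f(p(X))]$. Both reduce to short calculations once we exploit the two defining properties $f(1-\alpha)=0$ and $f'(1-\alpha)=0$ together with the subgradient inequality for the convex function $f$.

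\emph{Step 1: propriety.} For $Y \sim \mathrm{Bern}(q)$ with $q \in [0,1]$, I would compute the pointwise expected score
\begin{equation*}
    \E_{Y \sim \mathrm{Bern}(q)}[\ell_{f,f'}(p, Y)] = -f(p) - (q - p)\, f'(p).
\end{equation*}
At $p=q$, this evaluates to $-f(q)$, so the excess
\begin{equation*}
    \E_{Y \sim \mathrm{Bern}(q)}[\ell_{f,f'}(p, Y)] - \E_{Y \sim \mathrm{Bern}(q)}[\ell_{f,f'}(q, Y)] = f(q) - f(p) - (q - p)\, f'(p)
\end{equation*}
is nonnegative by the subgradient inequality $f(q) \ge f(p) + f'(p)(q - p)$, which holds because $f'$ is a subderivative of the convex $f$. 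This gives propriety: the constant predictor $p=q$ minimizes the expected score.

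\emph{Step 2: ERT formula.} The key calculation is the conditional expectation of the score given $X$, using $\E[Z \mid X] = p(X)$. For the baseline predictor $1-\alpha$, plugging $f(1-\alpha)=0$ and $f'(1-\alpha)=0$ into the definition makes both terms of $\ell_{f,f'}(1-\alpha, z)$ vanish identically in $z$, so $\calR_\ell(1-\alpha) = 0$. For the Bayes predictor $p(X)$, the subgradient term has expectation zero conditionally on $X$:
\begin{equation*}
    \E\bigl[\ell_{f,f'}(p(X), Z) \,\big|\, X\bigr] = -f(p(X)) - (p(X) - p(X))\, f'(p(X)) = -f(p(X)),
\end{equation*}
so $\calR_\ell(p) = -\E_X[f(p(X))]$. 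Subtracting yields $\ell\textrm{-ERT} = \calR_\ell(1-\alpha) - \calR_\ell(p) = \E_X[f(p(X))]$.

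\emph{Potential obstacles.} There is no real obstacle; the proof is essentially bookkeeping with the subgradient inequality. The only subtle point worth remarking is the existence of a subderivative $f'$ satisfying $f'(1-\alpha)=0$: since $f \ge 0$ with $f(1-\alpha)=0$, the point $1-\alpha$ is a global minimum of the convex $f$, so $0$ lies in the subdifferential at $1-\alpha$, ensuring that a valid choice of $f'$ exists. The construction $\ell_{f,f'}(p,y) = -f(p) - (y-p)f'(p)$ is the classical Savage representation of proper scores, which gives the calculation a conceptual backbone beyond a direct verification.
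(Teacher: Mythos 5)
Your proof is correct and follows essentially the same route as the paper: propriety via the subgradient inequality applied to the expected score $-f(p)-(q-p)f'(p)$, vanishing of $\ell(1-\alpha,\cdot)$ from $f(1-\alpha)=f'(1-\alpha)=0$, and then the conditional-expectation computation giving $\E_X[f(p(X))]$. Your added remark that $0\in\partial f(1-\alpha)$ (so the required choice of $f'$ always exists) is a nice sanity check, but the argument itself matches the paper's.
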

A related formulation in \Cref{app:proofs} shows that ERT can estimate $d$ if it is a Bregman divergence of convex functions.

\subsection{Separating over-coverage and under-coverage}
\label{subsec:over:under:coverage}
\Cref{prop:any_convex} implies that we can estimate asymmetric distance measures to gain more insights on the nature of miscoverage. In particular, one can decompose the convex function $f$ from above as $f = f_+ + f_-$ with an over-coverage part $f_+(p) := f(\max\{p, 1-\alpha\})$ that only penalizes the case $p > 1-\alpha$ and an under-coverage part $f_-(p) = f(\min\{p, 1-\alpha\})$ that only penalizes $p < 1-\alpha$. Correspondingly, one can decompose the proper loss $\ell$ and the ERT as
\begin{align*}
    \ell(p, y) & = \ell_+(p, y) + \ell_-(p, y) - \ell(1-\alpha, y) \\
    \ell\text{-ERT} & = \ell_+\text{-ERT} + \ell_-\text{-ERT} \\
    \ell_+(p, y) & := \ell(\max\{p, 1-\alpha\}, y) \\
    \ell_-(p, y) & := \ell(\min\{p, 1-\alpha\}, y)~.
\end{align*}
Together, $\ell_+\text{-ERT}$ and $\ell_-\text{-ERT}$ provide a decomposition of conditional coverage error into two complementary components. The first identifies unnecessary conservatism, while the second highlights locations where $C_\alpha$is too aggressive and exhibits under-coverage. This split view delivers more informative diagnostics and supports targeted improvements in the design of conformal prediction methods. Further experiments to illustrate those diagnostics are presented in \Cref{app:additional:experiments}.

\subsection{Extensions}
    
\paragraph{A proxy for conditional coverage.} 
The learned predictor $h$ can also be used as a proxy for conditional coverage. For a given test point $X_{\textrm{test}}$, its conditional coverage can be estimated as
\[
h(X_{\textrm{test}}) \approx \P\big(Y_{\textrm{test}} \in C_\alpha(X_{\textrm{test}}) \mid X_{\textrm{test}}\big).
\]
This approximation provides a corrective proxy for conditional coverage: rather than modifying the predictive set $C_\alpha(X_{\textrm{test}})$, we adjust its predicted coverage level from $1-\alpha$ to $h(X_{\textrm{test}})$.

\paragraph{Evaluating conditional coverage rules.} 
    We further extend our metric to settings where the target conditional coverage is not fixed at $1-\alpha$, but instead varies according to a specified decision rule. 
    This extension, detailed in Appendix~\ref{app:extension:conditional:coverage}, enables testing whether a given strategy satisfies conditional coverage with respect to adaptive or context-dependent coverage levels. \\

\section{Experiments}
\label{sec:experiments}
Our code is accessible and reproducible from the GitHub repositories\footnote{\texttt{covmetrics}: \url{https://github.com/ElSacho/covmetrics} and experiments: \url{https://github.com/ElSacho/Conditional_Coverage_Estimation}.}. Additional experiments, including a benchmark of the conditional coverage properties of different conformal prediction strategies is provided in \Cref{app:benchmarking:strategies}.

\subsection{Comparing different classifiers}

The quality of the ERT estimation hinges on the choice of a good classifier, which depends on the data type of $X$. We will restrict our experiments to the case where $X$ is a fixed-dimensional vector of numerical and/or categorical features, also known as \emph{tabular data}. For other modalities like images or text, tabular classifiers could be used on top of embeddings of $X$ to keep the training fast. 
As we want our metric to be reasonably fast to compute, we are particularly interested in finding fast classifiers. For this reason, we do not tune the hyperparameters of the classifiers. Based on recent benchmarks \citep{erickson2025tabarena, holzmuller2024better}, we choose a subselection of classifiers that are promising in terms of their speed-accuracy trade-off. We provide more details on those classifiers in Appendix~\ref{app:strong:classifiers}. We note that our results show performances of specific configurations, but other trade-offs can also be achieved. 

We begin by examining how various classifiers perform when estimating the conditional coverage quantity. This is achieved by comparing how well different tabular classifiers estimate the conditional miscoverage. To pursue this, we select the eight largest regression datasets in TabArena \citep{erickson2025tabarena}. Each dataset is divided into three parts; a training set (with $40\%$ of the data) used to learn a predictor $f$ that minimizes the empirical mean squared error. A calibration set (with $10\%$ of the data) used with the nonconformity score \mbox{$S(X,Y)=|Y-f(X)|$} to construct the set rule $C_{\alpha}(\cdot)$ with $1-\alpha=0.9$ such that when the residual distribution is heteroskedastic, these sets are not expected to be conditional. A test set with $50\%$ of the data, subsampled to different sizes, used to evaluate the $L_1$-ERT, $L_2$-ERT and KL-ERT metrics, performing $5$-fold cross-validation.

We compare the estimated metric values as a function of the number of test samples and average all results over ten runs. Since our estimator gives a lower bound on the true $\ell$-ERT value, a larger estimate indicates a stronger classifier. In \Cref{table:percentage:improvement} we report the average percentage improvement over the best strategy, averaged across all test sample sizes and all datasets, as well as the average time required to estimate our metrics per 1,000 samples. Because averaging can hide important effects, we also report results for each dataset as a function of the number of test samples in Figures~\ref{figure:l1:ERT:real:1} and~\ref{figure:l1:ERT:real:2}.

\begin{table}[h!]
\centering
\caption{\textbf{ERT recovered by different methods}, of the maximum recovered ERT among all methods and number of samples, averaged over all number of test samples and datasets. Experiments are repeated 10 times, and the index number is the standard deviation across those 10 experiments. We also report the computational time of the WSC metric.}
\resizebox{\linewidth}{!}{
\begin{tabular}{lccc ccc}
\toprule
Classifier 
& \multicolumn{3}{c}{Avg. \% of max ERT} 
& \makecell{Avg. time per\\1K samples [s]}
& Device \\
\cmidrule(lr){2-4}
& $L_1$-ERT & $L_2$-ERT & KL-ERT 
&  \\
\midrule
TabICLv1.1 & $72.7_{1.6}$ & $54.0_{1.4}$ & $59.2_{1.6}$ & $16.0_{0.0}$ & GPU\\
RealTabPFN-2.5 & $72.1_{1.4}$ & $49.7_{1.1}$ & $55.2_{1.4}$ & $9.2_{0.0}$  & GPU\\
CatBoost & $70.3_{2.1}$ & $49.0_{1.3}$ & $54.1_{1.7}$ & $20.2_{0.5}$ & CPU\\
LightGBM & $68.9_{1.8}$ & $46.4_{1.0}$ & $50.9_{1.4}$ & $2.4_{0.0}$  & CPU\\
RandomForest & $67.3_{2.0}$ & $36.9_{2.4}$ & $35.1_{2.2}$ & $4.4_{0.0}$ & CPU\\
ExtraTrees & $66.5_{1.9}$ & $30.3_{2.4}$ & $27.7_{1.9}$ & $3.4_{0.0}$ & CPU\\
PartitionWise & $33.7_{1.9}$ & $10.5_{0.7}$ & $11.0_{0.8}$ & $0.2_{0.0}$& CPU\\
WSC & - & - & - & $11.6_{0.1}$ & CPU \\ 
\bottomrule
\end{tabular}
}
\label{table:percentage:improvement}
\end{table}
\paragraph{Results.} As shown in \Cref{table:percentage:improvement}, the tabular foundation models TabICLv1.1 \citep{qu2025tabicl} and RealTabPFN-2.5 \citep{grinsztajn2025tabpfn} show excellent performance across different metrics. However, they can be very slow without a GPU, and they are generally only usable up to a certain size of datasets (around 100K samples). Gradient-boosted decision trees like CatBoost \citep{prokhorenkova2018catboost} and LightGBM \citep{ke2017lightgbm} are closely behind, but they do facilitate fast training on CPUs and are scalable to large datasets. In particular, LightGBM with the relatively cheap configuration from \cite{holzmuller2024better} excels through its training speed while still recovering large ERT values. Therefore, we suggest LightGBM as the default classifier to use with ERT. Random Forest \citep{breiman2001random} and ExtraTrees \citep{geurts2006extremely} are also fast but exhibit worse results, particularly for the KL-ERT, which heavily penalizes overconfidence, and the $L_2$-ERT. PartitionWise, the strategy used for CovGap, is fastest but performs much worse than the other classifiers, and often detects almost no miscoverage (see additional results in Appendix~\ref{app:benchmark}).
Overall, our results suggest that results of tabular classification benchmarks transfer approximately to ERT estimation.

\paragraph{Differences between metrics.} \Cref{table:percentage:improvement} also shows that the $L_1$-ERT is considerably easier to estimate than the $L_2$-ERT and KL-ERT. Indeed, for the $L_1$-ERT, the classifiers only need to predict on the right side of $1-\alpha$ to be optimal, whereas for the others they need to predict the exact probability. Hence, we recommend using the $L_1$-ERT as the default metric.

\paragraph{Extension to other modalities.} 
While our primary analysis focuses on tabular data, recasting conditional coverage estimation as a binary classification problem makes the metric inherently modality-agnostic. Our tabular experiments demonstrate that a classifier's ranking on standard general benchmarks reliably transfers to its ERT estimation performance. Consequently, for non-tabular domains, we recommend deploying a state-of-the-art classifier tailored to that specific modality. Because exhaustive benchmarking across all data types is beyond the scope of this work, practitioners can optimize their classifier choice by referring to established, domain-specific benchmarks, such as ImageNet for vision tasks \citep{russakovsky2015imagenet}, \citet{dwivedi2023benchmarking} for graphs, and \citet{wang2024mmlu} for text.

\subsection{Comparison with existing metrics}
\label{subsec:experiment:synthetic}

We illustrate that existing methods can fail to accurately assess conditional coverage in scenarios where our approach succeeds. 
To this end, we generate a synthetic dataset following (where $X^1$ is the first component of the vector $X$ and $\sigma(x)=0.5 + |x| + x^2$).
\[
Y \sim \mathcal{N}\big(0, \sigma(X^1)\big), \quad \text{with} \quad X \sim \mathcal{U}([-1, 1]^{8}),
\]
Prediction sets are constructed using two different strategies:
\begin{itemize}[leftmargin=*, labelsep=0.4em, itemsep=0.01em]
    \item \textbf{Standard CP:} Using the nonconformity score \( S(X, Y) = |Y| \) within the standard conformal prediction framework using $3\text{,}000$ i.i.d. samples.
    \item \textbf{Oracle sets:} Using the ground-truth oracle that provides the true conditional quantiles \( \alpha/2 \) and \( 1 - \alpha/2 \) of the underlying distribution.
\end{itemize}
The first strategy produces marginally valid but conditionally invalid prediction sets, while the second produces conditional sets by construction. 

The first experiment evaluates how many test points are needed to obtain reliable estimates for commonly used metrics (CovGap, WSC) compared to our proposed metrics. We measure each metric as a function of the number of test points on a log scale, computing $L_1$-ERT and $L_2$-ERT respectively estimating $\bbE_X[|p(X)-(1-\alpha)|]$ and $\bbE_X[(1-\alpha-p(X))^2]$ using $5$-fold cross-validation, and show the results in Figure~\ref{figure:metrics:as:number:points}. \\
\paragraph{Results.} The results are striking: group-based metrics are extremely unaligned with their theoretical values and require large sample sizes to converge. Even with $5{,}000$ points, they provide nearly identical diagnostics across these two very different scenarios, and WSC exhibits similar instability. By contrast, our metrics adapt rapidly. In particular, $L_1$-ERT stabilizes very quickly, providing reliable estimates of conditional coverage deviation in the naive scenario. $L_2$-ERT needs more samples to converge to its true value but already diagnoses conditional coverage failure with few samples. This is unsurprising because, as explained earlier, the $L_1$ version only requires that the sign of \mbox{$h(X)-(1-\alpha)$} matches the sign of \mbox{$p(X)-(1-\alpha)$}, whereas the $L_2$ version instead depends on the closeness of $h(X)$ to the true conditional probability $p(X)$, which typically requires more data.

In the scenario with perfect conditional coverage, all of our proposed metrics converge rapidly to values indicating no failure, while WSC continues to struggle even with $50{,}000$ samples. These results demonstrate that our methods not only provide more accurate diagnostics but also require far fewer samples to detect conditional coverage deviations reliably.

Figure~\ref{figure:exp:synthetic} visualizes the data distribution and the induced prediction sets for both strategies with a test dataset of size $1,500$. Precise metric values are reported in \Cref{table:synthetic:results}. Since only the first feature is informative, we plot $(X^1, Y)$ while ignoring the remaining features, and we also show $(X^1, h(X))$ to illustrate our estimator’s learned conditional coverage, as well as $(X^1, \mathrm{CovGap}(X))$ to illustrate the difference between a partition-wise estimator and our estimator. As expected, our approach clearly identifies regions of under- and over-coverage in the first scenario, and accurately recovers a near-constant predictor equal to \( 1 - \alpha \) in the oracle setting.

\begin{figure}[h!]
    \center
\includegraphics[width=1.\columnwidth]{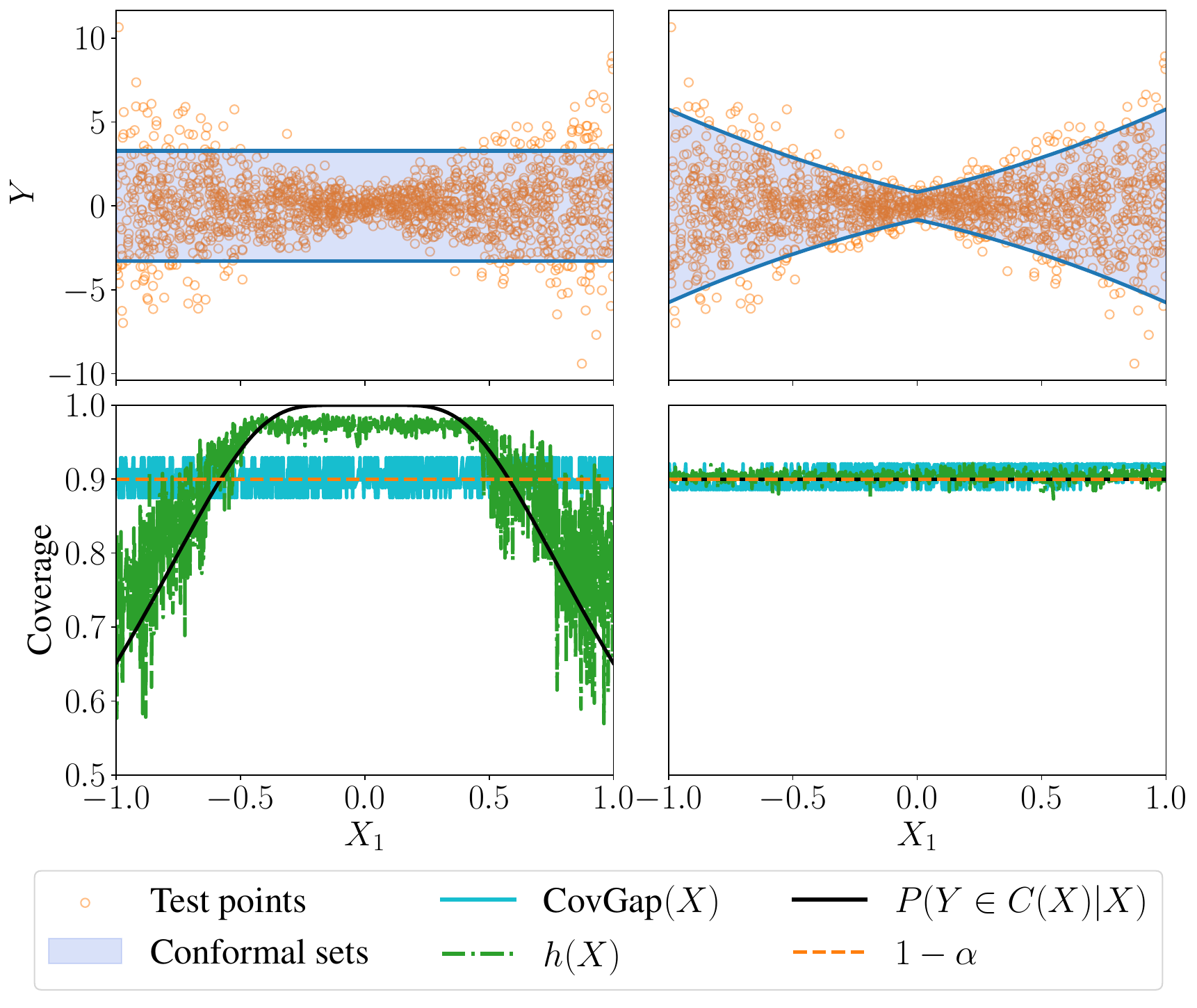}
    \caption{Illustration of conditional coverage estimation.
    The top panel shows data generated as $Y \sim \mathcal{N}\large(0, \sigma(X^1)\large)$ with $\sigma(x)=0.5 + |x| + x^2$, where $X\sim\mathcal{U}([-1,1]^8)$, and $X^1$ is the first component of such a vector $X$. The bottom panel shows the conditional coverage estimation $h$, conditional coverage estimation induced by a partition-wise estimator, true conditional coverage, and desired $1-\alpha$ conditional coverage.
    \textbf{Left:} Conformal sets from the score $S(X,Y) = |Y|$ using $3,000$ samples.
    \textbf{Right:} Oracle sets that achieve conditional coverage.}
    \label{figure:exp:synthetic}
\end{figure}
We evaluate the metrics on a test set of size $1,500$, as reported in \Cref{table:synthetic:results}. In the first experiment, where the predictive sets are not conditional, certain metrics fail to detect deviations from conditional coverage. This is the case for SSC, HSIC, and Pearson correlation  as they rely solely on prediction set sizes, which are uniform across samples. Similarly, metrics based on feature-space clustering (FSC, CovGap) also fail to detect the coverage violation, due to the difficulty of clustering the feature space. The only baselines that identify a conditional coverage failure in this case are the WSC and EOC metrics.

In contrast, under the second (oracle) strategy, all prediction sets satisfy conditional coverage by construction. The WSC metric still reports a conditional coverage failure, with values comparable to the first example. This occurs because, in high-dimensional feature spaces, WSC can overemphasize local fluctuations and misidentify regions with apparent over- or under-coverage. Similarly, EOC detects a false conditional coverage violation by grouping extreme output values together.

Our proposed $L_1$-ERT and $L_2$-ERT metrics, however, correctly distinguish between the two settings: they detect the conditional coverage failure in the first example and report no such failure in the oracle case. Specifically, the $L_1$-ERT of $0.091$ correctly estimates a substantial $9.1\%$ average absolute deviation from the target $90\%$ coverage (closely tracking the true value of $0.098$) and drops close to zero in the conditional case. Furthermore, while the $L_2$-ERT estimate of $0.009$ appears small, this correctly reflects the expected squared deviation, $\E_X[(\P(Y \in C(X)|X) - (1-\alpha))^2]$. On a root-mean-square scale, this yields $\sqrt{0.009} \approx 0.094$, consistent with the $L_1$ estimate. Crucially, despite this inherent difference in scale, the $0.009$ measured in the non-conditional case remains starkly distinguishable from the 0.000 achieved under oracle conditional coverage.

\begin{table}[h!]
\centering
\small
\caption{Conditional metrics for both synthetic samples and whether they accurately diagnose conditional coverage. 
\checkmark: Accurate diagnostic. \ding{55}: Failure. \\${}^*$ Scale is naturally smaller since $L_2$ measures squared deviations.}
\resizebox{\linewidth}{!}{%
\begin{tabular}{lcc}
\toprule
 & Not conditional & Conditional \\
\midrule
\multicolumn{3}{c}{Successfully separates both cases \checkmark} \\
\textbf{$L_1$-ERT (Ours)} & $0.091_{0.007}$ \checkmark& $-0.005_{0.009}$ \checkmark\\
\textbf{$L_2$-ERT (Ours)} & $0.009_{0.001}$ \checkmark${}^*$ & $-0.000_{0.000}$ \checkmark\\
\midrule
\multicolumn{3}{c}{Non-conditional value too close to conditional \ding{55}} \\
FSC & $0.868_{0.014}$ \ding{55} & $0.881_{0.012}$ \checkmark\\
CovGap & $0.016_{0.004}$ \ding{55} & $0.014_{0.004}$ \checkmark\\
WCovGap & $0.016_{0.004}$ \ding{55} & $0.014_{0.004}$ \checkmark\\
\midrule
\multicolumn{3}{c}{Conditional value suggests miscoverage \ding{55}} \\
WSC & $0.740_{0.019}$ \checkmark & $0.790_{0.014}$ \ding{55} \\
EOC & $0.341_{0.009}$ \checkmark& $0.186_{0.018}$ \ding{55} \\
\midrule
\multicolumn{3}{c}{Cannot detect miscoverage for constant set size \ding{55}} \\
SSC & $0.004_{0.003}$ \ding{55} & $0.013_{0.003}$ \checkmark\\
HSIC & $0.000_{0.000}$ \ding{55} & $0.000_{0.000}$ \checkmark\\
Pearson & $0.000_{0.000}$ \ding{55} & $0.019_{0.016}$ \checkmark\\
\bottomrule
\end{tabular}
}
\label{table:synthetic:results}
\end{table}

\subsection{Conformal prediction benchmark}

We compare conditional coverage metrics across several widely adopted conformal prediction strategies. In particular, we build upon the benchmarking framework of \citet{dheur2025unified} to evaluate multivariate regression CP methods across 12 datasets (six of which feature multi-dimensional responses). 

For each strategy, we partition the data as follows: $60\%$ to train the underlying predictive model, $10\%$ to conformalize the outputs via the standard split CP procedure, and the remaining $30\%$ to evaluate the conditional coverage deviation. This evaluation utilizes various $\ell$-ERT metrics computed via the TabICLv2 classifier (an improved version on the v1.1 used in the previous benchmark) \citep{qu2026tabiclv2}.\footnote{While some CP strategies typically benefit from larger training sets, which could introduce bias if models are under-trained, we allocate a substantial portion of the data ($30\%$) to the test set to ensure a robust and high-fidelity evaluation of the conditional coverage.} Detailed descriptions of all baseline strategies and dataset specifications are provided in Appendices~\ref{app:benchmarking:strategies} and \ref{app:dataset:information}, respectively.

All experiments are averaged over ten independent runs. To ensure comparability across diverse target spaces, the prediction region volumes are scaled by the power $1/d$ (where $d$ is the output dimension) and normalized relative to the lowest estimated volume. \Cref{figure:benchmark:combined} visualizes these normalized volumes alongside the $L_1$-ERT estimation of the conditional coverage deviation, $\mathbb{E}[|\mathbb{P}(Y \in C(X) \mid X) - (1-\alpha)|]$. We provide a detailed breakdown of the results for univariate ($Y \in \mathbb{R}$) and multivariate ($Y \in \mathbb{R}^k$, for $k \geq 2$) regression tasks in Appendix~\ref{app:additional:experiments}. Overall, our results confirm the expected behavior: methods explicitly designed to target conditional coverage consistently exhibit the lowest conditional deviation.

\begin{figure}[h!]
    \centering
    \includegraphics[width=1.\columnwidth]{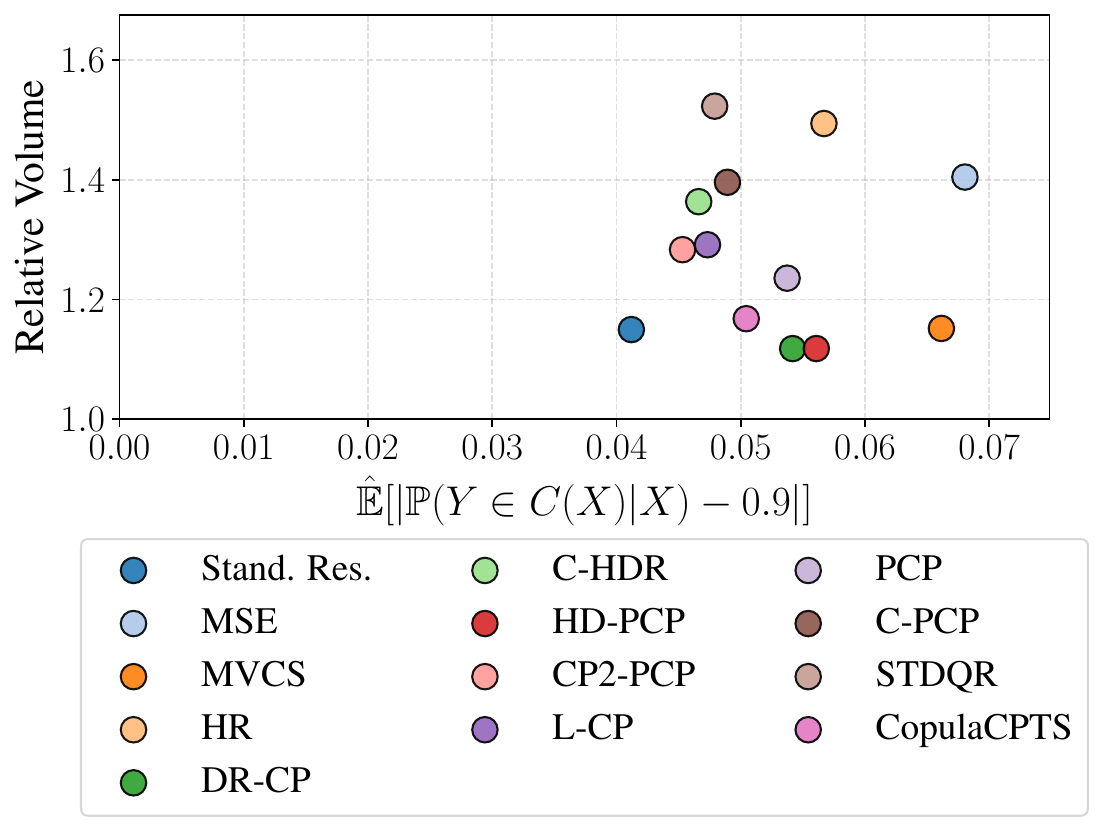}
    \caption{Comparison of conditional coverage deviation and normalized prediction set volumes across various conformal prediction strategies ($\alpha=0.1$). Volumes are scaled by the power $1/d$ and normalized by the minimum observed volume. The $x$-axis displays the $L_1$-ERT metric, which estimates the conditional coverage deviation $\mathbb{E}[|\mathbb{P}(Y \in C(X) \mid X) - (1-\alpha)|]$.}
    \label{figure:benchmark:combined}
\end{figure}

\section{Conclusions}

Reliable estimation of conditional coverage is a key challenge for  catalyzing further research progress in conformal prediction. We have addressed this challenge by moving from  partition-based estimators to a functional-based framework. This shift provides a new perspective on how conditional coverage can be understood, measured, and improved. By framing the problem as one of risk minimization, we introduce a family of interpretable and reliable metrics that leverage the full expressive power of modern predictive models to detect and quantify conditional coverage violations.  

Our framework unifies and extends existing diagnostics, transforming what was previously a collection of local, nonparametric estimators into a coherent, model-based approach. This transition not only provides more accurate and stable estimates but also offers a deeper understanding of what conditional coverage represents in practice. While the reliability of our metrics naturally depends on the quality of the learned classifier, this dependency is also a strength. 

To encourage reproducibility and practical adoption, we release an open-source package implementing all proposed metrics alongside existing ones. Empirical results confirm the benefits of our approach and reveal that improving conditional coverage often leads to larger prediction sets. Understanding and controlling this balance is a promising direction for future research. 

\section*{Acknowledgements}

Authors acknowledge funding from the European Union (ERC-2022-SYG-OCEAN-101071601). Views and opinions expressed are however those of the author(s) only and do not necessarily reflect those of the European Union or the European Research Council Executive Agency. Neither the European Union nor the granting authority can be held responsible for them. This publication is part of the Chair "Markets and Learning", supported by Air Liquide, BNP PARIBAS ASSET MANAGEMENT Europe, EDF, Orange and SNCF, sponsors of the Inria Foundation. This work has also received support from the French government, managed by the National Research Agency, under the France 2030 program with the reference "PR[AI]RIE-PSAI" (ANR-23-IACL-0008). \\

Finally, the authors would like to thank Eugène Berta for fruitful discussions regarding this work.

\section*{Impact Statement}
This paper presents work whose goal is to advance the field
of Machine Learning. There are many potential societal
consequences of our work, none of which we feel must be
specifically highlighted here.

% \clearpage
% \newpage

% In the unusual situation where you want a paper to appear in the
% references without citing it in the main text, use \nocite
% \nocite{langley00}

\bibliography{bib}
\bibliographystyle{icml2026}

%%%%%%%%%%%%%%%%%%%%%%%%%%%%%%%%%%%%%%%%%%%%%%%%%%%%%%%%%%%%%%%%%%%%%%%%%%%%%%%
%%%%%%%%%%%%%%%%%%%%%%%%%%%%%%%%%%%%%%%%%%%%%%%%%%%%%%%%%%%%%%%%%%%%%%%%%%%%%%%
% APPENDIX
%%%%%%%%%%%%%%%%%%%%%%%%%%%%%%%%%%%%%%%%%%%%%%%%%%%%%%%%%%%%%%%%%%%%%%%%%%%%%%%
%%%%%%%%%%%%%%%%%%%%%%%%%%%%%%%%%%%%%%%%%%%%%%%%%%%%%%%%%%%%%%%%%%%%%%%%%%%%%%%
\newpage
\onecolumn
% \appendix
\begin{appendices}
\listofappendices

\counterwithin{figure}{section}
\counterwithin{table}{section}

\crefalias{section}{appendix}
\crefalias{subsection}{appendix}

% \newpage
% \appendix
% \onecolumn

% \section*{Appendix}
% \renewcommand{\thesubsection}{\Alph{subsection}}
% \setcounter{subsection}{0}
% \label{appendix}

\section{Additional metrics}
\label{app:additional:metrics}

\paragraph{Group-based diagnostics.}
We start by reviewing additional group-based diagnostics.

\begin{itemize}
  \item \textbf{Equalized / group-wise coverage.} A fairness-style requirement is that every group attains the nominal coverage:
  \[
  \P_{Y|g(X)=\mathbf{g}}(Y\in C_\alpha(X)\mid g(X)=\textbf{g})=1-\alpha\qquad\forall\ \textbf{g}\in\mathcal{G}.
  \]
  This notion appears in the conformal fairness literature (e.g,\ \citealt{romano2019malice, ding2025conformal}) and is evaluated in practice by returning the values \(C_\textbf{g}\) for all $\textbf{g}$.

  \item \textbf{Feature-stratified coverage (FSC).} To focus on the worst-off group, FSC reports the minimal empirical coverage across groups:
  \[
  \text{FSC} \;=\; \min_{\mathbf{g}\in\mathcal{G}} C_\textbf{g}.
  \]
  FSC highlights subgroups where coverage is lowest and has been used in several works (e.g,\ \citealt{angelopoulos2021gentle, ding2023class, jung2022batch}). This metric is often viewed as a fairness measure, as it focuses on the group that exhibits the poorest coverage.
\end{itemize}

All of the above group-based diagnostics are sensitive to how the groups \(\mathcal{G}\) are chosen. Most of the time, they are created by partitioning the feature space, either with clustering methods, or by using categorical features. Much of the recent work focuses on finding or learning useful partitions that reveal conditional coverage violations, by applying the induced CovGap or FSC metric. In the following, unless otherwise specified, the groups used to evaluate FSC and CovGap are obtained by clustering the feature space. We will use the following notation to refer to alternative grouping strategies.
\begin{itemize}
  \item \textbf{Equal opportunity of coverage (EOC).} To account for differences in outcomes, EOC requires that coverage rates across protected groups are equal conditional on the true label; in regression this means that for each outcome value \(y\) (or a discretization of \(y\)), the coverage within each protected subgroup should match. This idea was introduced by \cite{wang2023equal}. However, defining groups based on the outcome can lead to misleading metrics. For instance, consider an interval predictor $[q_{\alpha/2}, q_{1-\alpha/2}]$ for $Y \sim \mathcal{N}(0,1)$, where $q_{\alpha/2}$ and $q_{1-\alpha/2}$ are the $\alpha/2$ and $1-\alpha/2$ quantiles of the normal distribution respectively. If all extreme values of $Y$ are grouped together, the resulting group may show a coverage of zero, even though the prediction interval is conditionally valid.

  \item \textbf{Size-stratified coverage (SSC).} Instead of grouping by features, SSC groups examples by the size (e.g., volume or cardinality) of their prediction sets. It is model-agnostic and useful when \(X\) is high-dimensional because it avoids requiring semantically meaningful groups. However, SSC can fail to detect conditional-coverage problems in some settings. Consider, for example regression with the nonconformity score \(S(X,Y):=|Y-f(X)|\)~(cf. \citealt{angelopoulos2020uncertainty}).  This score cannot capture heteroskedasticity \citep{vovk2012conditional}, outputting prediction sets with the same sizes independently of the covariate.  Thus, in this case, SSC will not reveal coverage failures. Furthermore, it requires access to the prediction set sizes which can be computationally costly for some strategies. 
\end{itemize}

\paragraph{Representation-based diagnostics (dependence on auxiliary variables).}
An alternative to grouping is to measure statistical dependence between the coverage indicator
\[
Z := \1\{Y\in C_\alpha(X)\}
\]
and auxiliary variables \(V\) (for example, prediction-set size, nonconformity score, residuals, or other model-derived quantities). If \(Z\) is independent of \(V\), this is evidence that coverage does not systematically vary with \(V\). \citet{feldman2021improving} proposed two measures induced by this remark. 

\begin{itemize}
  \item \textbf{Pearson's correlation.} This is a simple measure of linear dependence,
  \[
  R_{\mathrm{corr}}(Z,V) \;=\; \frac{\mathrm{Cov}(Z,V)}{\sqrt{\mathrm{Var}(Z)\,\mathrm{Var}(V)}},
  \]
  where $V$ is the prediction-set size. This is fast and interpretable but only captures linear relationships.

  \item \textbf{HSIC (Hilbert–Schmidt independence criterion).} The HSIC is a nonparametric kernel-based dependence measure that can detect arbitrary nonlinear dependence. Given a suitable pair of kernels, HSIC estimates the maximum mean discrepancy (MMD) \citep{gretton2005measuring} between the joint distribution \(\mathbb{P}_{Z, V}\) and the product of marginals \(\mathbb{P}_Z\otimes\mathbb{P}_V\). \citet{feldman2021improving} defined the metric based on HSIC:
  \[
  R_{\mathrm{HSIC}}(Z,V) \;=\; \sqrt{\mathrm{HSIC}(Z,V)},
  \]
  where the square root emphasizes small deviations from independence and gives a loss that is easier to interpret and optimize. Here, $V$ is again the prediction-set size.
\end{itemize}

\paragraph{Additional information on the WSC metric}
To support our claim that the WSC metric tends to produce pessimistic decisions in the conditional scenario, we performed an experiment isolating its behavior under perfect conditional coverage. The systematic underestimation by the WSC metric stems from an optimization artifact analogous to overfitting. Let $\mathcal{S}$ be the set of considered slabs, and let $C_s$ denote the empirical coverage on a specific slab $s \in \mathcal{S}$. Taking the expectation over draws of the finite dataset, it follows that $\mathbb{E}[\inf_{s' \in \mathcal{S}} C_{s'}] \le \mathbb{E}[C_s]$ for any fixed $s$. Consequently, $\mathbb{E}[\inf_{s' \in \mathcal{S}} C_{s'}] \le \inf_{s \in \mathcal{S}} \mathbb{E}[C_s]$. Because the empirical infimum will typically be realized by different slabs depending on the noise in the data draw, this inequality is strictly less. Under true conditional coverage, $\mathbb{E}[C_s] = 1-\alpha$ for all $s$; therefore, the empirical WSC systematically underestimates the target level $1-\alpha$.

This artifact becomes particularly pronounced in higher dimensions. It directly reflects the behavior of the theoretical lower bound established in the original WSC formulation \citep{cauchois2021knowing}, which decreases as a function of the dimension $d$:
\begin{equation}
    \inf \mathrm{WSC}_n(\widehat C, v) \ge 1 - \alpha - \mathcal{O}(1)\sqrt{\frac{\min\{d, \log |V|\} \log n}{\delta n}}.
\end{equation}

To empirically corroborate this, we designed a synthetic setup where covariates are drawn as $X \sim \mathcal{U}([0, 1]^d)$, and the coverage indicator $Z \sim \mathcal{B}(1-\alpha)$ is sampled independently of $X$, ensuring perfect conditional coverage by construction. We evaluated the WSC using $1,000$ slabs across varying dimensions $d$ and sample sizes $n$ (different numbers of slabs yield similar results). As shown in \Cref{figure:WSC:justification}, the average estimated WSC artificially decreases as the input dimension grows. This explicitly confirms that in high-dimensional feature spaces, the WSC metric is structurally pessimistic and prone to reporting spurious conditional coverage violations.

\begin{figure}[h!]
    \centering 
    \includegraphics[width=0.8\linewidth]{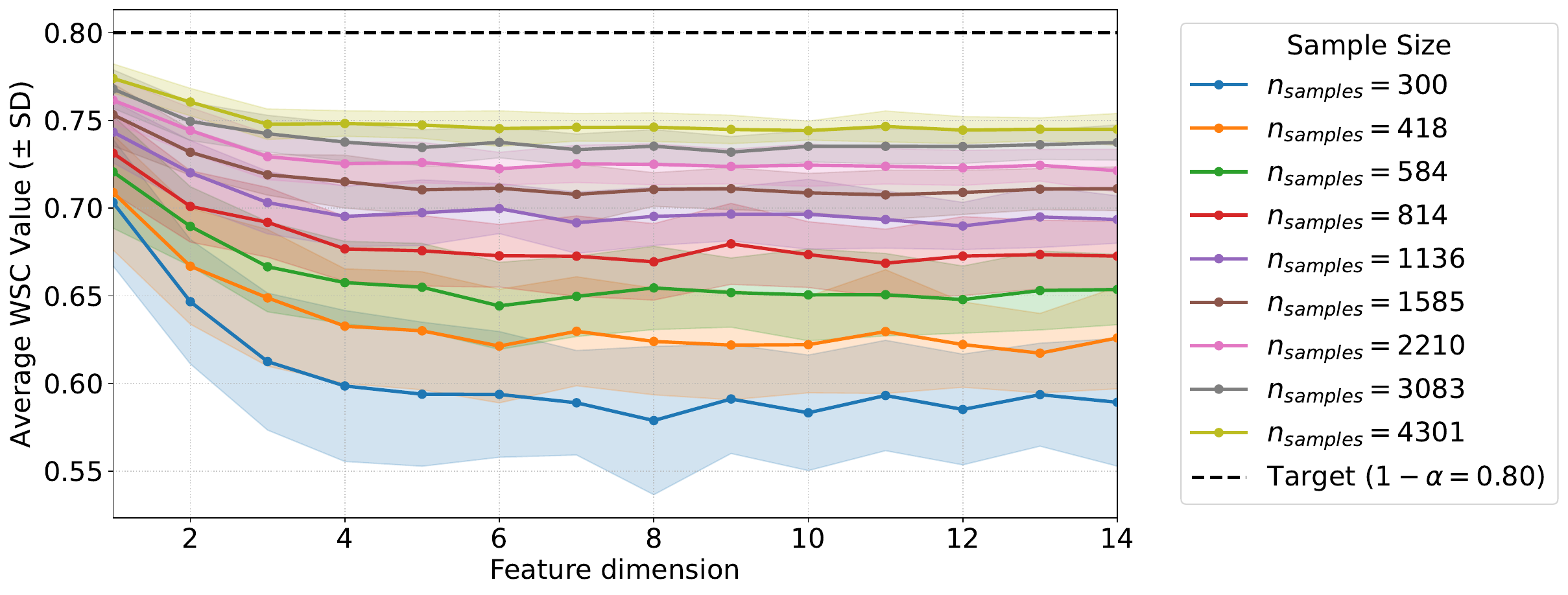}
    \caption{WSC using $1,000$ slabs and $\delta=0.1$ and miscoverage level $\alpha=0.2$ across varying dimensions $d$ and sample sizes $n$ for predictive sets achieving conditional coverage.}
    \label{figure:WSC:justification}
\end{figure}

\section{Background on conformal prediction.}
\label{app:background:conformal}
For completeness, we present a brief overview of \emph{split conformal prediction}, a widely used and computationally simple approach for constructing prediction sets with marginal validity guarantees \citep{papadopoulos2002inductive, lei2018distribution, angelopoulos2021gentle}. Suppose we observe data $\mathcal{D} = \{(X_i, Y_i)\}_{i=1}^n$ sampled i.i.d.\ from a joint distribution $\mathbb{P}_{X,Y}$, where $X_i \in \mathcal{X}$ are feature vectors and $Y_i \in \mathcal{Y}$ are outcomes. For a new test pair, $(X_\text{test}, Y_\text{test})$, the aim is to form a predictive set $C_\alpha(X_\text{test})$ such that
\begin{align*}
    \P_{X,Y} \!\left( Y_\text{test} \in C_\alpha(X_\text{test}) \right) = 1 - \alpha .
\end{align*}

To achieve this, the dataset is randomly divided into two parts: a \emph{training set} $\mathcal{D}_1$ of size $n_1$ and a \emph{calibration set} $\mathcal{D}_2$ of size $n_2$. A predictive model is fit on $\mathcal{D}_1$, while $\mathcal{D}_2$ is reserved for calibrating the prediction sets. Central to the procedure is a nonconformity score $S(X,Y)\in\mathbb{R}$, which quantifies how atypical a candidate response $Y$ is relative to the model. Using these scores, the predictive sets are then computed as:
\begin{equation}
\label{eq:app:basic:CP:guarantee}
    C_\alpha(X_\text{test}) = \left\{ y : S(X_\text{test},y) \leq \hat{q}_\alpha \right\},
\end{equation}
with 
\begin{equation*}
     \label{eq:app:q:cp}
     \hat{q}_\alpha := Q_{1-\alpha}\!\left( \frac{1}{n_2+1} \sum_{k=1}^{n_2} \delta_{S(X_k,Y_k)} + \frac{\delta_\infty}{\,n_2+1} \right),
\end{equation*}
where $Q_{1-\alpha}(\P)$ returns the $1-\alpha$ quantile of the distribution $\P$, and $\delta_x$ is the Dirac measure centered at $x$.
By the exchangeability of the data, this construction guarantees the marginal coverage property:
\begin{equation*}
    \P_{X,Y} \!\left( Y_\text{test} \in C_\alpha(X_\text{test})\mid\mathcal{D}_1 \right) \in \Bigg[ 1 - \alpha, 1-\alpha+\cfrac{1}{n_2+1} \, \Bigg) .
\end{equation*}

It is important to note that the guarantee in Eq.~\eqref{eq:app:basic:CP:guarantee} is marginal, which means that coverage holds on average over the distribution of $X_\textrm{test}$ and $\mathcal{D}_2$. A stronger requirement is \emph{conditional coverage}, which demands
\begin{align}
\label{eq:app:basic:CP:conditional:guarantee}
    \P_{Y|X} \!\left( Y_\text{test} \in C_\alpha(X_\text{test}) \,\middle|\, X_\text{test} \right) = 1 - \alpha ,
\end{align}
for almost every $X_\text{test}$, but achieving~\eqref{eq:app:basic:CP:conditional:guarantee} is impossible in general without additional assumptions. For a given conformal prediction strategy that achieves marginal coverage~\eqref{eq:app:basic:CP:guarantee}, it is essential to be able to measure how close from a conditional guarantee~\eqref{eq:app:basic:CP:conditional:guarantee} we are.

\section{Estimating Bregman divergences with ERTs}

The following proposition is related to \Cref{prop:any_convex} and studies the setting when there is a single proper score $\ell$ that estimates a distance $d(1-\alpha, p)$ simultaneously for all $\alpha$ and $p$. %The two are related by the fact that if one chooses $\varphi_p(q) = f(q) = d(p, q)$
\label{app:proofs}
\begin{proposition}
A function $d(p,q)$ arises as the divergence of some proper scoring rule if and only if there exists a convex function $\varphi:[0,1]\to\mathbb{R}$ such that
\[
d(p,q)=D_\varphi(q\|p)
=\varphi(q)-\varphi(p)-(q-p)\,s(p)
\]
for any choice of subgradient $s(p)\in\partial\varphi(p)$.
Furthermore, the associated proper score is defined as 
\[
\ell_\varphi(p,y):=-\varphi(p)-(y-p)\,s(p).
\] and 
\[
\ell_\varphi\mathrm{-ERT} = \E_X[D_\varphi(p(X)\|1-\alpha)].
\]
\end{proposition}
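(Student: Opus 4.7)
The plan is to split the biconditional into a \emph{construction} direction and a \emph{representation} direction, and then derive the ERT identity as an immediate consequence of the divergence formula obtained in the construction.

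For the construction direction, I would take $\varphi$ convex and a subgradient selection $s(p) \in \partial \varphi(p)$, and define $\ell_\varphi$ essentially as in the statement (with whichever sign makes $\ell_\varphi$ minimized at the truth, matching the convention of \Cref{prop:any_convex}). Properness follows in one line from the subgradient inequality $\varphi(q) \geq \varphi(p) + s(p)(q - p)$: for $Z$ with $\E[Z\mid q] = q$, one computes $\E[\ell_\varphi(p, Z)] = \varphi(p) + s(p)(q - p)$ and $\E[\ell_\varphi(q, Z)] = \varphi(q)$, whose difference is exactly $D_\varphi(q \| p)$ up to the overall sign. This identifies $d_{\ell_\varphi}(p, q)$ with $D_\varphi(q \| p)$, so conditioning on $X$ and using $p(X) = \E[Z \mid X]$ gives
\[
\ell_\varphi\textrm{-ERT} \;=\; \E_X\bigl[\,d_{\ell_\varphi}(1-\alpha, p(X))\,\bigr] \;=\; \E_X\bigl[\,D_\varphi(p(X) \,\|\, 1-\alpha)\,\bigr],
\]
which is the claimed ERT identity and in particular shows that every Bregman divergence arises from some proper score.

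For the representation direction, I would invoke the Savage/Schervish/Gneiting--Raftery structure for binary proper scoring rules. Starting from an arbitrary proper score $\ell$, define the Bayes risk
\[
G(q) \;:=\; \E_{y \sim q}[\ell(q, y)] \;=\; q\,\ell(q, 1) + (1-q)\,\ell(q, 0).
\]
Because $\E_{y \sim q}[\ell(p, y)]$ is affine in $q$ for each fixed $p$ (since $y \in \{0,1\}$) and properness forces $G$ to be the pointwise infimum of this affine family, the function $G$ is concave on $[0, 1]$, so $\varphi := -G$ is convex. The crucial step is to show that $s(p) := \ell(p, 1) - \ell(p, 0)$ is a valid subgradient of $\varphi$ at $p$: this is an envelope/supporting-hyperplane argument, since the affine map $q \mapsto \E_{y \sim q}[\ell(p, y)]$ supports $G$ at $q = p$ and its slope must therefore lie in the superdifferential of $G$, i.e., in $\partial \varphi(p)$ up to sign. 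Reading off intercepts recovers the canonical form of $\ell$ and, by the computation of the first direction, its divergence is exactly $D_\varphi(q \| p)$, completing the biconditional.

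The main obstacle is in the representation direction at points where $\varphi$ fails to be differentiable: there $\partial \varphi(p)$ is a nontrivial interval, and one must show that the specific slope $s(p)$ induced by $\ell$ is a legitimate selection from this interval. This is handled by the supporting-hyperplane characterization of convex functions together with the two properties ``$\E_{y\sim q}[\ell(p,y)] \geq G(q)$ globally'' and ``$\E_{y\sim p}[\ell(p,y)] = G(p)$'', which together pin down the slope as an admissible subgradient. Once $\varphi$ is identified, the ERT identity and the ``if'' direction are routine bookkeeping, so essentially all the real work is in this convex-analysis step.
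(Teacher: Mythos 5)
Your proof is correct and follows essentially the same route as the paper's: the representation direction is the same Savage-style argument (the Bayes risk $G(q)=\E_{y\sim q}[\ell(q,y)]$ as an envelope of the affine maps $q\mapsto\E_{y\sim q}[\ell(p,y)]$, whose slope $\ell(p,1)-\ell(p,0)$ supplies the sub/supergradient), the construction direction is the same subgradient-inequality properness check, and the ERT identity is the same conditioning-on-$X$ bookkeeping. The only substantive difference is that you work with $\varphi=-G$ and fix the orientation so the score is minimized at the truth, which actually irons out the sign sloppiness in the paper's own write-up (where the Bayes risk is called convex and an inequality in the converse is flipped), so your version is, if anything, the cleaner one.
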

\begin{proof}
Let $\ell$ be a proper scoring rule for binary outcomes and write $p,q\in[0,1]$ for probabilities of the event $Y=1$. Define
\[
\varphi(q):=-\mathbb{E}_{Y\sim q}\big[\ell(q,Y)\big]=-q\,\ell(q,1)-(1-q)\,\ell(q,0).
\]
Properness of $\ell$ means that for every fixed $p\in[0,1]$ and every $q\in[0,1]$,
\[
\varphi(q)=-\mathbb{E}_{Y\sim q}[\ell(q,Y)] \ge -\mathbb{E}_{Y\sim q}[\ell(p,Y)]
= -q\,\ell(p,1)-(1-q)\,\ell(p,0).
\]
Rearranging gives
\[
\varphi(q)\ge \varphi(p)+(q-p)(\ell(p,0)-\ell(p,1))
\]
so $\varphi$ is convex on $[0,1]$ and the function $p\mapsto\ell(p,0)-\ell(p,1)$ defines a subgradient of $\varphi$ at $p$. Denote by $s(p)\in\partial\varphi(p)$ any such subgradient. Then for every $p,q\in[0,1]$,
\[
\mathbb{E}_q[\ell(p,Y)] = -\varphi(p)-(q-p)\,s(p),
\]
and hence the divergence of $\ell$ satisfies
\[
d_\ell(p,q)=\mathbb{E}_q[\ell(p,Y)]-\mathbb{E}_q[\ell(q,Y)]
=-\varphi(p)-(q-p)\,s(p)+\varphi(q)
= \varphi(q)-\varphi(p)-(q-p)\,s(p),
\]
which is exactly the Bregman divergence $D_\varphi(q\|p)$ generated by $\varphi$.

Conversely, let $\varphi:[0,1]\to\mathbb{R}$ be convex and for each $p\in[0,1]$ choose a subgradient $s(p)\in\partial\varphi(p)$. Define a score $\ell_\varphi$ by
\[
\ell_\varphi(p,y):=-\varphi(p)-(y-p)\,s(p).
\]
by construction, and convexity of $\varphi$ implies for every $p$ and $q$,
\[
\mathbb{E}_q[\ell_\varphi(p,Y)]=-\varphi(p)- (q-p)\, s(p) \ge -\varphi(q)=\mathbb{E}_q[\ell_\varphi(q,Y)].
\]
Thus $\ell_\varphi$ is proper, and its divergence equals
\[
\mathbb{E}_q[\ell_\varphi(p,Y)]-\mathbb{E}_q[\ell_\varphi(q,Y)]
=-\varphi(p)-(q-p)\, s(p)+\varphi(q)
= D_\varphi(q\|p).
\]

Therefore a function $d(p,q)$ arises as the divergence of a proper scoring rule if and only if it is the Bregman divergence of some convex generator $\varphi$, as claimed. Note that the score $\ell_\varphi$ is determined up to addition of an arbitrary function of the outcome whose expectation under every $q$ is zero, and that when $\varphi$ is differentiable the subgradient $s(p)$ may be replaced by the derivative $\varphi'(p)$.

Using this proper score to evaluate the $\ell_\varphi\textrm{-ERT}$ we get (with Z the binary outcome as defined in the main text)
\begin{IEEEeqnarray*}{+rCl+x*}
    \ell_\varphi\textrm{-ERT} &=& \calR_{\ell_\varphi}(1-\alpha) - \calR_{\ell_\varphi}(p) \\
    &=& \bbE_{X, Z}[\ell_\varphi(1-\alpha, Z) - \ell_\varphi(p(X), Z)]\\
     &=&\bbE_{X}[\E_{Z}[\ell_\varphi(1-\alpha, Z) - \ell_\varphi(p(X), Z)|X]]\\
     &=&\bbE_{X}[\E_{Z\sim p(X)}[\ell_\varphi(1-\alpha, Z) - \ell_\varphi(p(X), Z)]]\\
    &=& \E_X[d_{\ell_\varphi}(1-\alpha, p(X))]\\
    &=& \E_X[D_\varphi(p(X)\|1-\alpha)]. & \qedhere
\end{IEEEeqnarray*}
\end{proof}

\paragraph{Proof of Proposition \ref{prop:any_convex}.}
\begin{proof}
    First, $\ell$ is a proper score because for all $p, q \in [0, 1]$, convexity of $f$ yields
    \begin{equation*}
        \bbE_{y \sim q}[\ell(p, y)] = -f(p) - (q - p)f'(p) \geq -f(q) = \bbE_{y \sim q}[\ell(q, y)]~.
    \end{equation*}
    Since we assumed $f'(1-\alpha) = 0$, we have $\ell(1-\alpha, Z) = -f(1-\alpha) = 0$. Hence, 
    \begin{equation*}
        \bbE_{Z \sim p}[\ell(1-\alpha, Z) - \ell(p, Z)] = \bbE_{Z \sim p}[-\ell(p, Z)] = f(p) + (p - p)f'(p) = f(p)~.
    \end{equation*}
    Taking the expectation over $X$ yields the claim.
\end{proof}

\section{Link with \citet{gibbs2025conformal}}
\label{app:link:gibbs}
To get predictive sets with conditional guarantees, \cite{gibbs2025conformal} used the fact that conditional coverage holds if for all measurable function $\varphi$,
 \begin{align*}
     \E_{X,Z}[\varphi(X)(1-\alpha-Z)] = 0.
 \end{align*}

 To obtain a more operational interpretation, suppose we restrict attention to predictors of the form \(h(x) = \theta^{T}\phi(x)\), where $\phi(x)$ is a feature map. The associated mean squared risk is
\[
F(\theta) = \E_{X,Z}[(\theta^{T}\phi(X) - Z)^{2}]
\]
and has gradient
\[
\nabla_{\theta}F(\theta) = 2\,\E_{X,Z}[(\theta^{T}\phi(X) - Z)\phi(X)].
\]
Write $\theta_{\alpha}$  such that $ h_{\theta_{\alpha}}(X)=1-\alpha,$ (that exists when $\phi(X)$ has a non-zero constant component).

For this class of models, our metric $L_2$-ERT compares any \(\theta\) to this target via
\[
F(\theta_{\alpha})-\inf_\theta F(\theta).
\]
When conditional coverage holds, both the gradient and the $L_2$-ERT are equal to zero.
In this sense, the quantity $F(\theta_{\alpha}) - F(\theta)$ offers more interpretability than the gradient alone. While the gradient describes only a local direction in the parameter space of improvement, the risk difference has a clear interpretation that has already been discussed. Indeed, when $F(\theta_{\alpha})-\inf_\theta F(\theta) = 0$ we have $\E_{X,Z}[(1-\alpha - Z)\phi(X)]=0$ but it can be that $F(\theta_{\alpha})-\inf_\theta F(\theta)$ is very small but that $\|\E_{X,Z}[(1-\alpha - Z)\phi(X)]\|_2$ is very large. That is why $\E_{X,Z}[\varphi(X)(1-\alpha-Z)]$ cannot be used as an interpretable quantity to quantify conditional miscoverage deviation, but only to assess if there is conditional coverage or not.

\section{Extension to a conditional coverage rule}
\label{app:extension:conditional:coverage}

While conditional coverage is often defined with a fixed target level,
\[
\P\big(Y\in C_\alpha(X_\text{test})\mid X_\text{test}\big) = 1-\alpha \quad \P_X\textrm{-almost surely},
\]
ensuring that the prediction set $C_\alpha(X)$ covers the true label with probability $1-\alpha$ for every possible input $X$,
a uniform coverage level may be neither necessary nor desirable in practice. 
In many settings, one may wish to adapt the coverage level to reflect varying uncertainty, heteroskedastic noise, or task-specific risk preferences. 
Recent conformal prediction methods allow adapting $\alpha$ dynamically to optimize other objectives \citep{gauthier2025backward, gauthier2025adaptive}.

To formalize this flexibility, we introduce a \emph{conditional miscoverage rule} 
$\alpha:\mathcal{X}\to[0,1]$, which prescribes a desired miscoverage level $\alpha(X)$ that may vary with the input features.
Under this generalized framework, the target conditional coverage condition becomes
\[ 
\E\big[\1\{Y\in C_\alpha(X_\text{test})\}\mid X_\text{test}\big] = 1-\alpha(X_\text{test}) \quad \P_X\textrm{-almost surely}.
\]
This formulation recovers the standard conformal setting in the special case where $\alpha(X)$ is constant, 
while enabling the analysis of predictors designed to achieve non-uniform, data-dependent coverage guarantees. 
It thus provides a principled way to study how prediction methods align with arbitrary, application-driven notions of conditional reliability. This could be useful for example in classification, where the discrete nature of the target may not allow to achieve constant coverage.

We next adapt our framework for the $\ell\textrm{-ERT}$ metric under this setting by writing
\begin{align*}
\ell\textrm{-ERT} 
&= \mathcal{R}_\ell(1-\alpha)-\mathcal{R}_\ell(p) = \E\Big[\big(d_\ell(1-\alpha(X),p(X)\big)\Big].
\end{align*}
and its variational form 
\begin{align*}
\ell\textrm{-ERT}(h) 
&= \mathcal{R}_\ell(1-\alpha)-\mathcal{R}_\ell(h) = \E_{X,Y}\Big[\ell(1-\alpha(X),Y\big) -\ell(h(X),Y\big)\Big].
\end{align*}
The formulas above work for the $L_2$-ERT and the KL-ERT, where the proper score $\ell$ does not depend on $1-\alpha$. If general distance metrics should be estimated as in \Cref{prop:any_convex}, such as for the $L_1$-ERT, the proper scoring rule $\ell$ needs to depend on $\alpha(X)$, and we obtain
\begin{align*}
\ell\textrm{-ERT} = \E\Big[\big(d_{\ell_{\alpha(X)}}(1-\alpha(X),p(X)\big)\Big].
\end{align*}
and its variational form 
\begin{align*}
\ell\textrm{-ERT}(h) 
= \E_{X,Y}\Big[\ell_{\alpha(X)}(1-\alpha(X),Y\big) -\ell_{\alpha(X)}(h(X),Y\big)\Big].
\end{align*}

The remaining components of the procedure remain unchanged. 
For convenience, we refer to the resulting metrics using the same terminology as in the fixed $1-\alpha$ case, 
since that setting corresponds to a particular instance of this more general framework.

\section{Strong classifiers}
\label{app:strong:classifiers}

Here, we provide more details on the classifiers used in \Cref{sec:experiments}. In the following, we will refer to training and test data for the data that the classifier $h$ is trained and evaluated on, not to be confused with the data that the original prediction set method $C_\alpha$ is trained on. We employ the following classifiers in our evaluation:

\paragraph{Tabular foundation models.} We evaluate the recent models \textbf{RealTabPFN-2.5} \citep{grinsztajn2025tabpfn} and \textbf{TabICLv1.1} \citep{qu2025tabicl}. These models can predict $h(x^{\mathrm{test}}_1), \hdots, h(x^{\mathrm{test}}_n)$ with a single forward pass through a neural network that takes both the test input and the entire training set into account. They have been found to perform very well already without hyperparameter tuning \citep{erickson2025tabarena}.

\paragraph{Gradient-boosted decision trees.} We choose two representatives: CatBoost \citep{prokhorenkova2018catboost} is known for its strong default performance. We adopt its hyperparameters from AutoGluon \citep{erickson2020autogluon} and TabArena \citep{erickson2025tabarena}, using 300 early stopping rounds instead of AutoGluon's custom early stopping logic. To obtain a faster model, we use LightGBM \citep{ke2017lightgbm} with cheaper hyperparameters adapted from the tuned defaults of \cite{holzmuller2024better}, reducing the number of early stopping rounds to 100 and using cross-entropy loss for early stopping (as for CatBoost). Both CatBoost and LightGBM are fitted in parallel for eight inner cross-validation folds for each outer cross-validation fold, following TabArena. The inner cross-validation folds are used for early stopping, and the final validation predictions are concatenated and used to fit a quadratic scaling post-hoc calibrator \citep{berta2025structured}. Test set predictions are made based on the post-hoc calibrator applied to the average of the eight models' predictions.

\paragraph{Bagging models.} Random Forest \citep{breiman2001random} is a popular baseline for tabular ML. Extremely randomized trees \citep[ExtraTrees/XT,][]{geurts2006extremely} are a more randomized variant that performs similarly while being faster \citep{erickson2025tabarena}. We fit both models with 300 estimators and otherwise use the default hyperparameters from \texttt{scikit-learn} \citep{pedregosa2011scikit}. 

\paragraph{PartitionWise} Partition-wise estimation first groups samples in the feature space and then predicts by using the average label within each group. The predictor relies on KMeans to form the partitions, and the number of clusters is selected through a fourth root rule based on the test set size, which keeps the model flexible while avoiding clusters that are too small. At inference time, each new sample is assigned to its nearest cluster and the model predicts the mean stored for that cluster.

\paragraph{Trade-offs.} We chose to only fit boosted trees with inner cross-validation, both because they need validation sets for early stopping and because they are still reasonably fast with parallelization. The use of cross-validation also allows for the application of post-hoc calibration. Other methods might also benefit from post-hoc calibration, especially for non-L1 metrics, at the cost of higher runtime due to cross-validation.

\paragraph{Discussion and other options.} We omit from-scratch trained tabular neural networks from our comparison as they are relatively slow, especially on CPUs, and their un-tuned performance is suboptimal \citep{erickson2025tabarena}. If runtime is less of a concern, for ideal sample-efficiency, automated machine learning methods such as AutoGluon \citep{erickson2020autogluon} can be employed that combine multiple models and hyperparameter setting, ensembling, and post-hoc calibration. However, when these methods are used with time limits, the result may not be reproducible.

\paragraph{Hardware.} We ran tabular foundation models on GPUs (NVIDIA V100) and the other models on CPUs (Cascade Lake Intel Xeon 5217 with 8 cores).

\section{Benchmarking strategies}
\label{app:benchmarking:strategies}

The strategies we compare can be differentiated in four main groups---those that uses density estimation, latent spaces, hyper-rectangles, or minimizing a quantity while ensuring marginal coverage. We build upon the work of \cite{dheur2025unified} that already explained most of those strategies, but we recall their specificities here for completeness. 

Among the density-based methods, given a predictive density \( \hat p(y|x) \), the benchmarked strategies are:  

\begin{itemize}
\item \textbf{DR-CP} \citep{sadinle2019least}: Defines the conformity score as \( S_{\text{DR-CP}}(X,Y) = -\hat p(Y|X) \), leading to prediction regions that are density superlevel sets, \( C_{\text{DR-CP}}(X) = \{y : \hat p(y|X) \ge -\hat q\} \).

\item \textbf{C-HDR} \citep{izbicki2022cd}: Conformalize the highest predictive density (HPD) by using the nonconformity score $S_\text{HDP}(X, Y) =  \P_{y \sim \hat{p}(\cdot|X)}\left( \; \hat{p}(y|X) \geq \hat{p}(Y|X) \; \right)$. It then produces regions \mbox{$ C_{\text{C-HDR}}(X) = \{y : \hat p(y|X) \ge \hat t_q\}$}, where \( \hat t_q \) defines the highest density region (HDR) at level \( \hat q \).

\item \textbf{PCP} \citep{wang2022probabilistic}: Draws \( L \) samples \( \tilde Y^{(l)} \sim \hat p_{Y|x} \) with $\hat p_{Y|x}$ the estimated conditional distribution, and defines conformity as the distance to the nearest sample, \mbox{\( S_{\text{PCP}}(X,Y) = \min_{l\in[L]} \|Y - \tilde Y^{(l)}\|_2 \)}; the corresponding region is a union of \( L \) balls centered at the sampled points.

\item \textbf{HD-PCP} \citep{wang2022probabilistic}: Extends PCP by retaining only the top \( \lfloor(1-\alpha)L\rfloor \) samples with highest density, concentrating the prediction region on high-density areas.

\item \textbf{C-PCP} \citep{dheur2025unified}: Estimates the conditional CDF of the conformity score \( S(X,Y) \),
\[
S_{\text{CDF}}(x,y) = \P(S_W(X,Y) \le S_W(x,y) \mid X = x),
\]
using a Monte Carlo approximation with $K$ samples
\[
S_{\text{ECDF}}(x,y) = \frac{1}{K} \sum_{k =1}^K \1[S_W(x,\hat Y^{(k)}) \le S_W(x,y)],
\quad \hat Y^{(k)} \sim \hat F_{Y|x}.
\]
When \( S(x,y) = S_{\text{PCP}}(x,y) \), this yields
\[
S_{\text{C-PCP}}(x,y) = \frac{1}{K} \sum_{k \in [K]} 
1\!\left\{
\min_{l \in [L]} \|\hat Y^{(k)} - \tilde Y^{(l)}\|_2 \le
\min_{l \in [L]} \|y - \tilde Y^{(l)}\|_2
\right\}.
\]

\item \textbf{CP2-PCP} \citep{plassier2024probabilistic}: Builds predictive sets by using samples from an implicit conditional generative model. For each calibration point it uses two independent draws from the conditional generator to define a conformity score and an inflation parameter $\tau$ that accounts for the conditional mass around likely outputs. At prediction time it forms a union of balls around new generated samples, with their size chosen to guarantee marginal validity while improving approximate conditional adaptivity. 

\item \textbf{Standardized residuals} \citep{lei2018distribution, braun2025multivariate}: Normalizes the residuals by their estimated conditional covariance matrix $\S(X)$ with the score 
\[
S_{\text{Stand. Res.}}(X,Y) = \|\S(X)^{-1/2}(Y-f(X))\|_2\, .
\]

\item \textbf{MSE}: Naïve multivariate generalization of the \emph{univariate} score $S(X,Y)=|Y-f(X)|$, by using the \emph{multivariate} score $S(X,Y)=\|Y-f(X)\|_2$.
\end{itemize}

Among the latent space-based methods, the benchmarked strategies are:  

\begin{itemize}
\item \textbf{STDQR} \citep{feldman2023calibrated}: Constructs multivariate prediction regions in a latent space \(\mathcal{Z}\) to overcome limitations of standard multivariate prediction methods. Instead of using directional quantile regression as originally introduced, we follow \cite{dheur2025unified} procedure where the region \(R_\mathcal{Z}\) with coverage \(1-\alpha\) is constructed by selecting the \(1-\alpha\) proportion of latent samples closest to the origin, ensuring correct coverage directly in the latent space. These latent regions are then mapped to the output space \(\mathcal{Y}\) via a conditional generative model (originally a CVAE, here replaced with a normalizing flow). A conformalization step refines coverage by creating a grid of latent samples, mapping them to \(\mathcal{Y}\), and forming small balls around each mapped point.

\item \textbf{L-CP} \citep{dheur2025unified}: Defines conformity in a latent space using an invertible conditional generative model \( \hat Q: \mathcal{Z} \times \mathcal{X} \to \mathcal{Y} \). A latent variable \( Z \sim \mathcal{N}(0, I_d) \) is mapped to the output space via \( \hat Q \), and the conformity score is measured in latent space as
\[
S_{\text{L-CP}}(X,Y) = \| \hat Q^{-1}(Y; X) \|.
\]
The prediction region is obtained by taking a ball of radius \( \hat q \) around the origin in latent space and mapping it back to the output space. This method avoids grid-based directional quantile regression, improving scalability and computational efficiency, and generalizes distributional conformal prediction to multivariate outputs.
\end{itemize}

Among the hyper-rectangle-based methods, the benchmarked strategies are:  

\begin{itemize}
\item \textbf{CopulaCPTS} \citep{sun2022copula}: This method models the joint dependence between marginal conformity scores via a copula. The calibration data are split into two sets: \( \mathcal{D}_{\text{cal-1}} \) to estimate empirical CDFs \( \hat F_i \) of conformity scores for each output dimension \( i \in [d] \), and \( \mathcal{D}_{\text{cal-2}} \) to calibrate the copula parameters. The optimal thresholds \( s_1^*, \dots, s_d^* \) are obtained by minimizing a coverage-based loss, ensuring marginal validity while reducing region size. The final prediction region is
\[
C_{\text{CopulaCPTS}}(X) = \{ y \in \mathcal{Y} : S_i(X, y_i) < s_i^*, \ \forall i \in [d] \}.
\]
Other copula's based strategies include \cite{messoudi2021copula, mukama2025copula}.

\item \textbf{HR} \citep{romano2019conformalized, zhou2024conformalized}: Constructs axis-aligned (hyper-rectangular) prediction regions by fitting univariate quantiles \( \tilde q_{\alpha/2}(x)_i \) and \( \tilde q_{1-\alpha/2}(x)_i \) following \cite{romano2019conformalized}  for each output dimension \( i \in [k] \), with \( \tilde\alpha = 1 - (1 - \alpha)^{1/k} \). The conformity score is defined as (inspired from \cite{zhou2024conformalized} which extends uni-variate scorings to multi-variate ones)
\[
S_{\text{HR}}(X,Y) = \max_{i \in [k]} \left\{ \tilde q_{\alpha/2}(X)_i - Y_i, \; Y_i - \tilde q_{1-\alpha/2}(X)_i \right\},
\]

yielding rectangular prediction regions aligned with coordinate axes.
\end{itemize}

Finally, among the strategies which minimizes the size of the prediction sets, we use: 
\begin{itemize}
    \item \textbf{MVCS} \citep{braun2025minimum}: Minimizes the volume of the sets $\{y\in \rb^k, \|M(X)(y-f(X))\|_p\leq 1\}$ where $M(X)$ is positive definite, $f(X)\in \rb^k$, and $p>0$ defines a $p$-norm, while ensuring valid marginal coverage. The conformalization set is done with the score \mbox{$S(X,Y) = \|M(X)(Y-f(X)\|_p$}.
\end{itemize}

\section{Additional information regarding the experiments}

\subsection{Details on WSC}
For our experiments, with fixed $\delta=0.1$.

\subsection{Details on the datasets}
\label{app:dataset:information}
See \Cref{tab:dataset:classifiers:comparison} for details on the datasets used for the classifiers comparison, \Cref{tab:dataset:uni:information} for details on the uni-variate datasets and \Cref{tab:dataset:multi:information} for the multivariate ones. 

\begin{table}[h!]
\centering
\caption{Description of the univariate datasets.}
\begin{tabular}{@{}lccc@{}}
\toprule
Dataset & \shortstack{Number of \\ samples} &  \shortstack{Number \\ of test samples} & \shortstack{Number \\ of features}  \\
\midrule
physiochemical\_protein & 45730 & 22865 & 9 \\ \hline
Food\_Delivery\_Time & 45593 & 22797 & 10 \\ \hline
diamonds & 53940 & 26970 & 9 \\ \hline
superconductivity & 21263 & 10632 & 81 \\ \hline
ailerons & 13750 & 6875 & 40 \\ \hline
o11 & 5742 & 2872 & 1025 \\ \hline
miami2016 & 13932 & 6967 & 16 \\ \hline
winequality & 6497 & 3250 & 12 \\ \hline
\bottomrule
\end{tabular}
\label{tab:dataset:classifiers:comparison}
\end{table}

\begin{table}[h!]
\centering
\caption{Description of the univariate datasets.}
\begin{tabular}{@{}lccc@{}}
\toprule
Dataset & \shortstack{Number of \\ samples} &  \shortstack{Number \\ of test samples} & \shortstack{Number \\ of features}  \\
\midrule
ailerons & 13750 & 4125 & 40 \\ \hline
bank8FM & 8192 & 2458 & 8 \\ \hline
cpu-act & 8192 & 2458 & 21 \\ \hline
house-8L & 22784 & 6836 & 8 \\ \hline
miami & 13932 & 4182 & 16 \\ \hline
sulfur & 10081 & 3027 & 6 \\ \hline
\bottomrule
\end{tabular}
\label{tab:dataset:uni:information}
\end{table}

\begin{table}[h!]
\centering
\caption{Description of the multivariate datasets.}
\begin{tabular}{@{}lcccc@{}}
\toprule
Dataset & \shortstack{Number of \\ samples} &  \shortstack{Number \\ of test samples} & \shortstack{Number \\ of features} & \shortstack{Dimension \\ of targets}  \\
\midrule
Bias & 7752 & 2326 & 22 & 2  \\ \hline
CASP & 45730 & 13719 & 8 & 2  \\ \hline
House & 21613 & 6484 & 17 & 2 \\ \hline
rf1 & 9125 & 2738 & 64 & 8 \\ \hline
rf2 & 9125 & 2738 & 576 & 8 \\ \hline
Taxi & 61286 & 18386 & 6 & 2 \\ \hline
\bottomrule
\end{tabular}
\label{tab:dataset:multi:information}
\end{table}

\clearpage
\section{Additional experiments}
\label{app:additional:experiments}

\paragraph{Over- and under-coverage}

To illustrate our findings regarding over- and under-coverage of \Cref{subsec:over:under:coverage}, we performed one experiment in the synthetic data regime. To do so, we reuse the experimental setup that has been used in \Cref{subsec:experiment:synthetic}, that are the standard CP setup and the oracle one. For the oracle setup, the sets therefore provides a conditional coverage of $0.90$. We choose to compare this conditional coverage with the target level $0.80$. The sets are therefore all over-covering, by $0.10$, and we expect our metric for the over-coverage $L_1^+$-ERT to be equal to $0.10$ (as $0.90-0.80=0.10$), but the under-coverage $L_1^-$-ERT to be equal to $0$, as the sets are never under-covering. This is indeed what we observe empirically in \Cref{figure:over:under:coverage}.

\begin{figure}[h!]
    \centering % Centers the entire block of minipages
    \begin{minipage}{0.48\columnwidth} 
        \centering 
        \includegraphics[width=\linewidth]{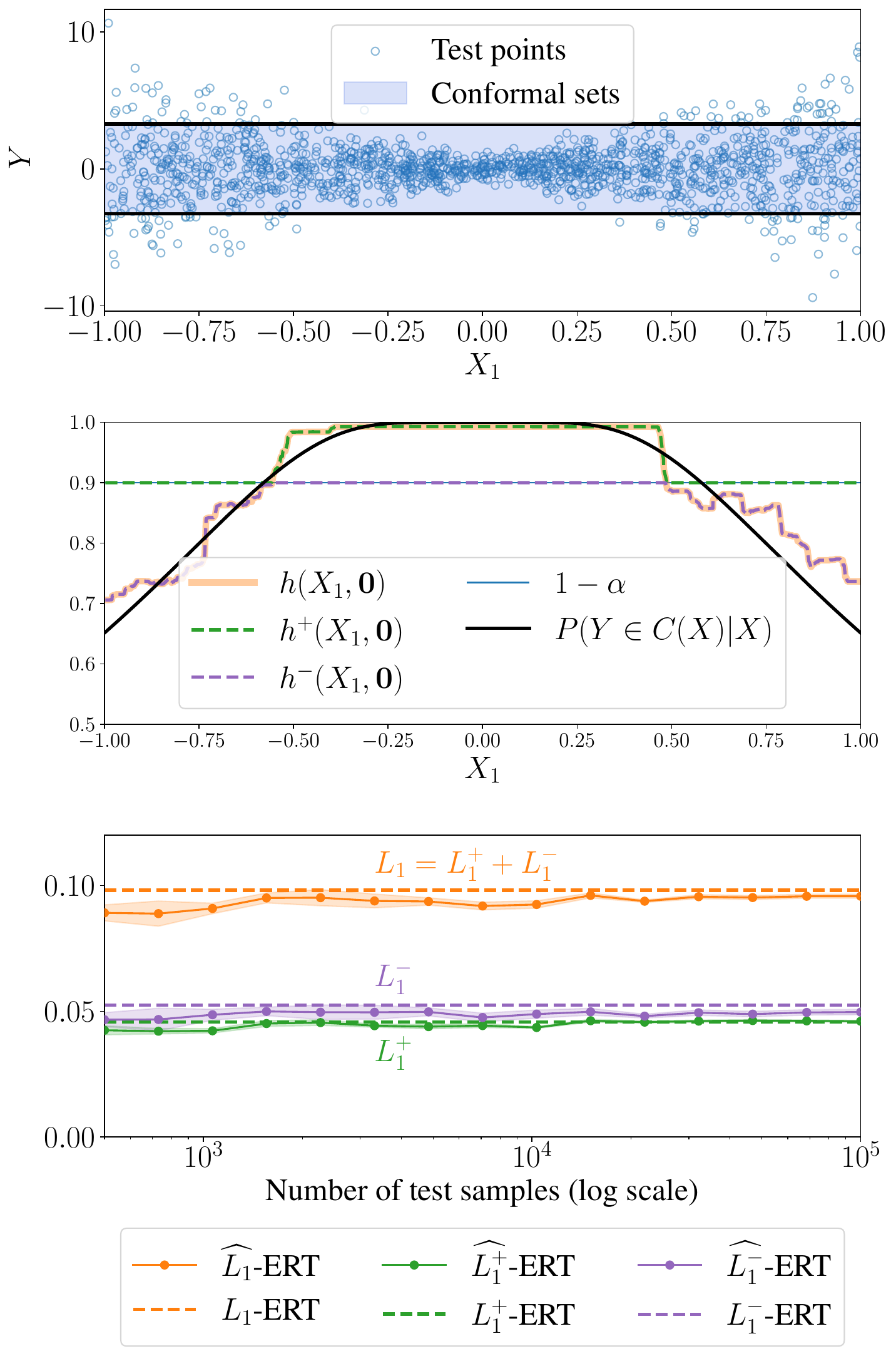}
    \end{minipage}
    \hfill % This pushes the two minipages to the left and right margins
    \begin{minipage}{0.48\columnwidth} 
        \centering 
        \includegraphics[width=\linewidth]{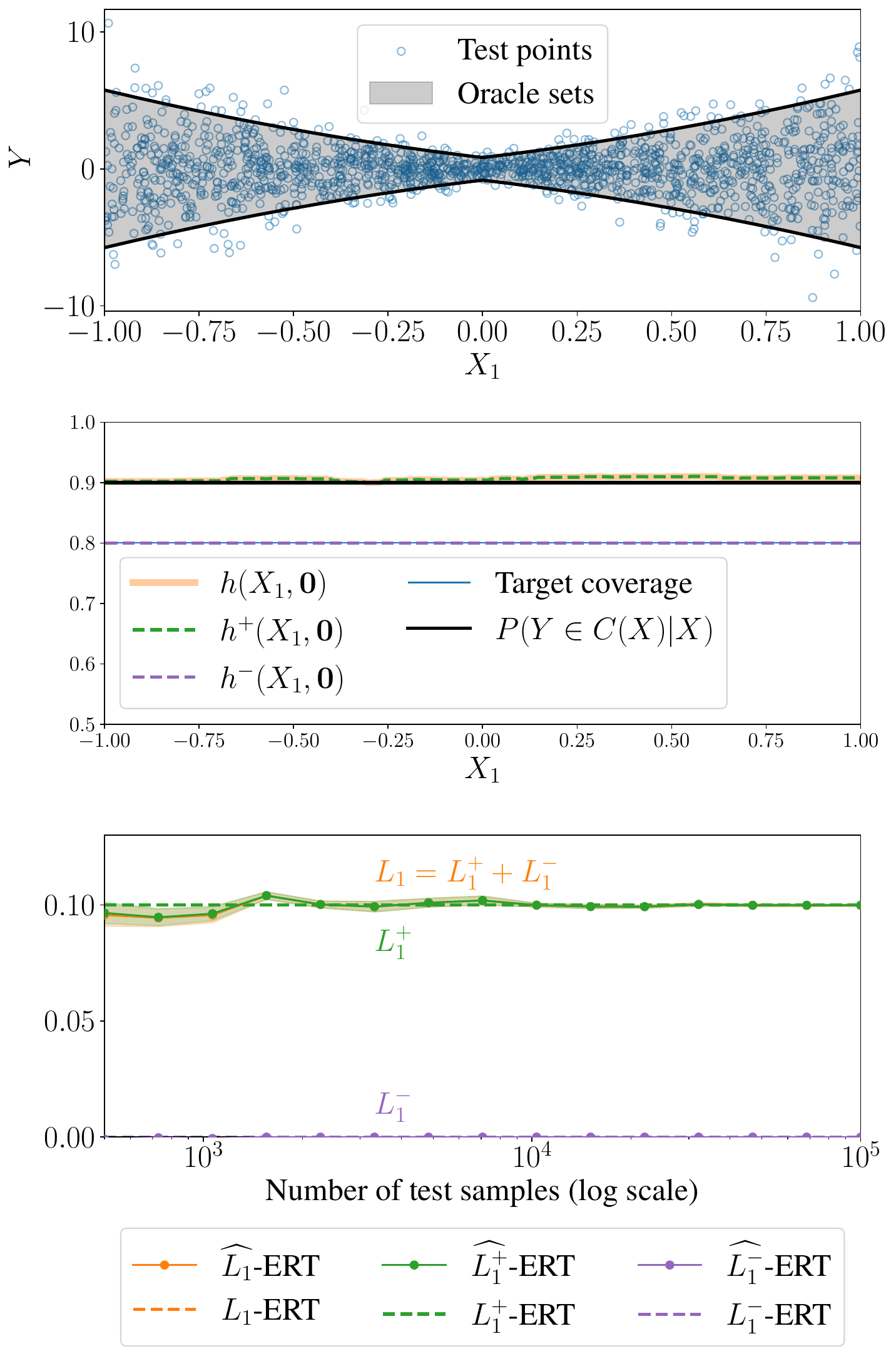}
    \end{minipage}
    \caption{Illustration of conditional over- and under-coverage estimation.
    The top panel shows data generated as $Y \sim \mathcal{N}\large(0, \sigma(X^1)\large)$ with $\sigma(x)=0.5 + |x| + x^2$, where $X\sim\mathcal{U}([-1,1]^8)$, and $X^1$ is the first component of such a vector $X$. The middle panel shows the conditional coverage estimation $h$ (with the non-informative features fixed to zero for clarity), and its truncated version to $1-\alpha$, $h^+$ and $h^-$. We also plot the true conditional coverage, and the target coverage. All estimations are performed using $1,500$ test points. The bottom panel shows the estimated $L_1$-ERT, and its decomposition between $L_1^+$-ERT for over-coverage and $L_1^-$-ERT for under-coverage. We do observe that $L_1=L_1^++L_1^-$.
    \textbf{Left:} Conformal sets from the score $S(X,Y) = |Y|$ using $3,000$ samples.
    \textbf{Right:} Oracle sets that achieve conditional coverage of $0.90$, but tested with a target coverage of $0.80$.}
    \label{figure:over:under:coverage}
\end{figure}

\paragraph{Robustness of the metric.}
To explicitly isolate and evaluate the robustness of our conditional coverage estimation, we design two synthetic stress-test scenarios. Our inferential goal is to test the estimator's capacity to capture both smooth and high-frequency heteroscedasticity profiles. 

In both experiments, we sample 8-dimensional covariates $X \sim \mathcal{U}([-1,1]^8)$ and generate response variables $Y \sim \mathcal{N}\large(0, \sigma(X^1)\large)$. The data-generating process relies exclusively on the first feature, $X^1$, embedding the true conditional signal within seven non-informative dimensions. We construct baseline prediction sets using a naive non-conformity score, $S(X,Y) = |Y|$, calibrated on $3,000$ samples. Because this score ignores local dispersion, the resulting regions achieve marginal validity but exhibit strong conditional coverage violations. We then compare our $L_1$-ERT metric with a LightGBM classifier against a standard partition-wise estimator (analogous to CovGap).

We evaluate robustness across two distinct variance profiles in \Cref{figure:robustness}:
\begin{itemize}
    \item \textbf{Robustness to smooth variations and noise (left):} We define a V-shaped variance profile, $\sigma(x) = 0.5 + 0.25|x|$, and evaluate on $1,500$ test points. The explicit presence of non-informative features structurally degrades partition-based methods, which suffer from volume explosion and binning inefficiencies in $\mathbb{R}^8$. Empirically, we observe that the LightGBM classifier successfully isolates the informative feature $X^1$, accurately reconstructing the smooth conditional coverage deficit $h(X)$, whereas the partition-wise approach degrades.
    
    \item \textbf{Robustness to high-frequency variations (right):} We impose a highly non-smooth, localized variance profile: $\sigma(x) = 0.5 + \mathbf{1}\{x\in\cup_{0\leq k \leq9} [\frac{k-5}{5}, \frac{k-5}{5}+0.04]\}$, evaluated over $3,000$ test points. This setup tests the spatial resolution limits of the estimators. Partition-wise estimators structurally fail to capture these sharp, rapid oscillations due to rigid binning artifacts. Conversely, the LightGBM demonstrates a superior capacity to track localized drops in coverage without overfitting to the high-dimensional noise.
\end{itemize}

Consequently, the empirical evidence supports the conclusion that, with the right classifier, the $L_1$-ERT provides a robust evaluation of conditional coverage deviation in the presence of complex heteroscedasticity and high-dimensional nuisance parameters.

\begin{figure}[h!]
    \center
\begin{minipage}{0.48\columnwidth} \centering \includegraphics[width=\columnwidth]{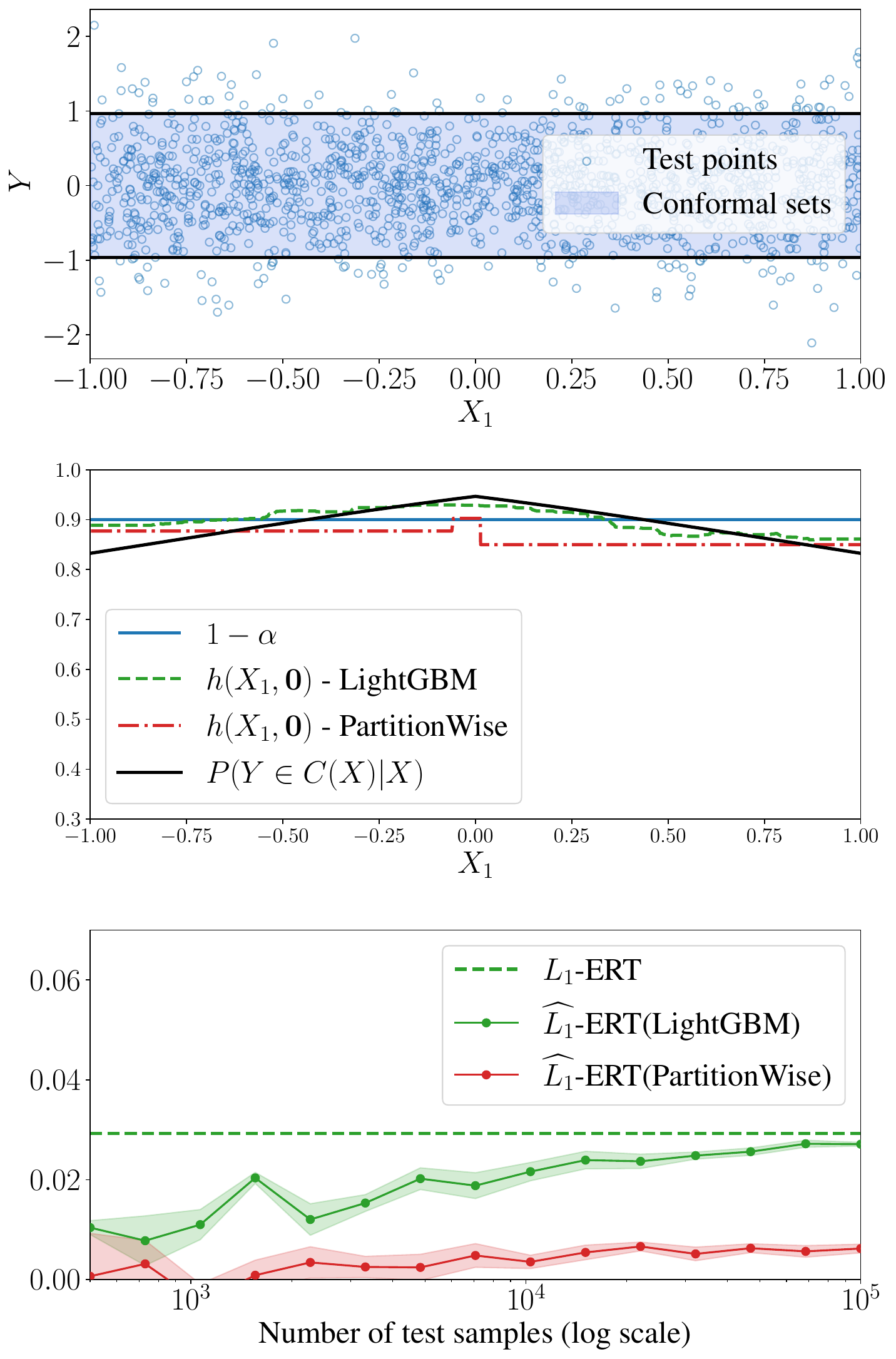}
\end{minipage}\hfill
\begin{minipage}{0.48\columnwidth} \centering \includegraphics[width=\columnwidth]{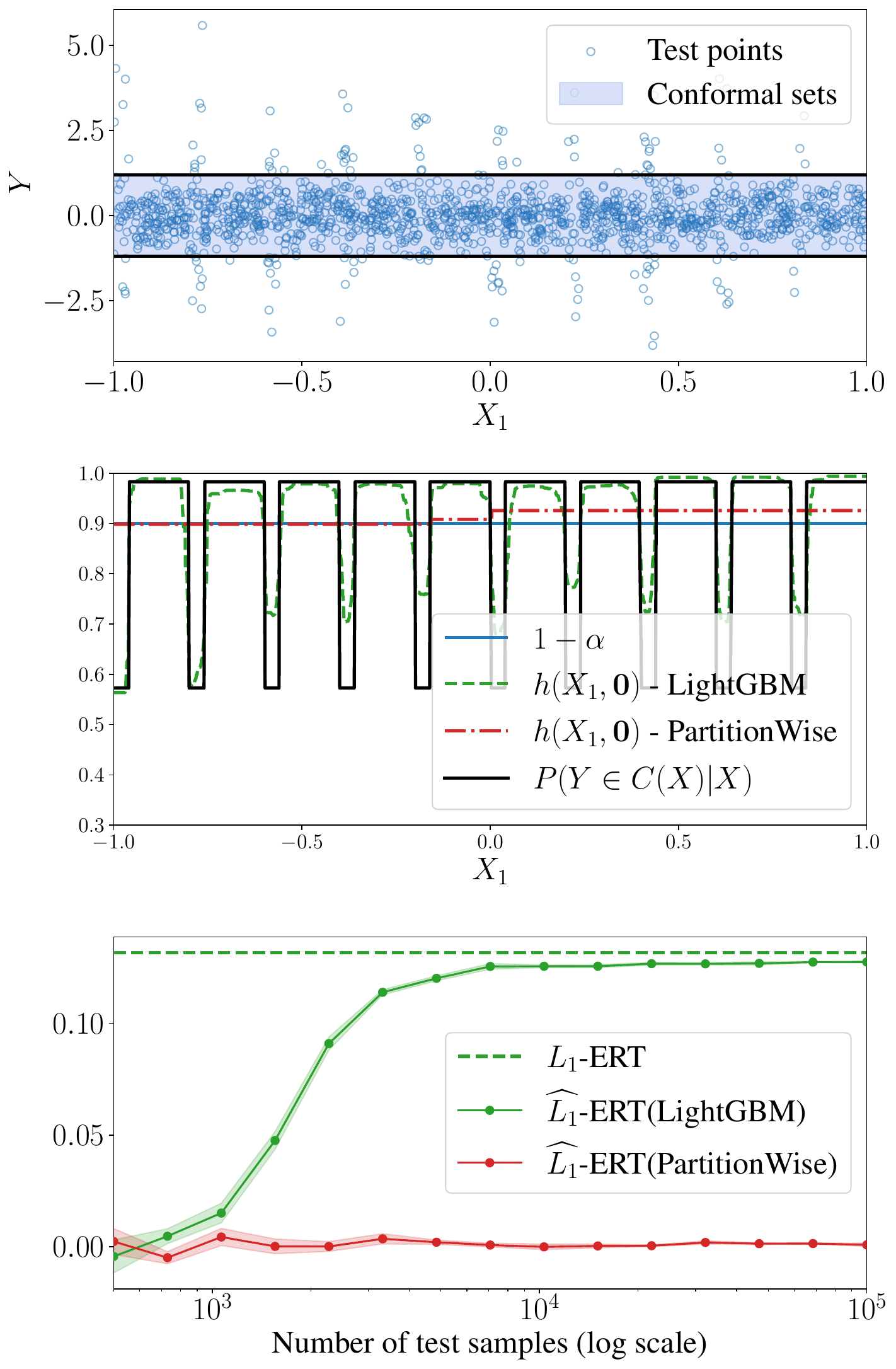}
\end{minipage}\hfill
    \caption{\textbf{Left:} Illustration of conditional coverage estimation.
    The top panel shows data generated as $Y \sim \mathcal{N}\large(0, \sigma(X^1)\large)$ with $\sigma(x)=0.5 + 0.25|x|$, where $X\sim\mathcal{U}([-1,1]^8)$, and $X^1$ is the first component of such a vector $X$. Sets are obtained with conformal prediction from the score $S(X,Y) = |Y|$ using $3,000$ samples. The middle panel shows the conditional coverage estimation $h$ (with the non-informative features fixed to zero for clarity) with a LightGBM classifier and a partition-wise estimator, to mimic the estimation provided by a CovGap estimate. All estimations are performed using $1,500$ test points. We also plot the true conditional coverage, and the target coverage. The bottom panel shows the estimated $L_1$-ERT estimated with a LightGBM or a partition-wise estimator.
    \textbf{Right:} Illustration of conditional coverage estimation.
    The top panel shows data generated as $Y \sim \mathcal{N}\large(0, \sigma(X^1)\large)$ with \mbox{$\sigma(x)=0.5 + \mathbf{1}\{x\in\cup_{0\leq k \leq9} [\frac{k-5}{5}, \frac{k-5}{5}+0.04]\}$, where $X\sim\mathcal{U}([-1,1]^8)$}, and $X^1$ is the first component of such a vector $X$. Sets are obtained with conformal prediction from the score $S(X,Y) = |Y|$ using $3,000$ samples. The middle panel shows the conditional coverage estimation $h$ (with the non-informative features fixed to zero for clarity) with a LightGBM classifier and a partition-wise estimator, to mimic the estimation provided by a CovGap estimate. All estimations are performed using $3,000$ test points. We also plot the true conditional coverage, and the target coverage. The bottom panel shows the estimated $L_1$-ERT estimated with a LightGBM or a partition-wise estimator.}
    \label{figure:robustness}
\end{figure}

\clearpage
\paragraph{Tabular benchmark.}

We present the disaggregated results of the tabular classifier benchmark across individual datasets. To maintain epistemic clarity, we explicitly decompose the computation of the ``Avg. \% of max ERT'' metric reported in \Cref{table:percentage:improvement}. Our inferential goal with this metric is to establish a scale-invariant evaluation framework. Because raw ERT magnitudes can vary drastically depending on the dataset's underlying noise and dimensionality, directly averaging raw scores would implicitly over-weight datasets with inherently larger baseline errors. 

To correct for this confounding factor, the metric is computed via the following explicit sequence:
\begin{enumerate}
    \item For each distinct experiment and dataset, we isolate the empirical supremum (maximum estimated ERT) across all evaluated tabular models and test set sizes.
    \item We map the raw ERT of each specific model and test-size configuration to a normalized percentage relative to this local maximum.
    \item Finally, we marginalize over the task dimensions by computing the arithmetic mean of these normalized percentages across all experiments and datasets.
\end{enumerate}

We present the estimated $\ell$-ERT for various classifiers on eight datasets, plotted against the number of test samples in Figures \ref{figure:l1:ERT:real:1} \& \ref{figure:l1:ERT:real:2} \& \ref{figure:l1:ERT:real:3} \& \ref{figure:l1:ERT:real:4} \& \ref{figure:l2:ERT:real:1} \& \ref{figure:l2:ERT:real:2} \& \ref{figure:l2:ERT:real:3} \& \ref{figure:l2:ERT:real:4} \& \ref{figure:KL:ERT:real:1} \& \ref{figure:KL:ERT:real:2} \& \ref{figure:KL:ERT:real:3} \& \ref{figure:KL:ERT:real:4}. The plots highlight a pronounced difference between PartitionWise and the remaining classifiers.

\begin{figure}[h!]
    \center
\begin{minipage}{0.48\columnwidth} \centering \includegraphics[width=\columnwidth]{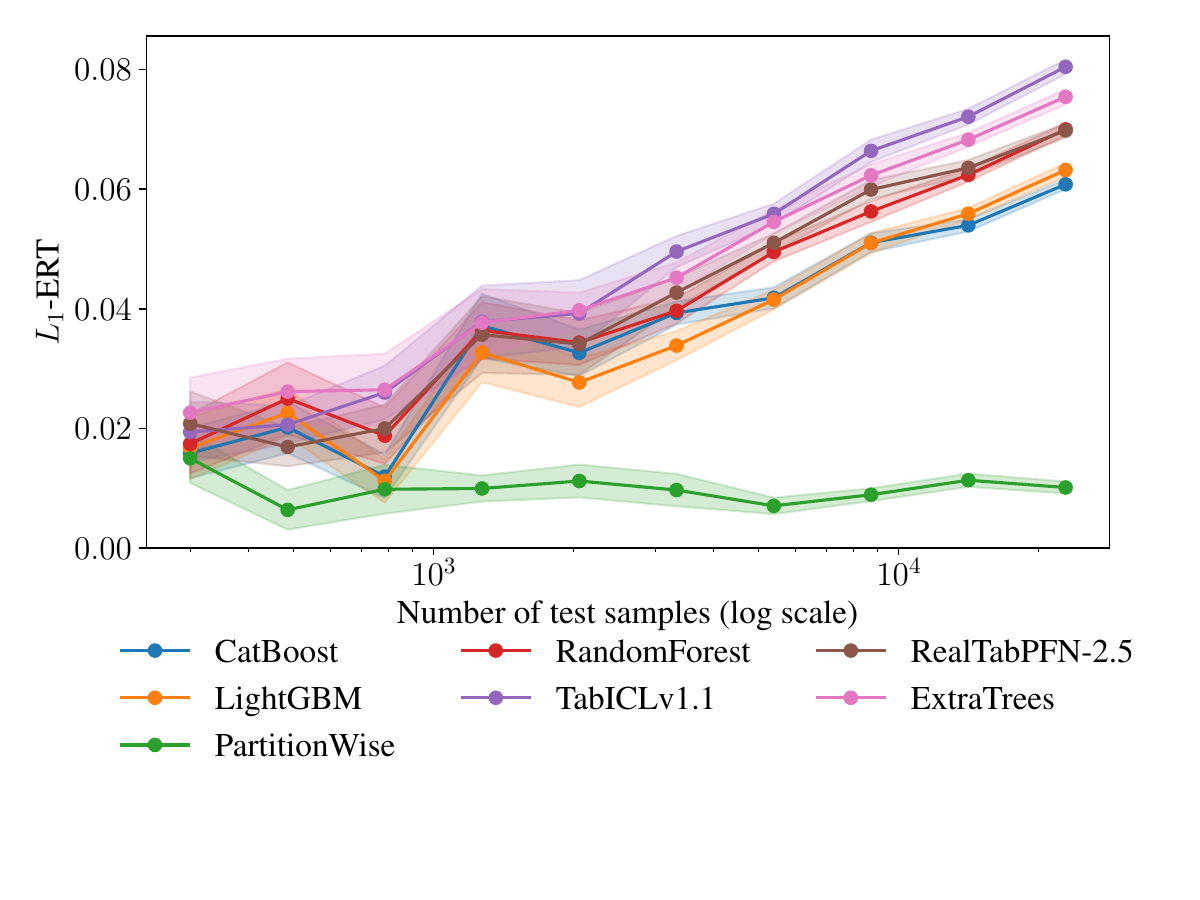}
\end{minipage}\hfill
\begin{minipage}{0.48\columnwidth} \centering \includegraphics[width=\columnwidth]{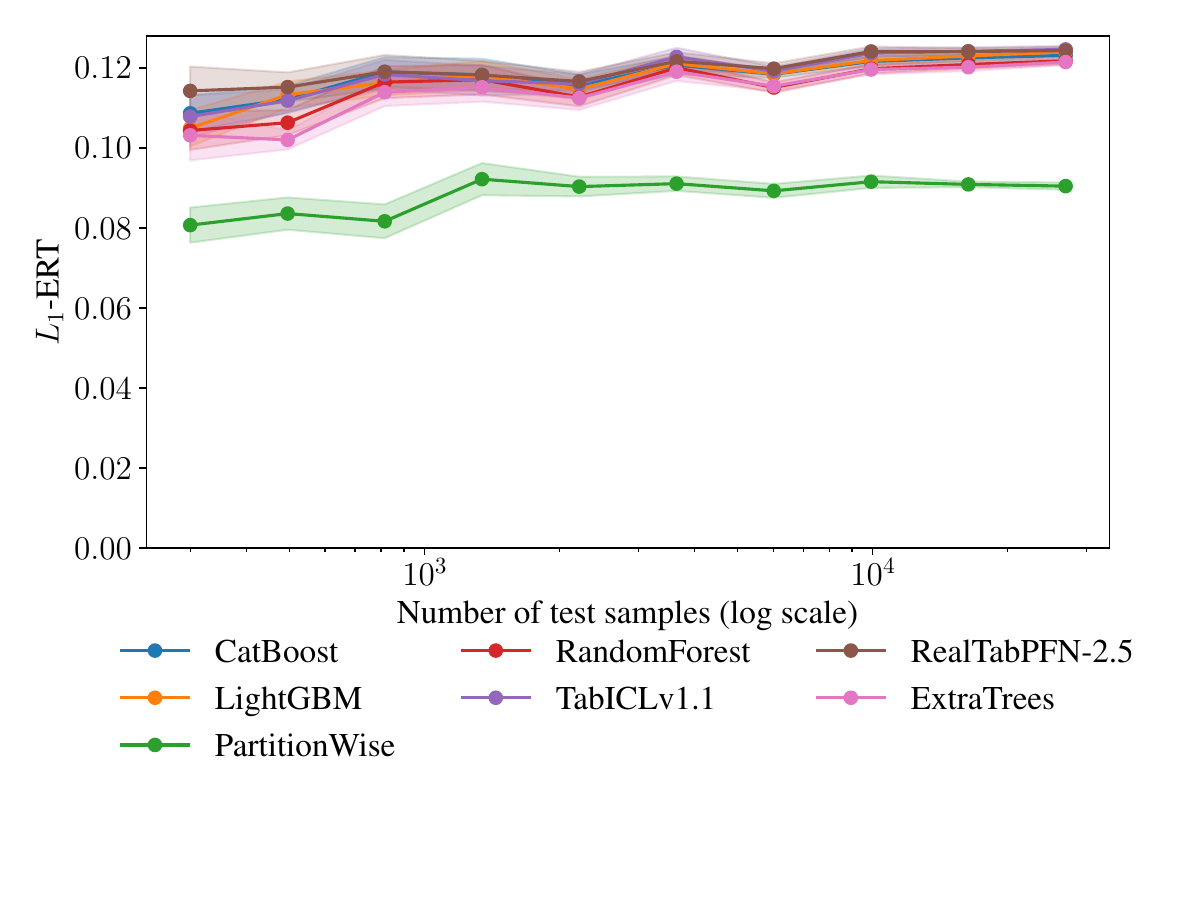}
\end{minipage}\hfill
    \caption{Illustration of the estimation of $L_1$-ERT for different classifiers as a number of available samples.
    \textbf{Left:} physiochemical\_protein dataset.
    \textbf{Right:} Diamonds dataset.}
    \label{figure:l1:ERT:real:1}
\end{figure}

\begin{figure}[h!]
    \center
\begin{minipage}{0.48\columnwidth} \centering \includegraphics[width=\columnwidth]{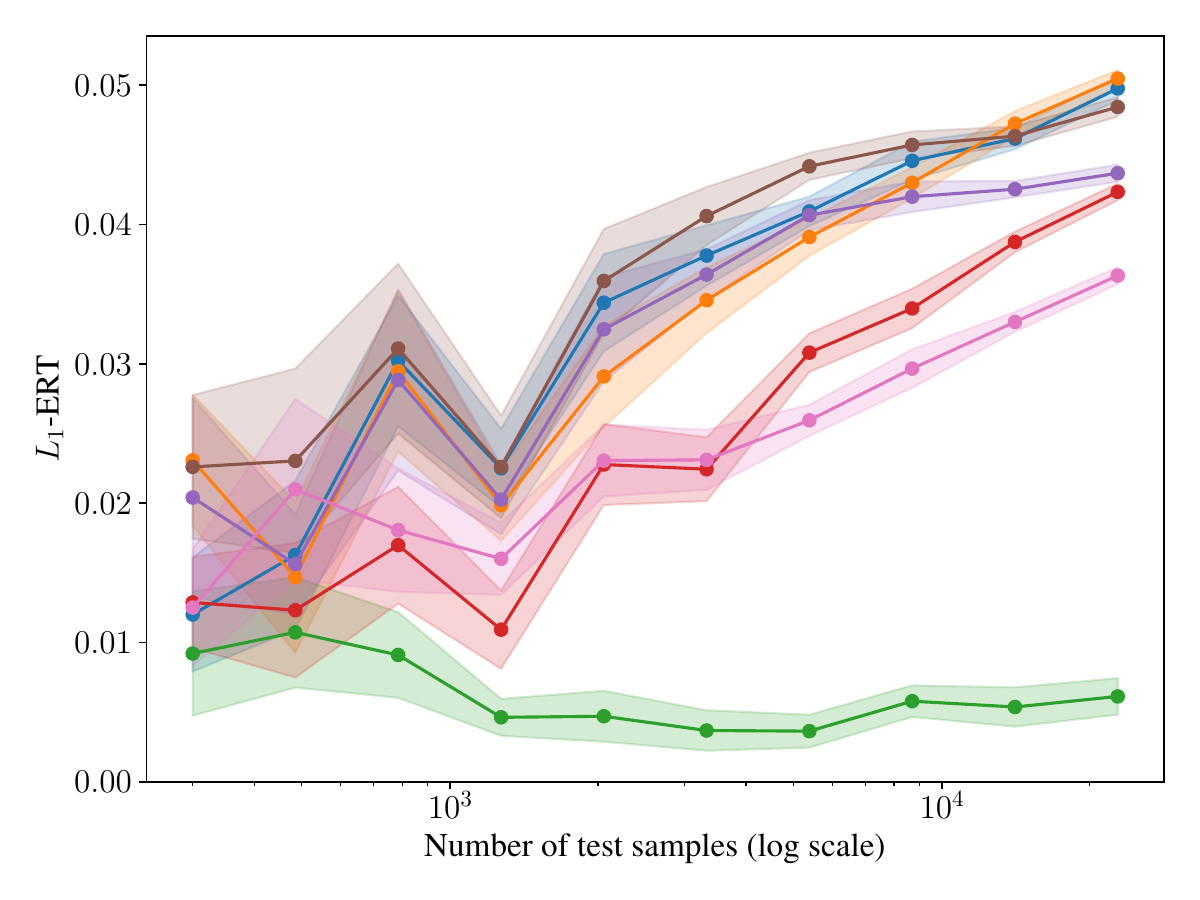}
\end{minipage}\hfill
\begin{minipage}{0.48\columnwidth} \centering \includegraphics[width=\columnwidth]{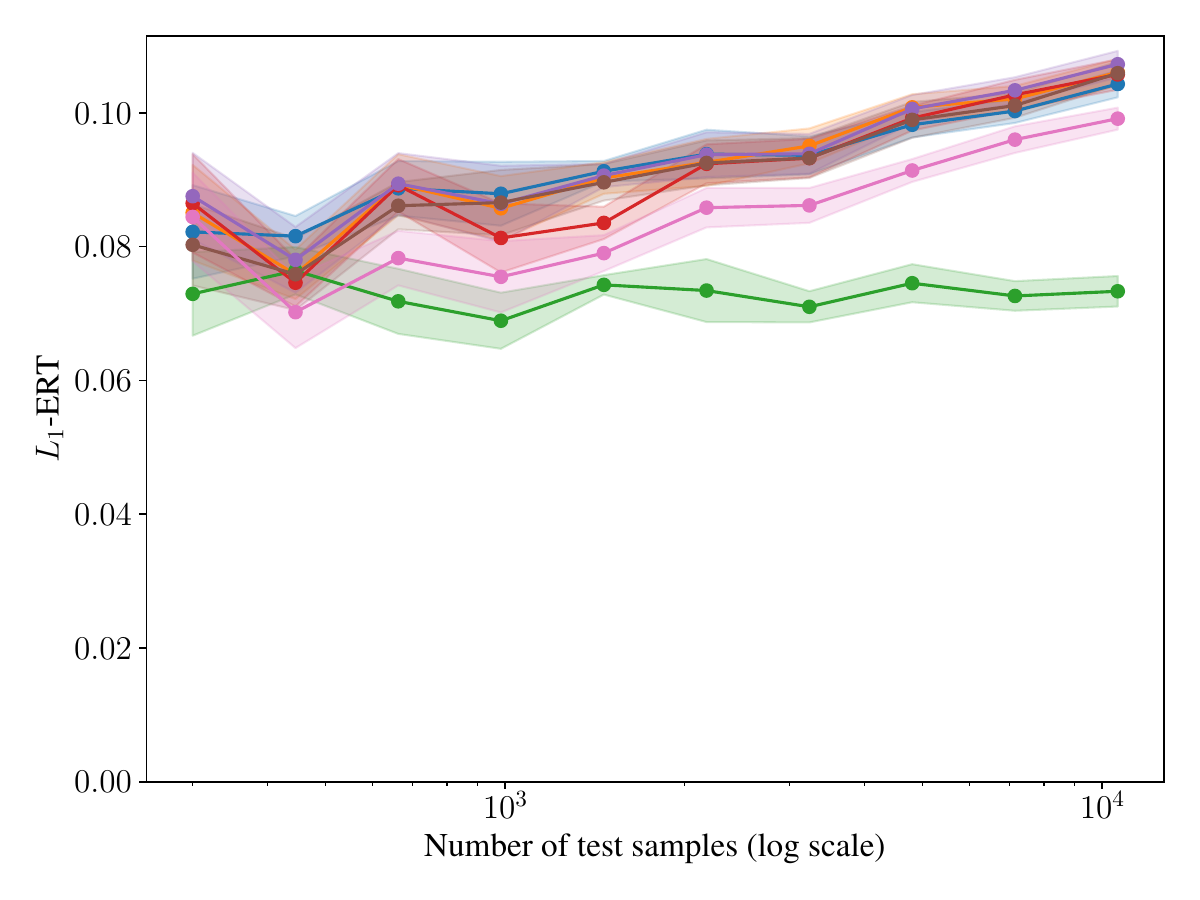}
\end{minipage}\hfill
    \caption{Illustration of the estimation of $L_1$-ERT for different classifiers as a number of sampled data available. Legend shared with \Cref{figure:l1:ERT:real:1}.
    \textbf{Left:} Food\_Delivery\_Time dataset.
    \textbf{Right:} Superconductivity dataset.}
    \label{figure:l1:ERT:real:2}
\end{figure}

\begin{figure}[h!]
    \center
\begin{minipage}{0.48\columnwidth} \centering \includegraphics[width=\columnwidth]{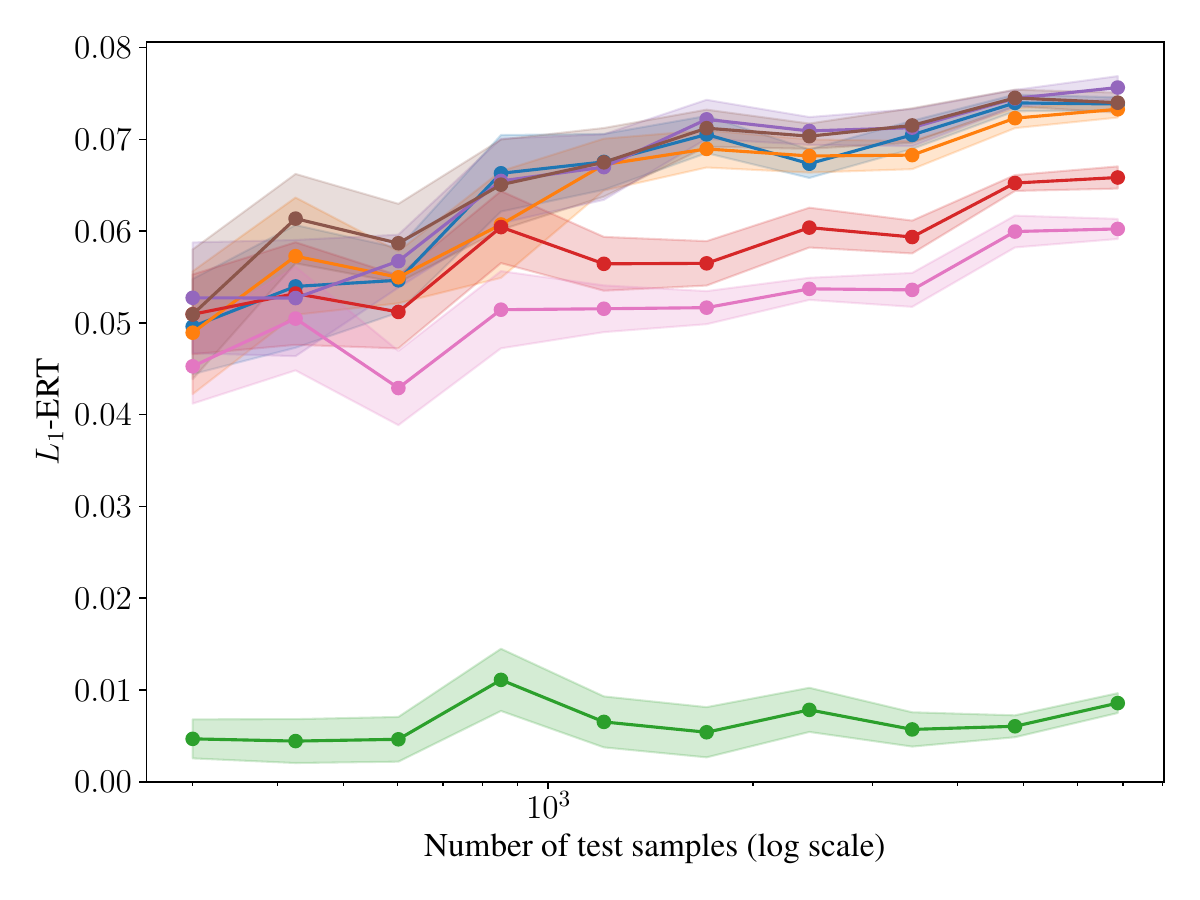}
\end{minipage}\hfill
\begin{minipage}{0.48\columnwidth} \centering \includegraphics[width=\columnwidth]{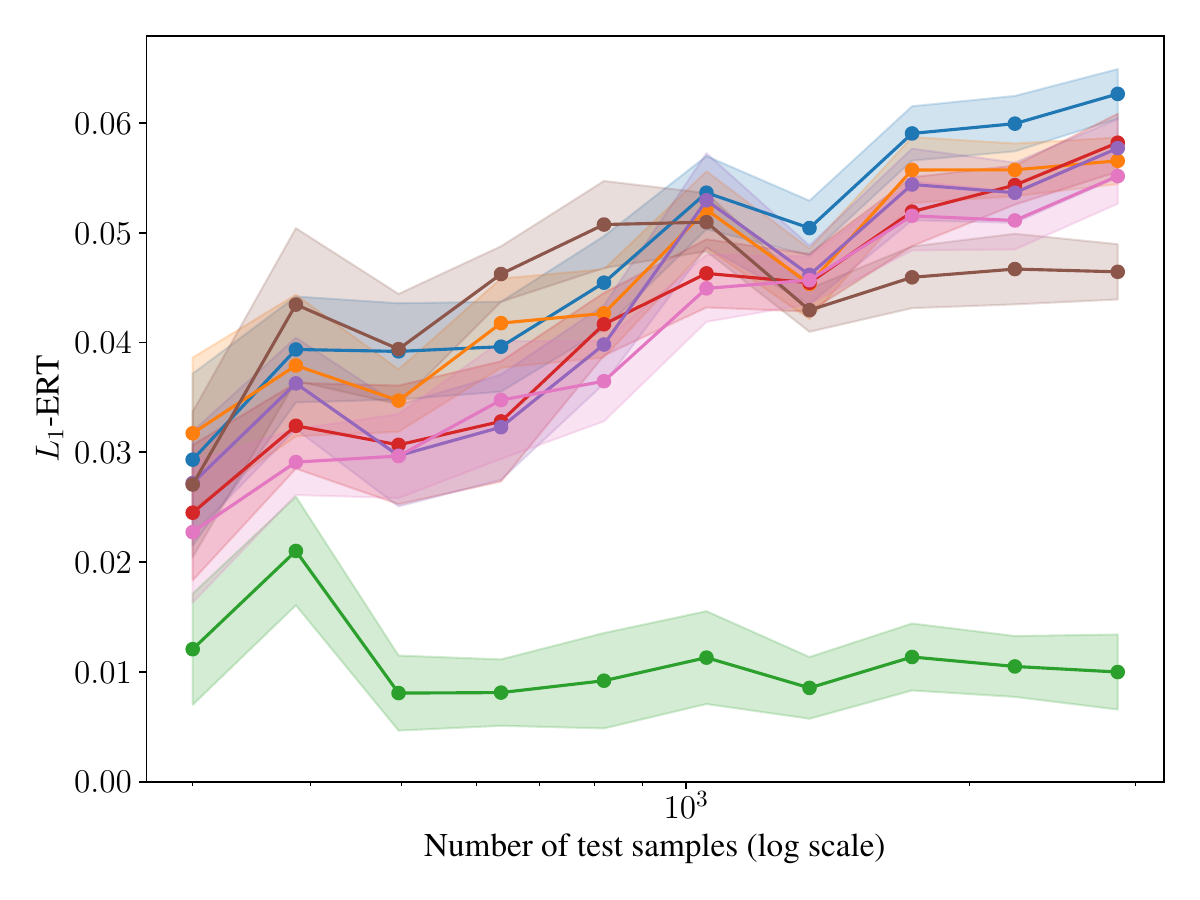}
\end{minipage}\hfill
    \caption{Illustration of the estimation of $L_1$-ERT for different classifiers as a number of sampled data available. Legend shared with \Cref{figure:l1:ERT:real:1}.
    \textbf{Left:} ailerons dataset.
    \textbf{Right:} o11 dataset.}
    \label{figure:l1:ERT:real:3}
\end{figure}

\begin{figure}[h!]
    \center
\begin{minipage}{0.48\columnwidth} \centering \includegraphics[width=\columnwidth]{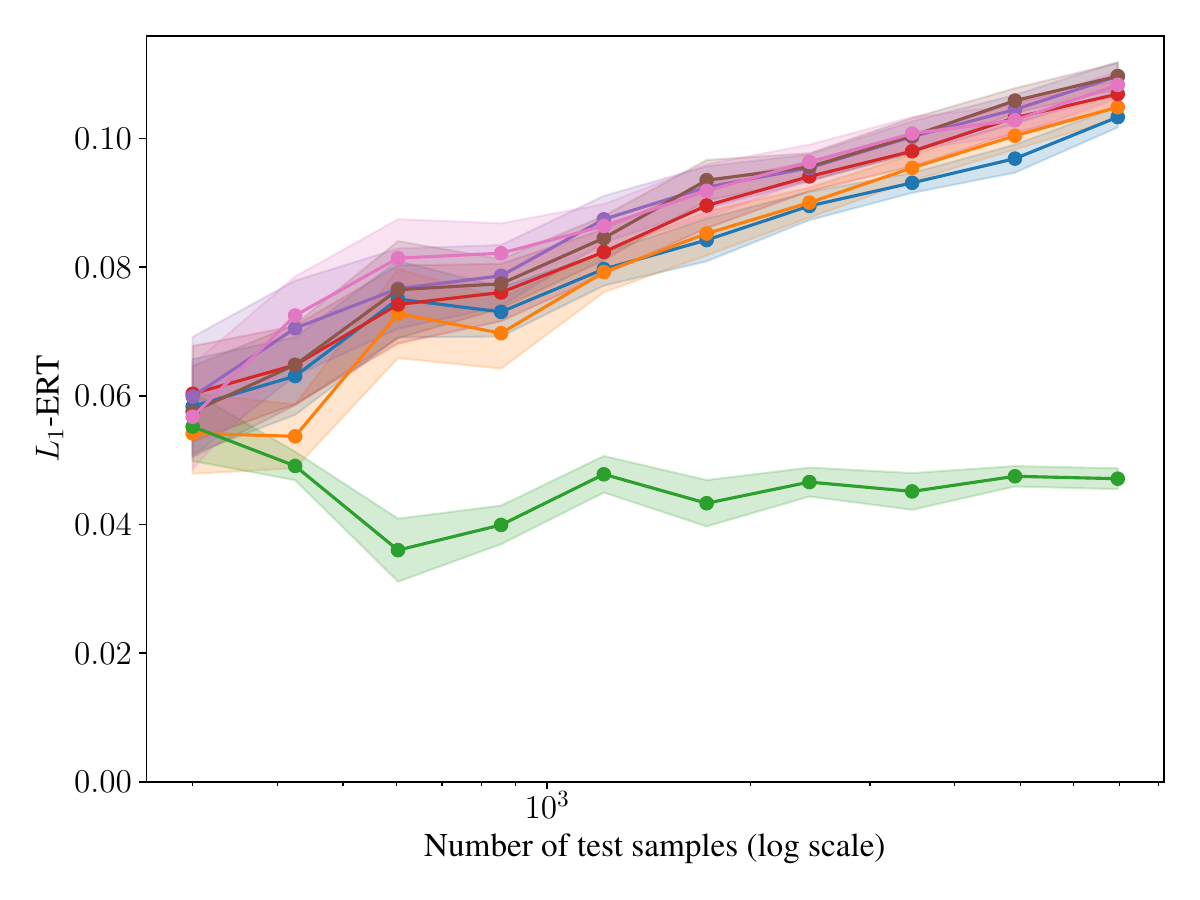}
\end{minipage}\hfill
\begin{minipage}{0.48\columnwidth} \centering \includegraphics[width=\columnwidth]{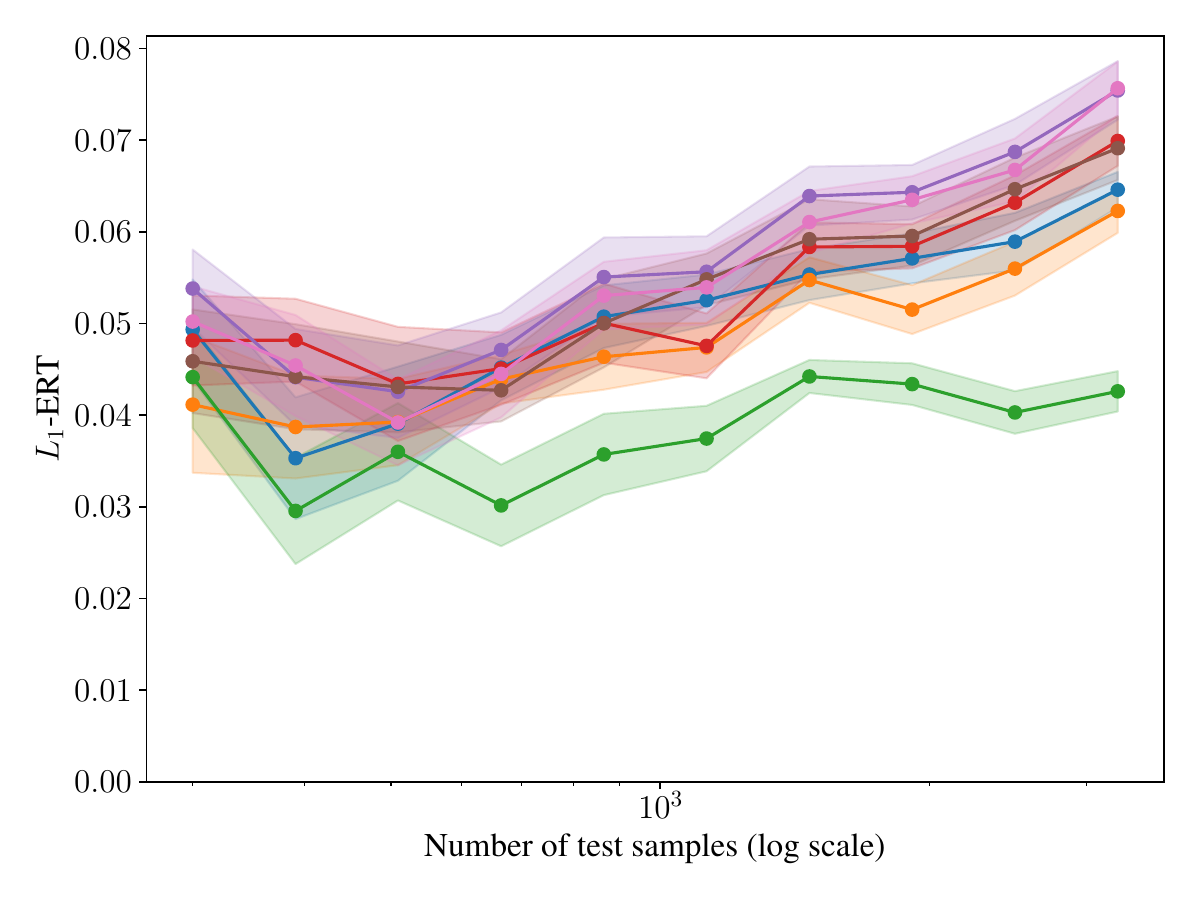}
\end{minipage}\hfill
    \caption{Illustration of the estimation of $L_1$-ERT for different classifiers as a number of sampled data available. Legend shared with \Cref{figure:l1:ERT:real:1}.
    \textbf{Left:} miami2016 dataset.
    \textbf{Right:} winequality dataset.}
    \label{figure:l1:ERT:real:4}
\end{figure}

\begin{figure}[h!]
\centering
\begin{minipage}{0.48\columnwidth}
\centering
\includegraphics[width=\linewidth]{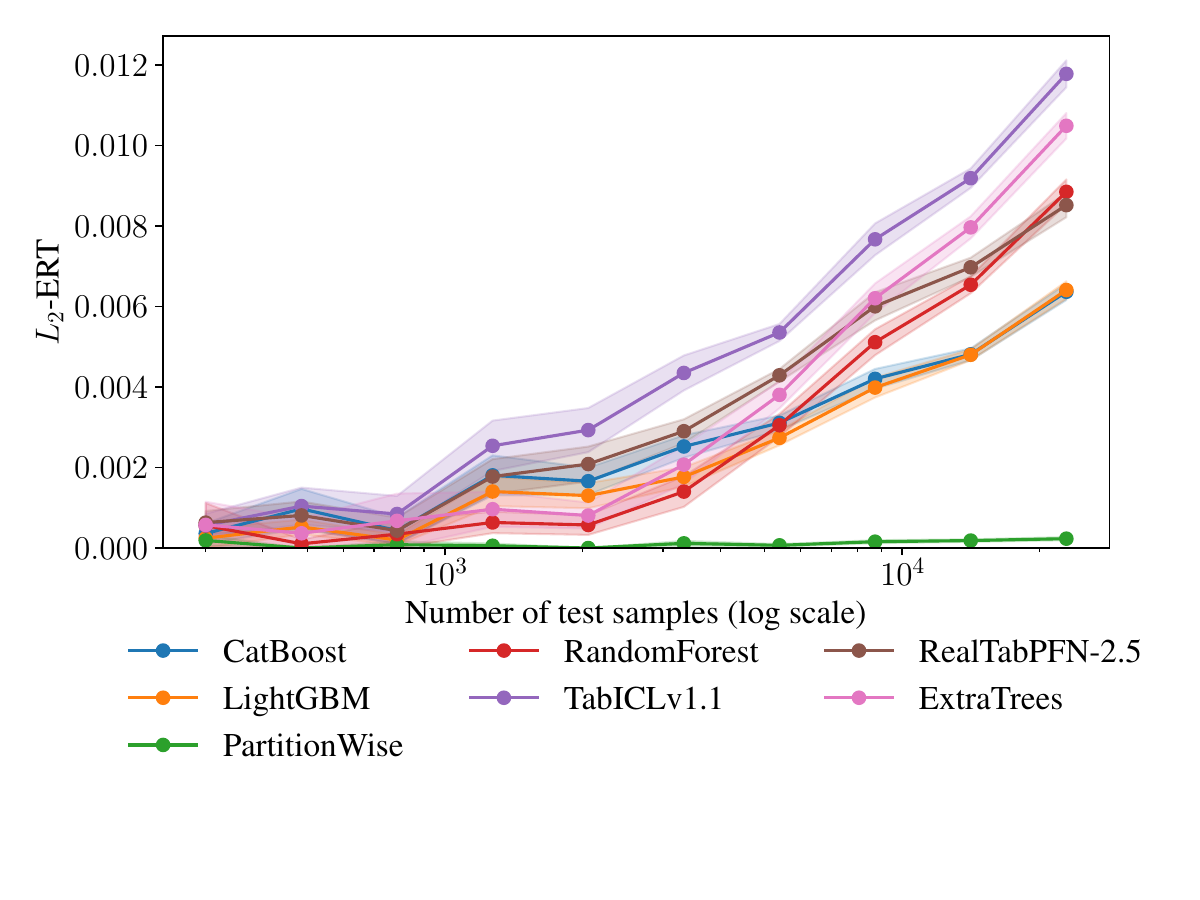}
\end{minipage}\hfill
\begin{minipage}{0.48\columnwidth}
\centering
\includegraphics[width=\linewidth]{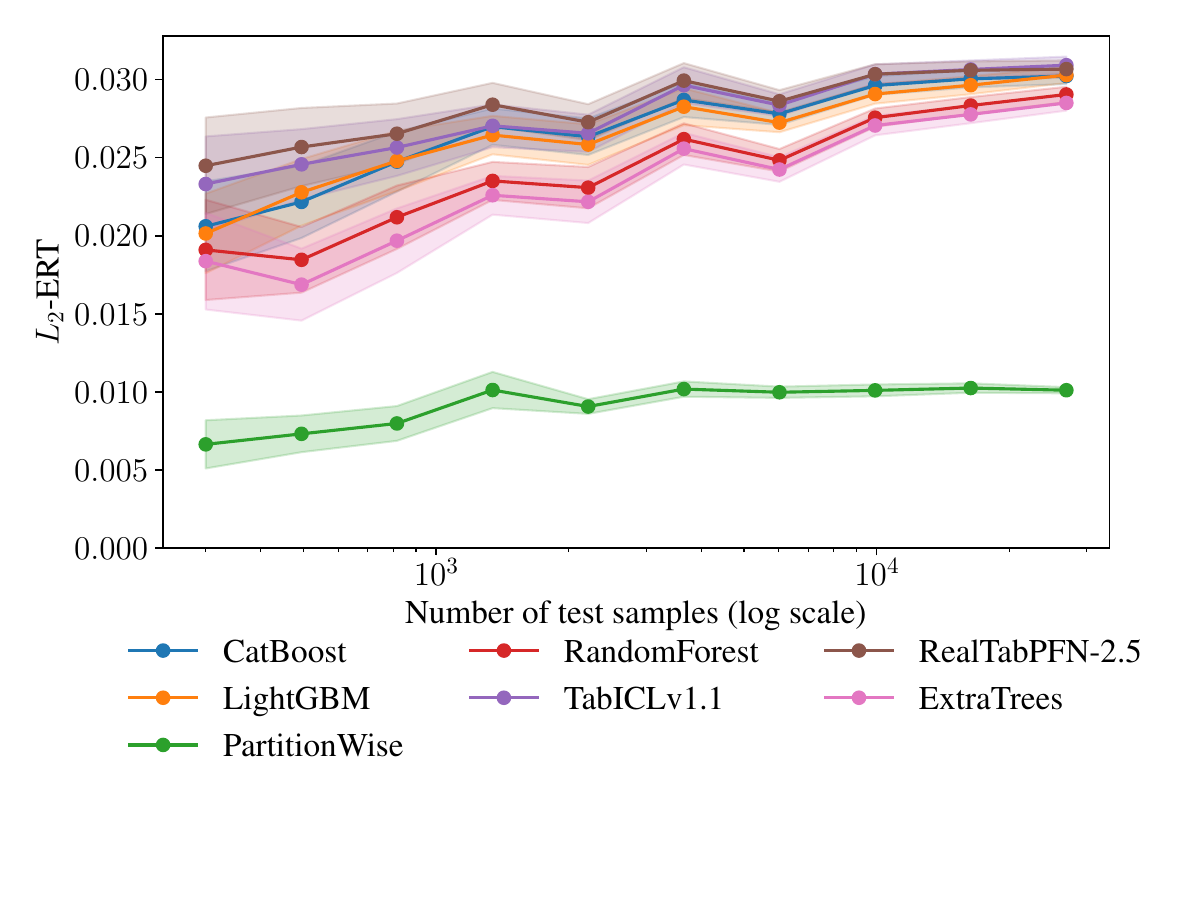}
\end{minipage}
\caption{Illustration of the estimation of $L_2$-ERT for different classifiers as a number of sampled data available.
    \textbf{Left:} physiochemical\_protein dataset
    \textbf{Right:} Diamonds dataset}
\label{figure:l2:ERT:real:1}
\end{figure}

\begin{figure}[h!]
    \centering
\begin{minipage}{0.48\columnwidth}
\centering
\includegraphics[width=\columnwidth]{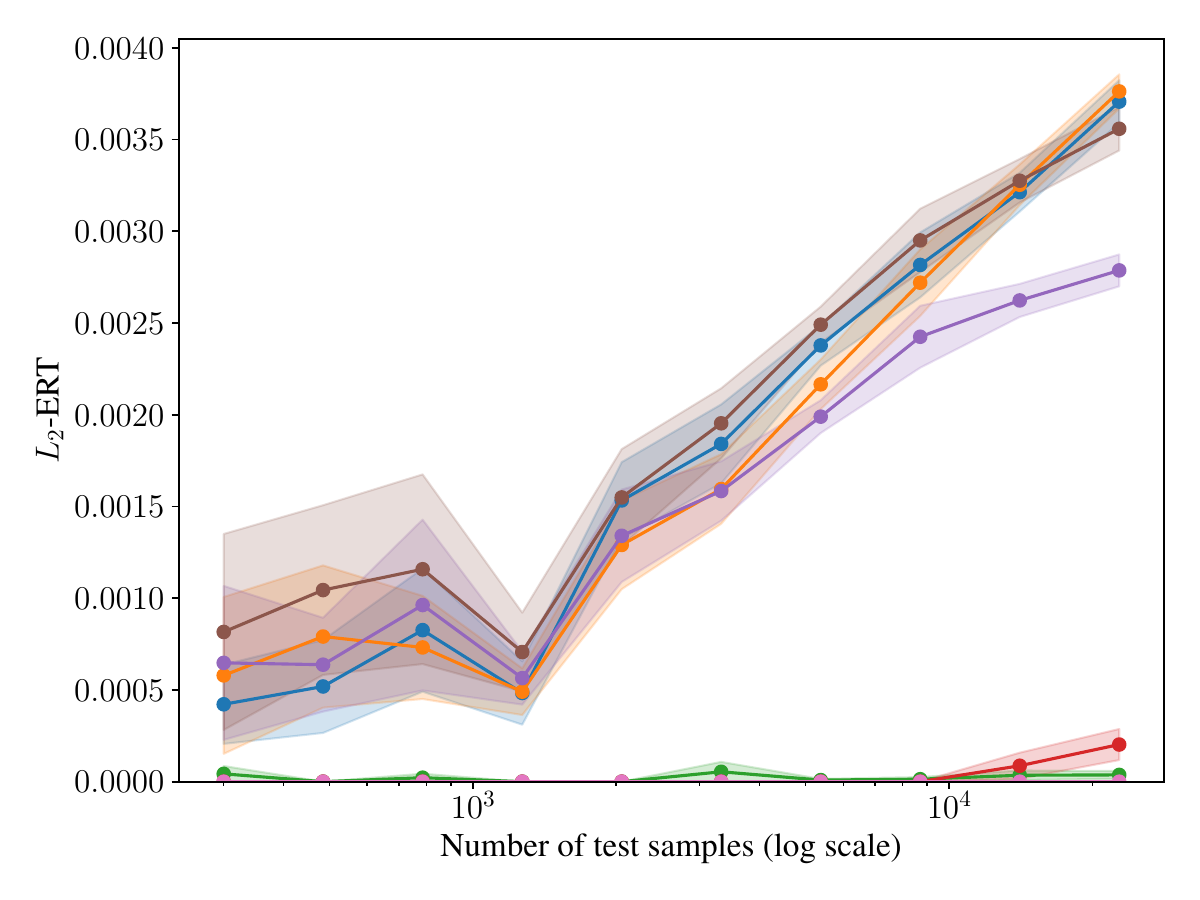}
\end{minipage}\hfill
\begin{minipage}{0.48\columnwidth}
\centering
\includegraphics[width=\columnwidth]{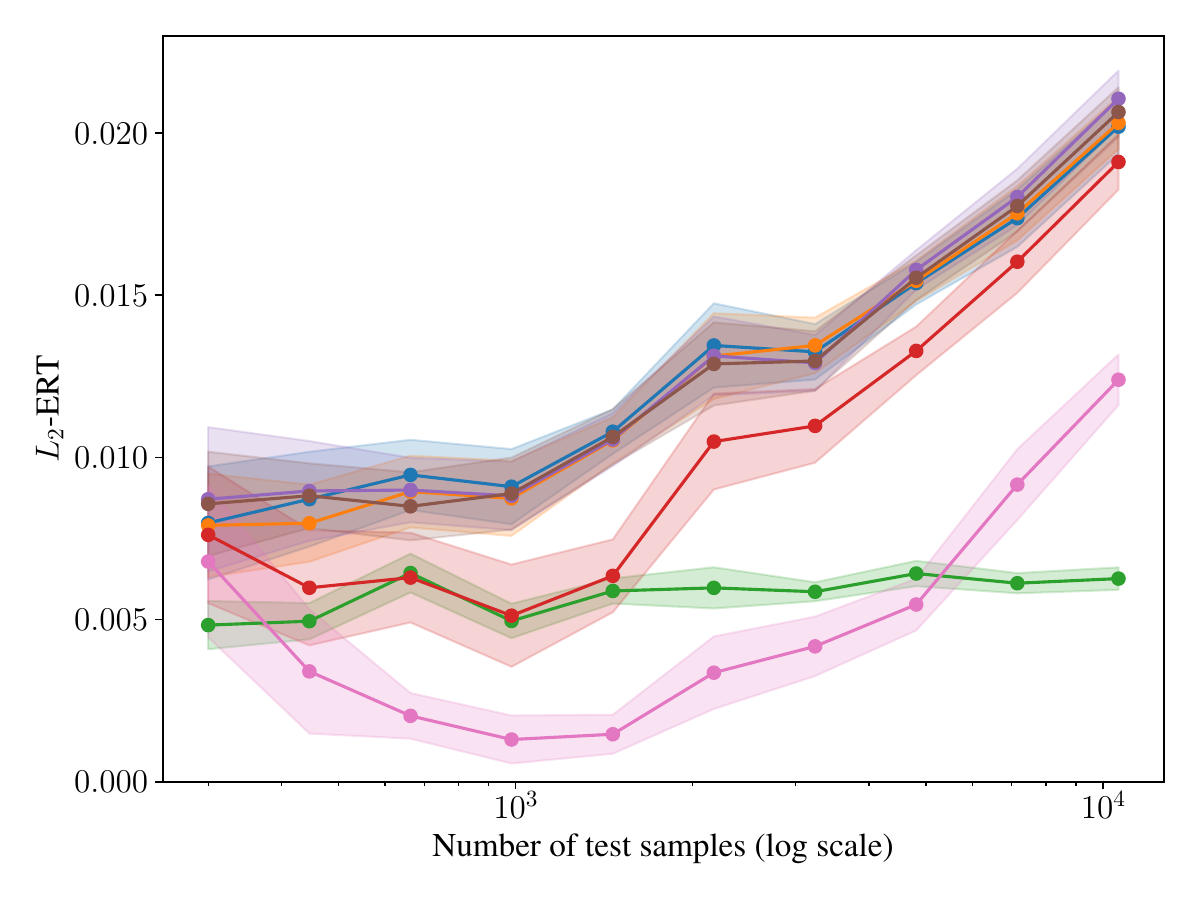} 
\end{minipage}
    \caption{Illustration of the estimation of $L_2$-ERT for different classifiers as a number of sampled data available. Legend shared with \Cref{figure:l2:ERT:real:1}.
    \textbf{Left:} Food\_Delivery\_Time dataset
    \textbf{Right:} Superconductivity dataset}
    \label{figure:l2:ERT:real:2}
\end{figure}

\begin{figure}[h!]
    \centering
\begin{minipage}{0.48\columnwidth}
\centering
\includegraphics[width=\columnwidth]{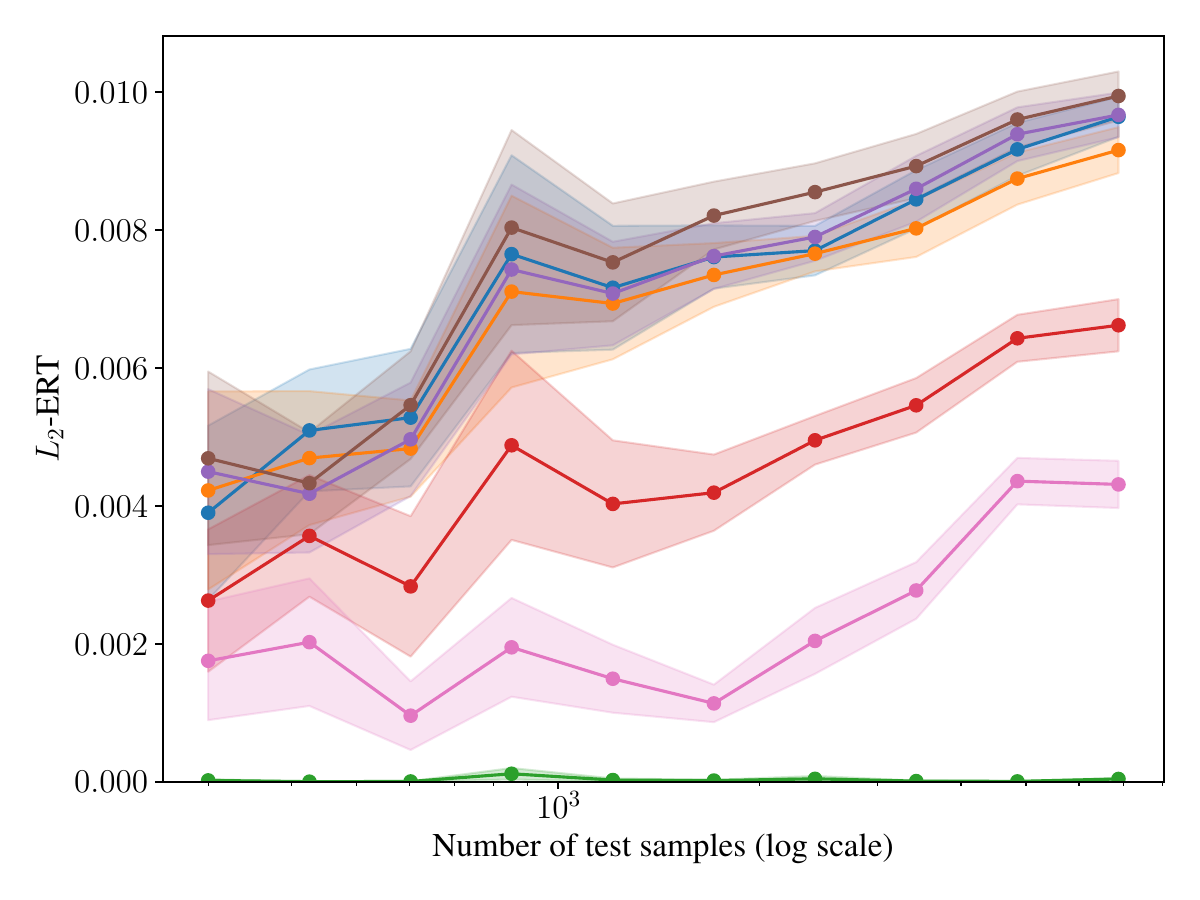}
\end{minipage}\hfill
\begin{minipage}{0.48\columnwidth}
\centering
\includegraphics[width=\columnwidth]{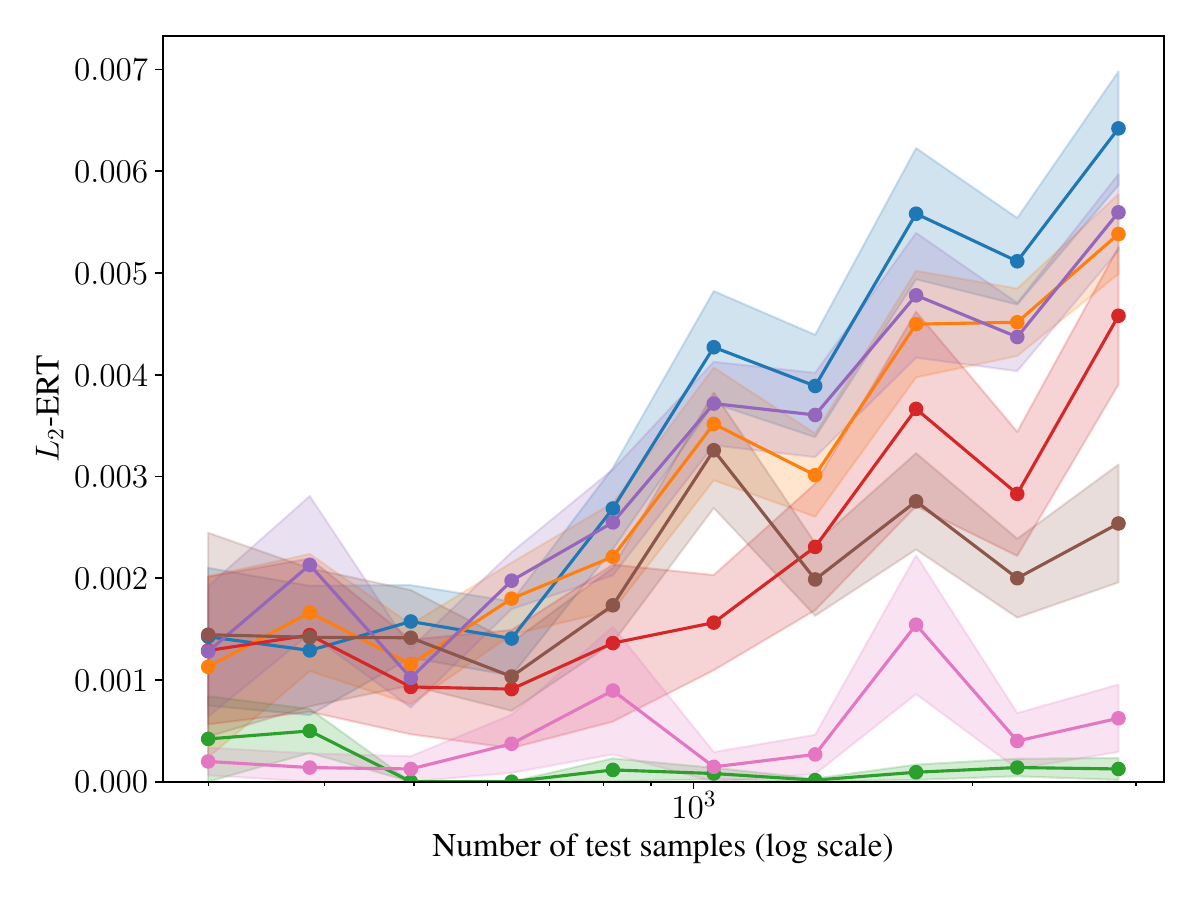} 
\end{minipage}
    \caption{Illustration of the estimation of $L_2$-ERT for different classifiers as a number of sampled data available. Legend shared with \Cref{figure:l2:ERT:real:1}.
    \textbf{Left:} ailerons dataset
    \textbf{Right:} o11 dataset}
    \label{figure:l2:ERT:real:3}
\end{figure}

\begin{figure}[h!]
    \centering
\begin{minipage}{0.48\columnwidth}
\centering
\includegraphics[width=\columnwidth]{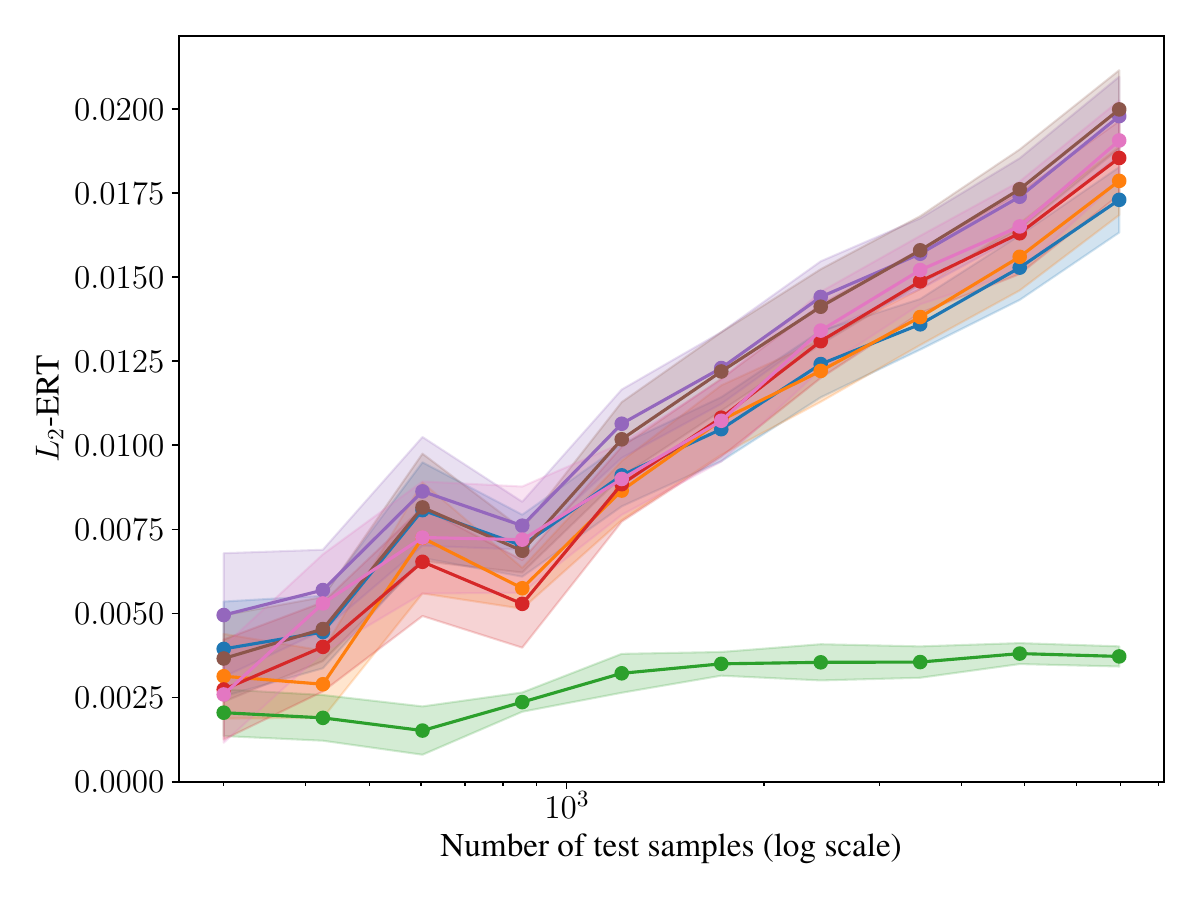}
\end{minipage}\hfill
\begin{minipage}{0.48\columnwidth}
\centering
\includegraphics[width=\columnwidth]{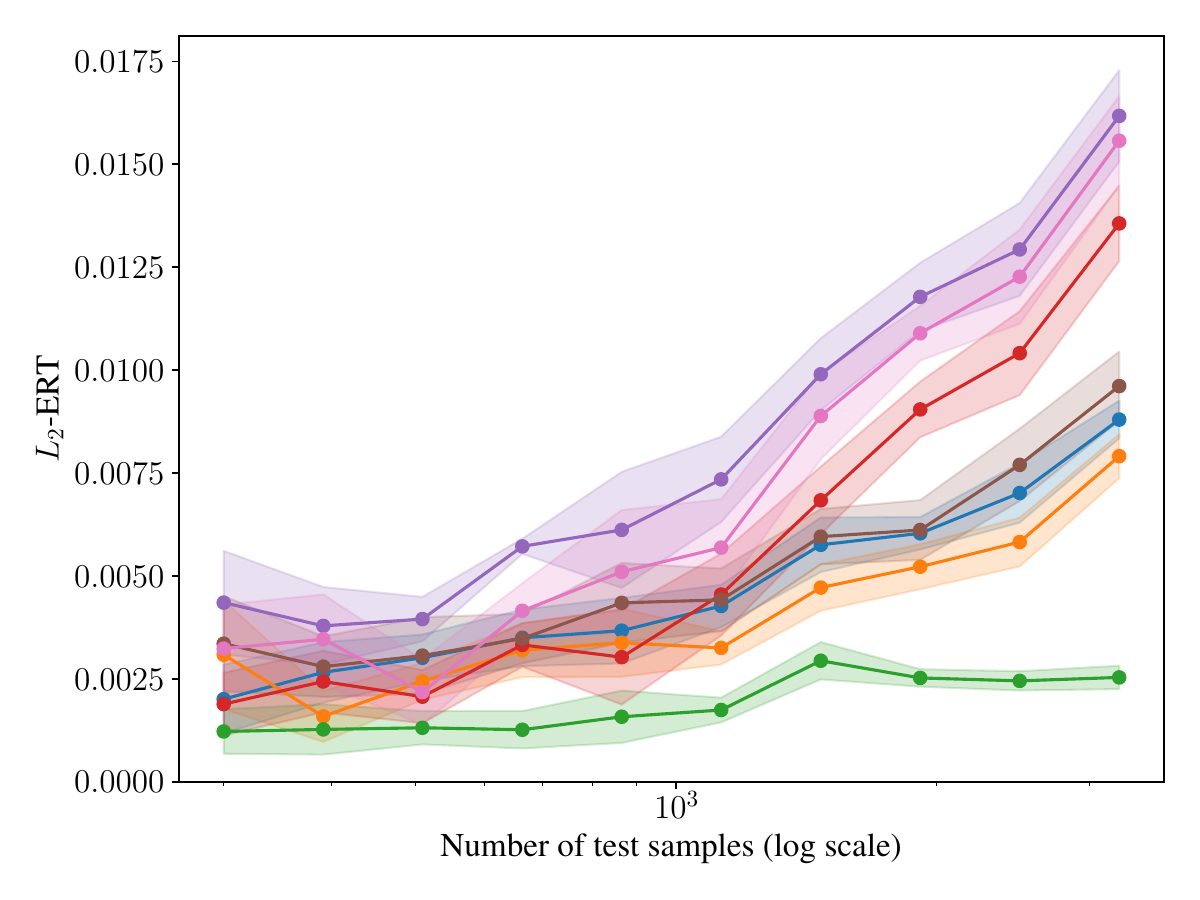} 
\end{minipage}
    \caption{Illustration of the estimation of $L_2$-ERT for different classifiers as a number of sampled data available. Legend shared with \Cref{figure:l2:ERT:real:1}.
    \textbf{Left:} miami2016 dataset
    \textbf{Right:} winequality dataset}
    \label{figure:l2:ERT:real:4}
\end{figure}

\begin{figure}[h!]
    \center
\begin{minipage}{0.48\columnwidth}
\centering
\includegraphics[width=\columnwidth]{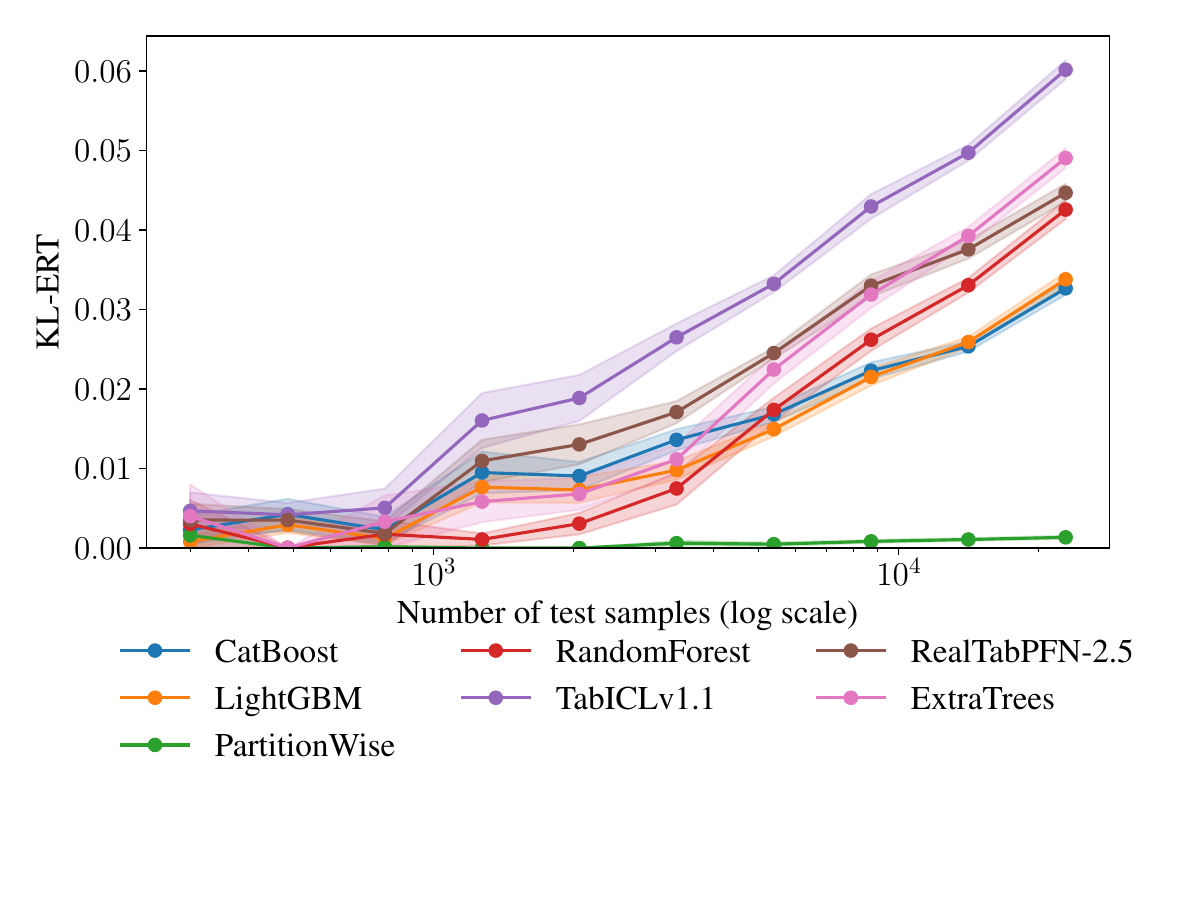}
\end{minipage}\hfill
\begin{minipage}{0.48\columnwidth}
\centering\includegraphics[width=\columnwidth]{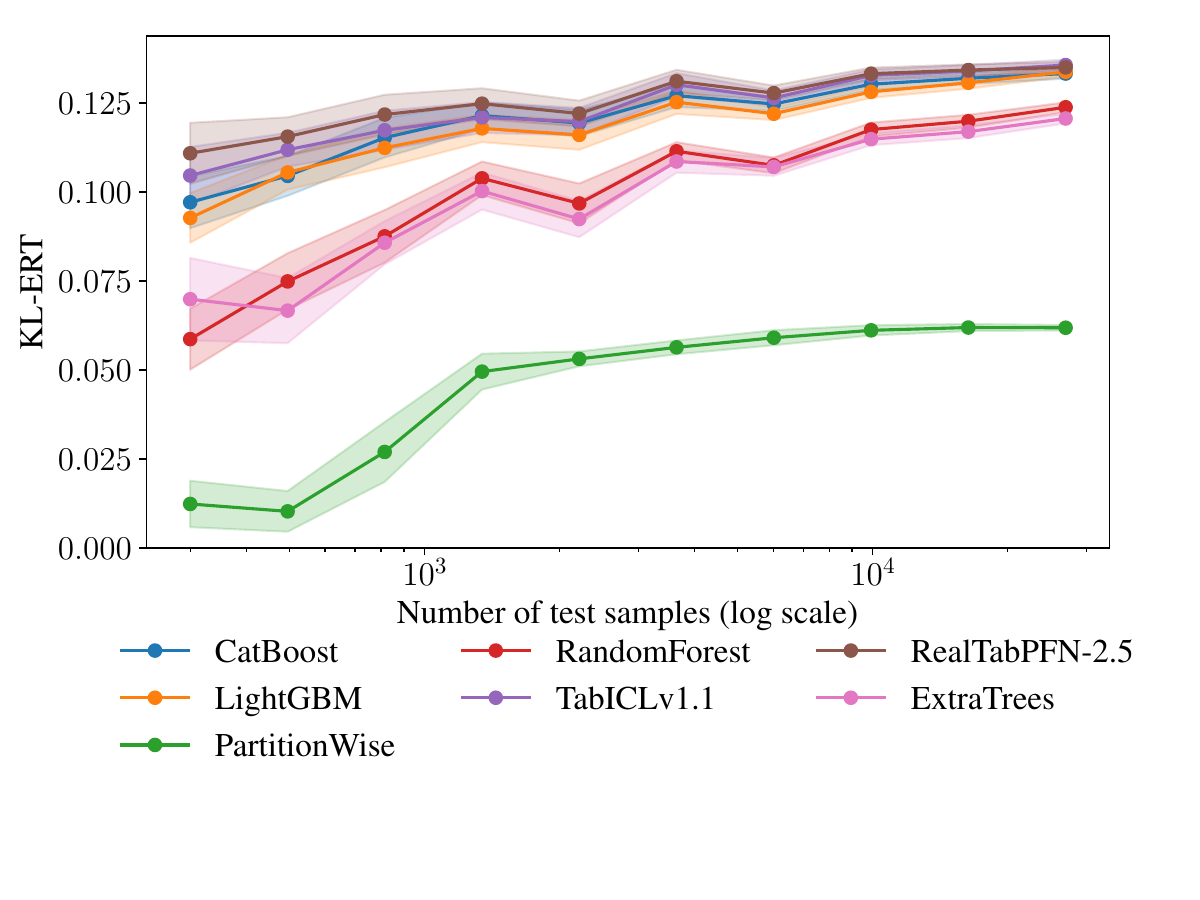}
\end{minipage}\hfill
    \caption{Illustration of the estimation of KL-ERT for different classifiers as a number of sampled data available.
    \textbf{Left:} physiochemical\_protein dataset
    \textbf{Right:} Diamonds dataset}
    \label{figure:KL:ERT:real:1}
\end{figure}

\begin{figure}[h!]
    \center
\begin{minipage}{0.48\columnwidth} \centering \includegraphics[width=\columnwidth]{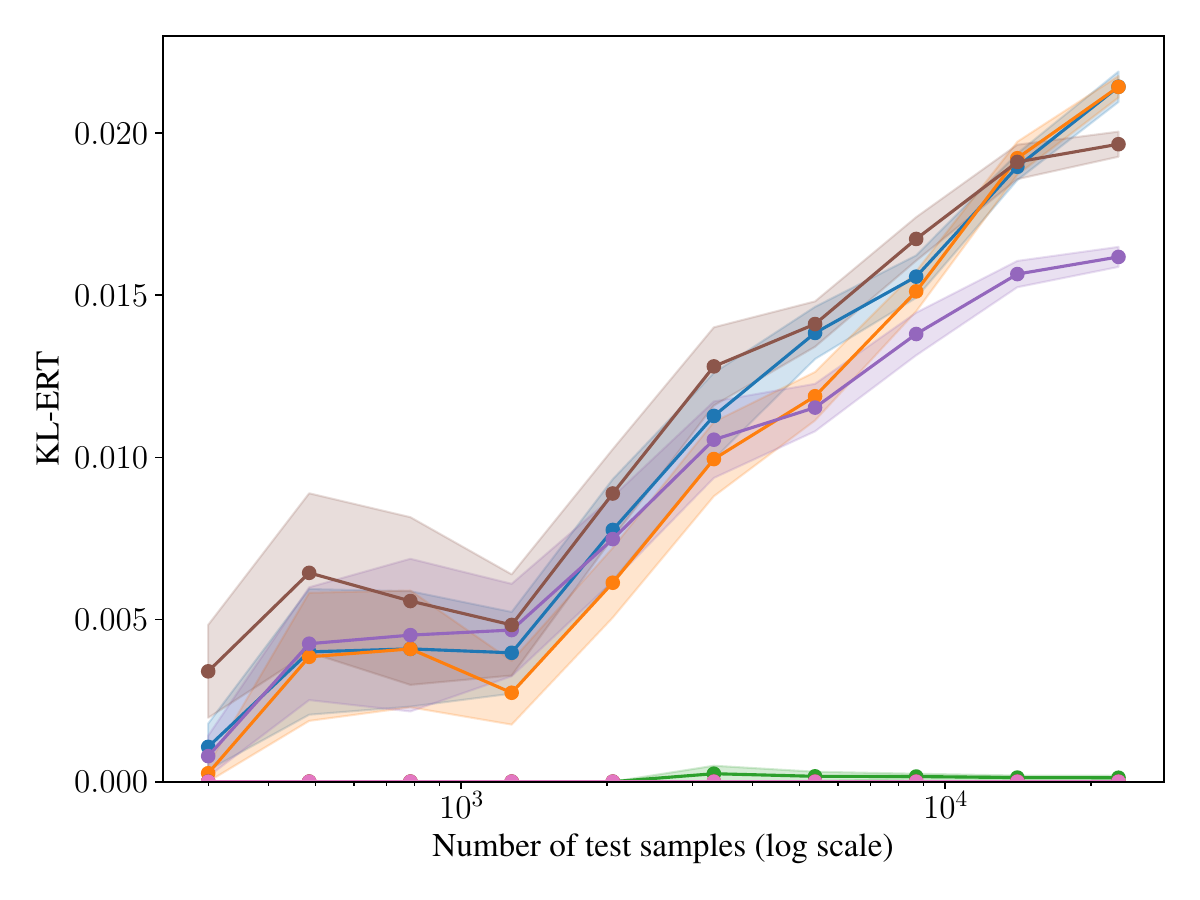}
\end{minipage}\hfill
\begin{minipage}{0.48\columnwidth} \centering \includegraphics[width=\columnwidth]{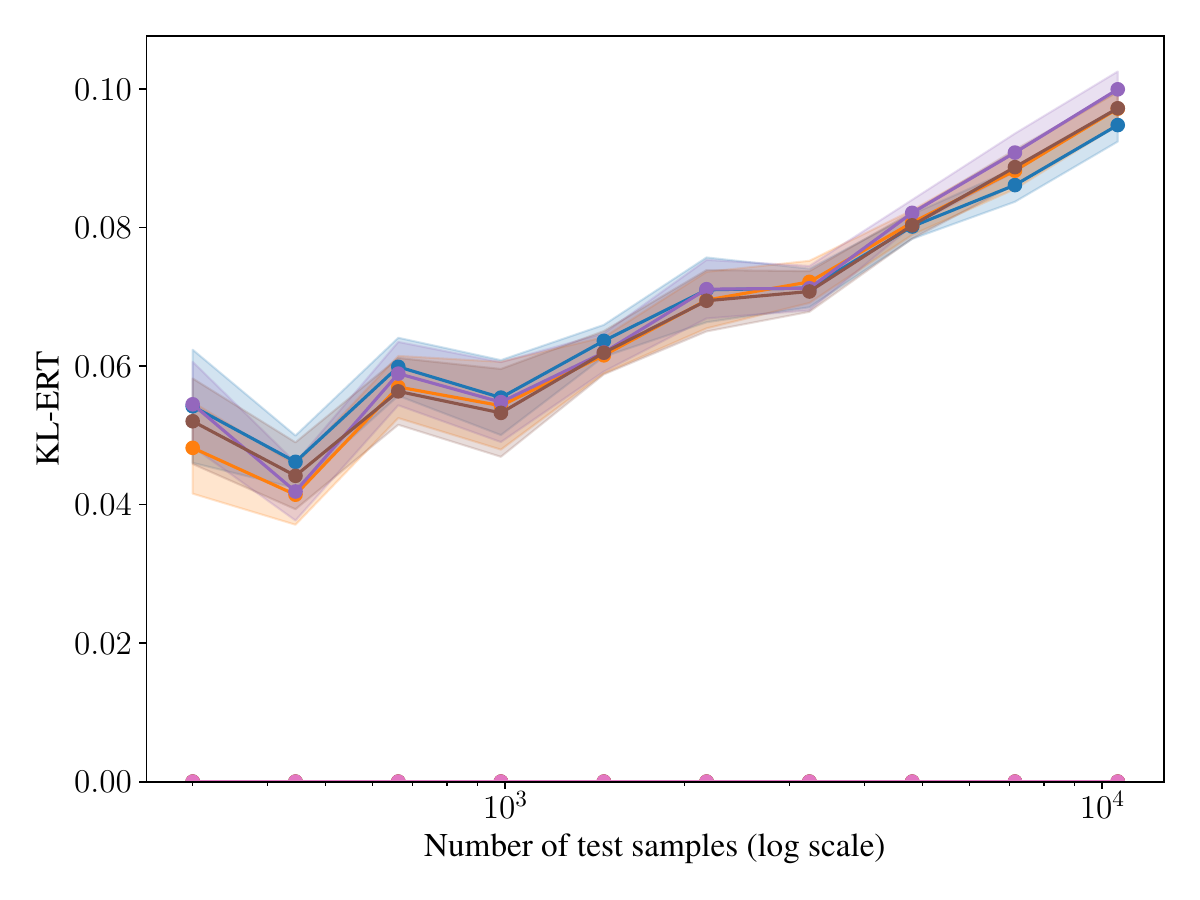}
\end{minipage}\hfill
    \caption{Illustration of the estimation of KL-ERT for different classifiers as a number of sampled data available. Legend shared with \Cref{figure:KL:ERT:real:1}.
    \textbf{Left:} Food\_Delivery\_Time dataset
    \textbf{Right:} Superconductivity dataset}
    \label{figure:KL:ERT:real:2}
\end{figure}

\begin{figure}[h!]
    \center
\begin{minipage}{0.48\columnwidth} \centering \includegraphics[width=\columnwidth]{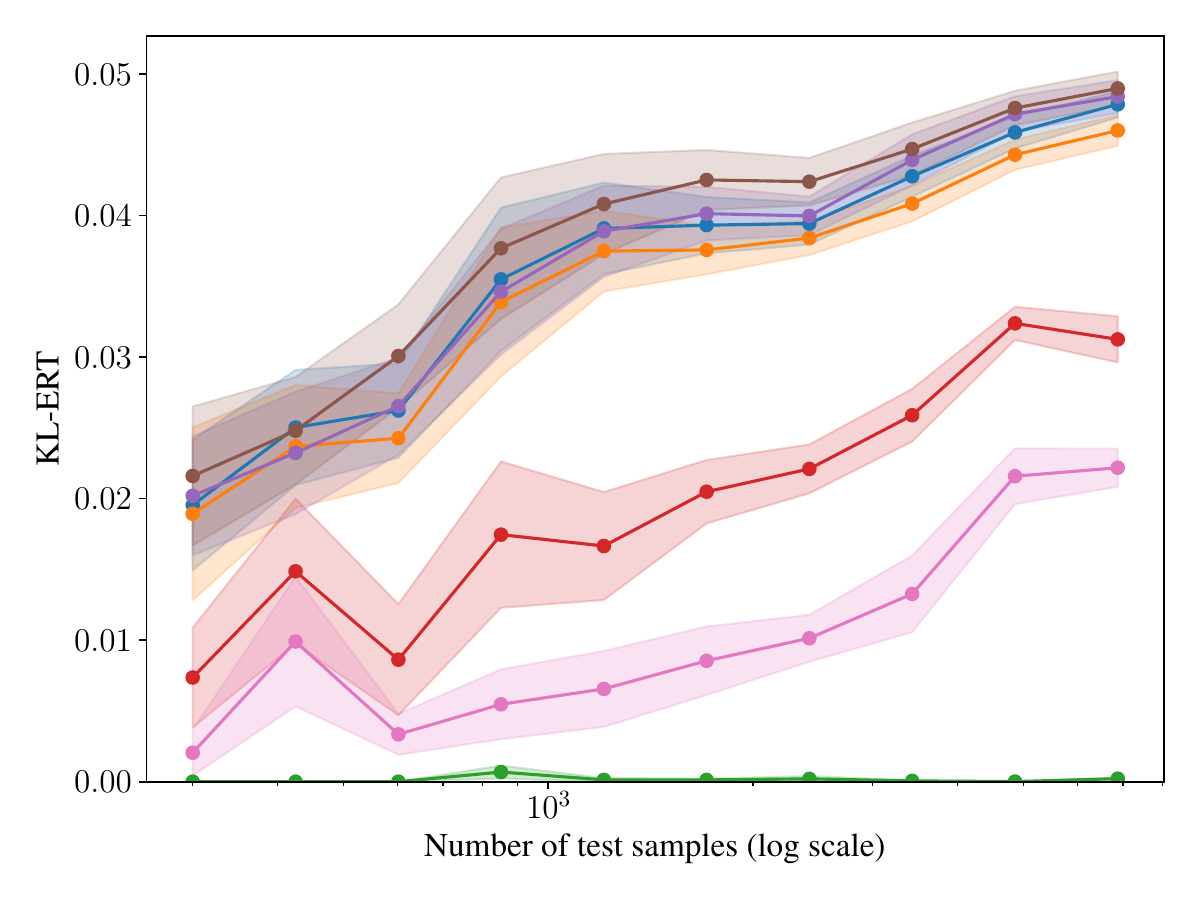}
\end{minipage}\hfill
\begin{minipage}{0.48\columnwidth} \centering \includegraphics[width=\columnwidth]{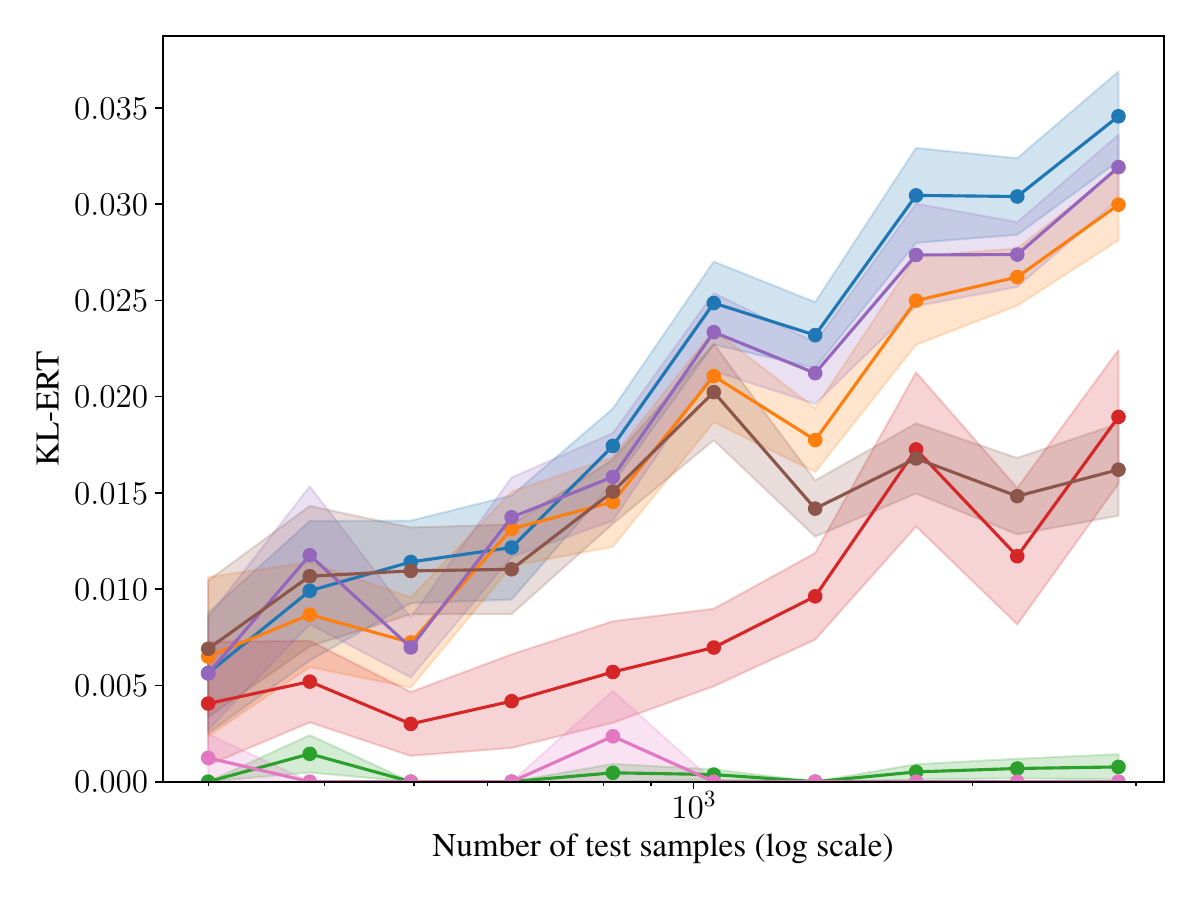}
\end{minipage}\hfill
    \caption{Illustration of the estimation of KL-ERT for different classifiers as a number of sampled data available. Legend shared with \Cref{figure:KL:ERT:real:1}.
    \textbf{Left:} ailerons dataset
    \textbf{Right:} o11 dataset}
    \label{figure:KL:ERT:real:3}
\end{figure}

\begin{figure}[h!]
    \center
\begin{minipage}{0.48\columnwidth} \centering \includegraphics[width=\columnwidth]{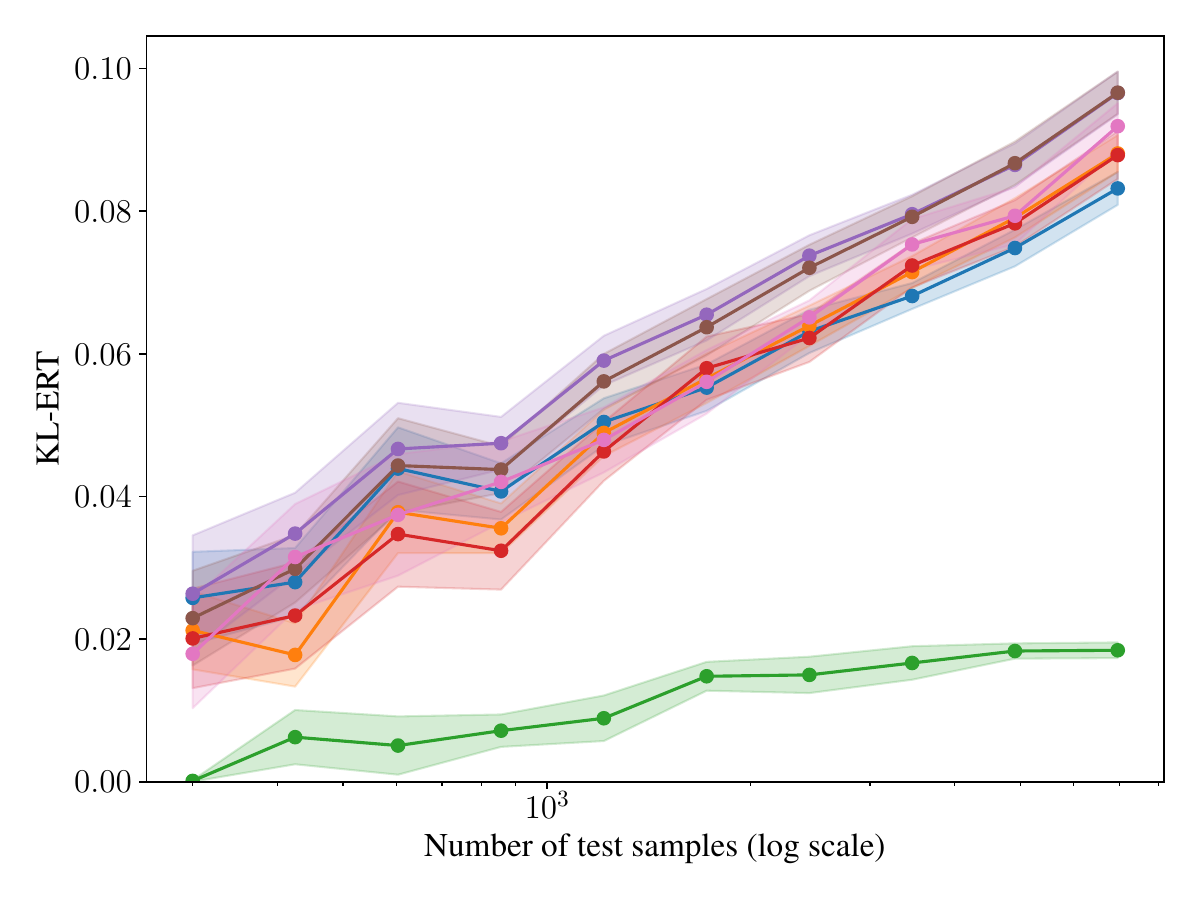}
\end{minipage}\hfill
\begin{minipage}{0.48\columnwidth} \centering \includegraphics[width=\columnwidth]{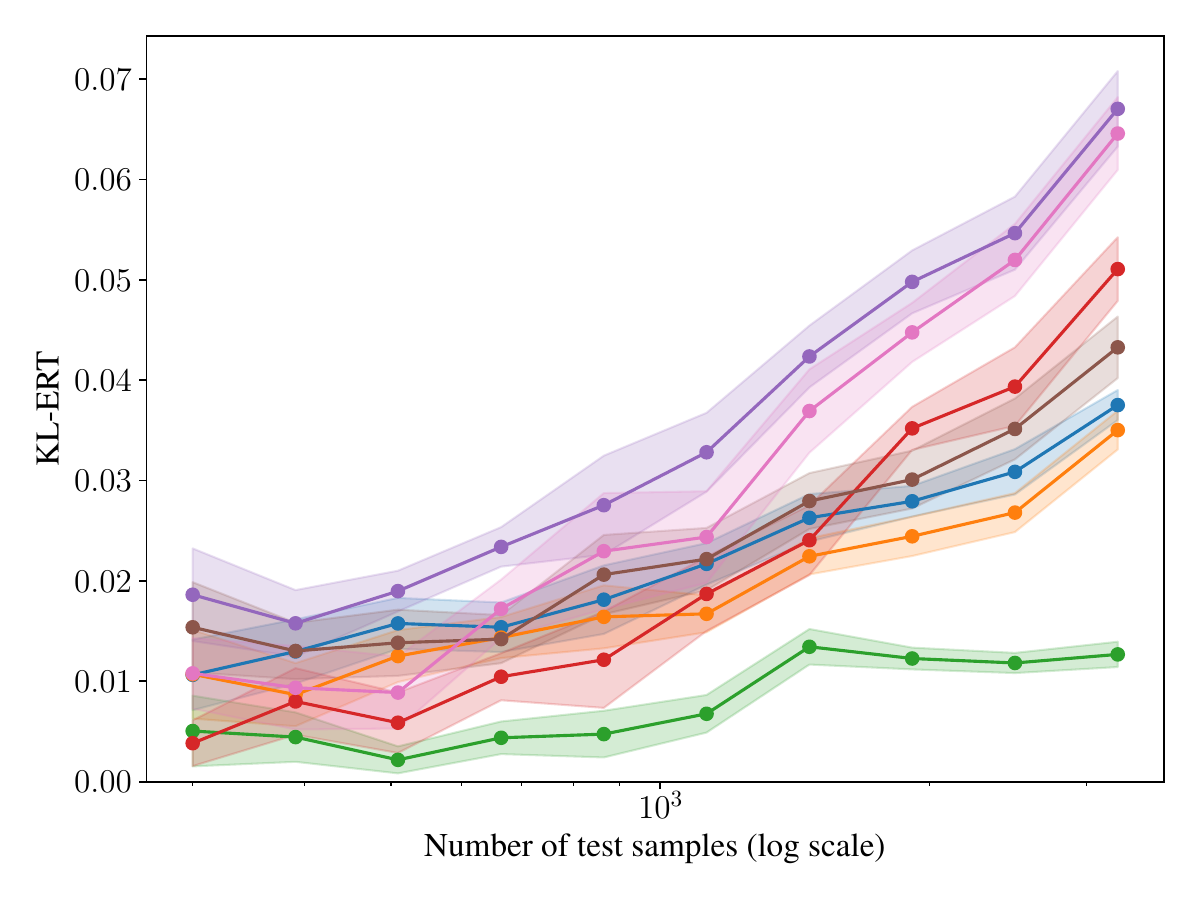}
\end{minipage}\hfill
    \caption{Illustration of the estimation of KL-ERT for different classifiers as a number of sampled data available. Legend shared with \Cref{figure:KL:ERT:real:1}.
    \textbf{Left:} miami2016 dataset
    \textbf{Right:} winequality dataset}
    \label{figure:KL:ERT:real:4}
\end{figure}

\clearpage
\section{Real datasets benchmark}
\label{app:benchmark}
\subsection{Regression}

\paragraph{Multivariate results.}

We present aggregated results across six datasets, with dataset specific information provided in Appendix~\ref{app:additional:experiments}. Each metric is averaged over all datasets, except for the volume for which we averaged the ranks. Results are available in Figures~\ref{figure:multi:l1:WSC} \& \ref{figure:multi:L2:vol} \& \ref{figure:pareto:multi}.

These results, however, must be interpreted with care. As shown in Figure~\ref{figure:pareto:multi}, improvements in conditional coverage often come hand-in-hand with larger prediction sets. For instance, the C-PCP \citep{dheur2025unified} method achieves one of the best conditional coverage across all metrics, but also produces one of the largest prediction intervals. Conversely, methods such as MVCS \citep{braun2025minimum} yield smaller predictive sets at the cost of poorer conditional coverage. This observation underscores a fundamental trade-off: strategies that aggressively minimize prediction volume while maintaining marginal coverage often sacrifice conditional coverage. Understanding and managing this trade-off is crucial for tailoring conformal prediction methods to specific applications. 

\begin{figure}[h!]
    \center
\begin{minipage}{0.48\columnwidth} \centering \includegraphics[width=\columnwidth]{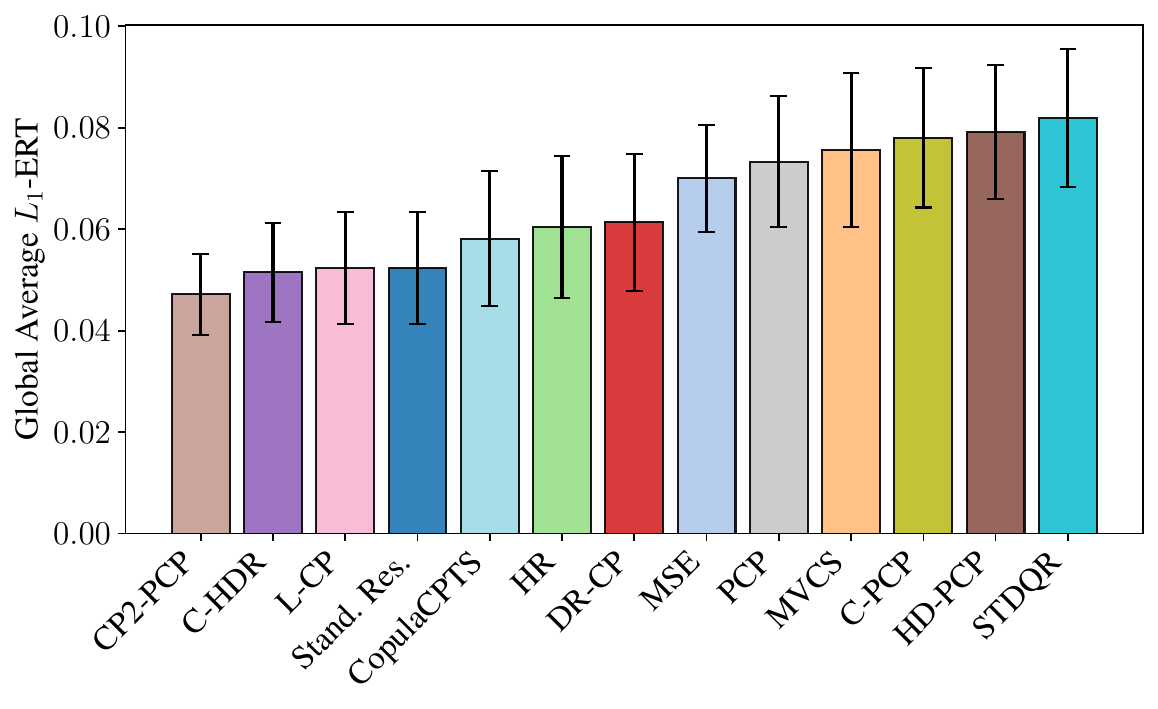}
\end{minipage}\hfill
\begin{minipage}{0.48\columnwidth} \centering \includegraphics[width=\columnwidth]{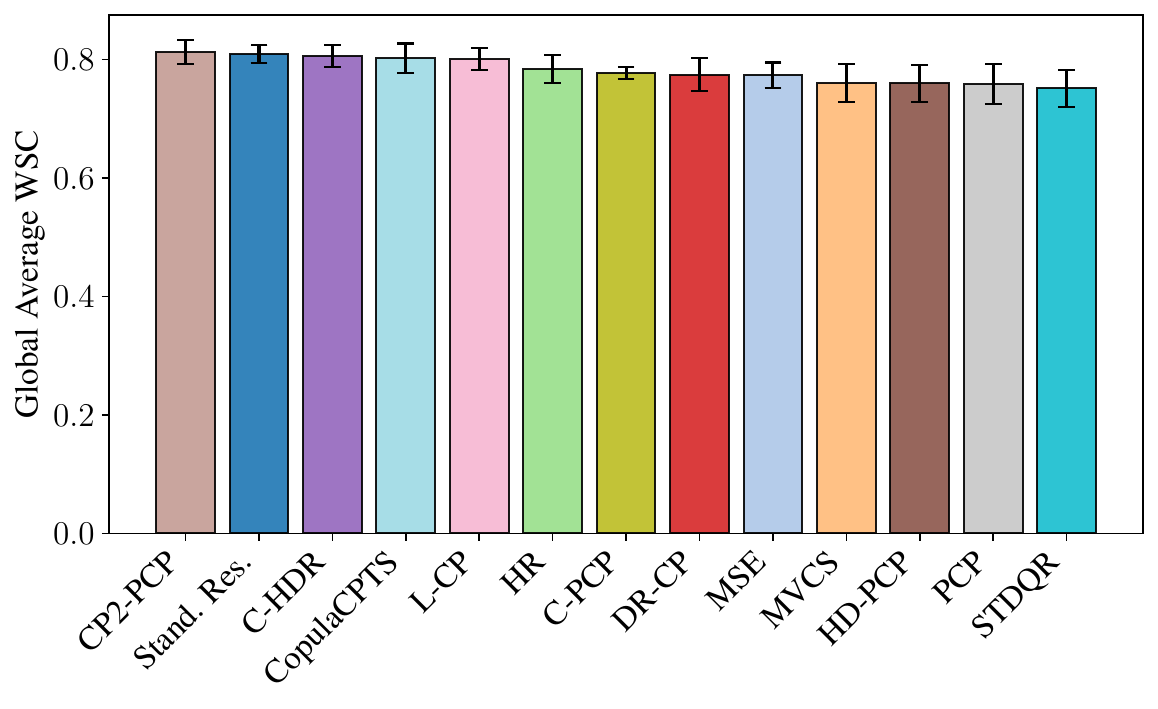}
\end{minipage}\hfill
    \vspace{-2mm}
    \caption{Metric values averaged across all datasets for all methods in multivariate regression. \textbf{Left:} $L_1$-ERT (lower is better). \textbf{Right:} WSC (closer to $0.9$ is better)}
    \label{figure:multi:l1:WSC}
\end{figure}

\begin{figure}[h!]
    \center
\begin{minipage}{0.48\columnwidth} \centering \includegraphics[width=\columnwidth]{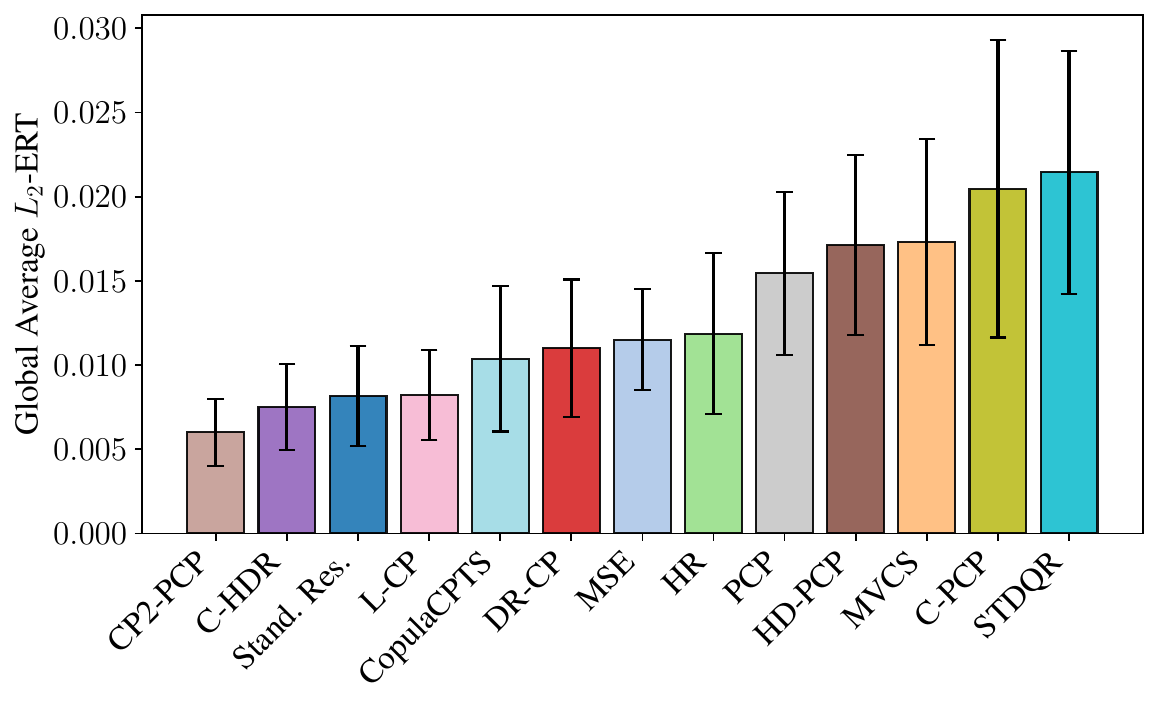}
\end{minipage}\hfill
\begin{minipage}{0.48\columnwidth} \centering \includegraphics[width=\columnwidth]{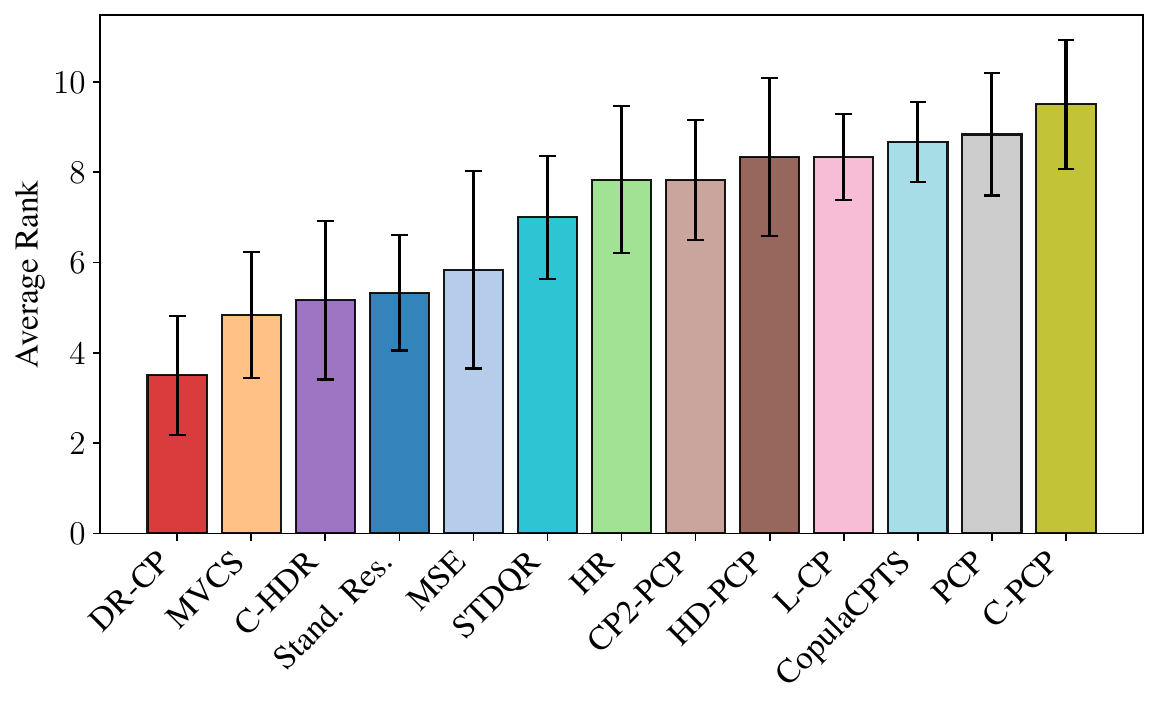}
\end{minipage}\hfill
    \vspace{-2mm}
    \caption{Metric values averaged across all datasets for all methods in multivariate regression. \textbf{Left:} $L_2$-ERT (lower is better). \textbf{Right:} Normalized set sizes averaged all datasets in multivariate regression, where the normalization is done by dividing each volume by the smallest volume across all methods (smaller is better)}
    \label{figure:multi:L2:vol}
\end{figure}

\begin{figure}[h!]
\center
\includegraphics[width=0.6\columnwidth]{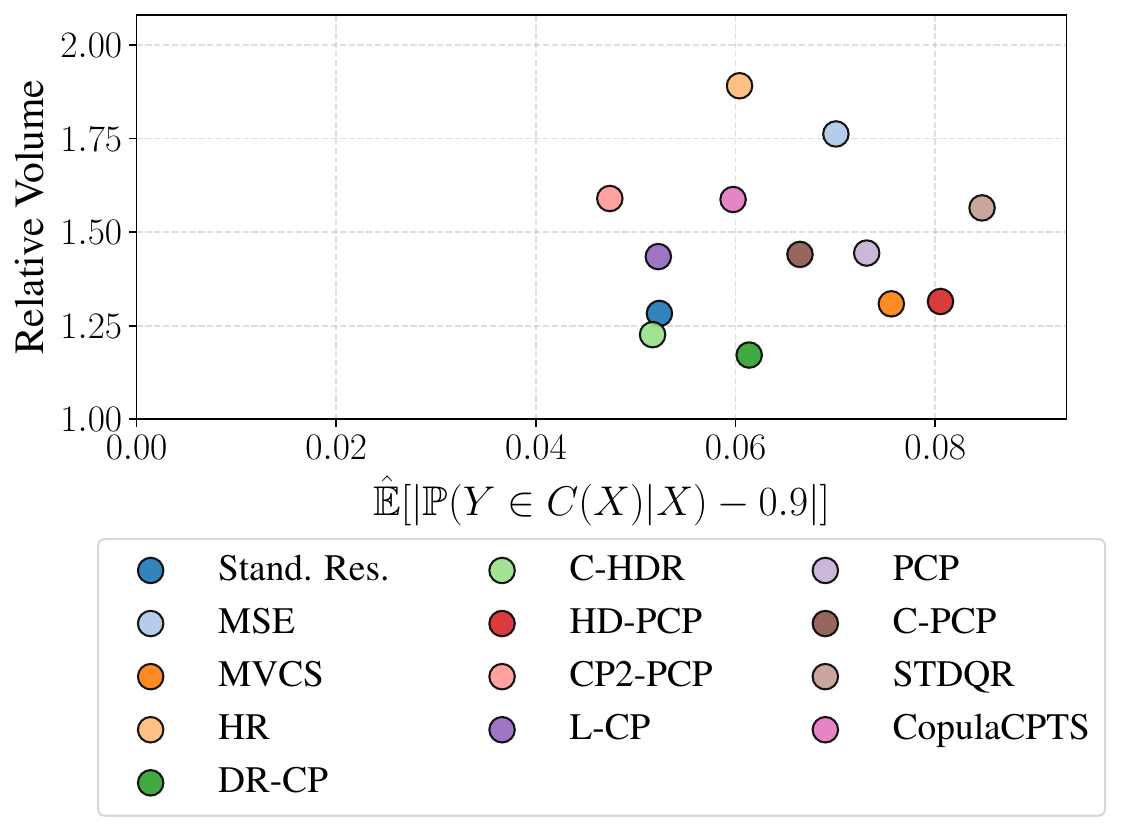}
\vspace{-2mm}
\caption{Comparison of conditional coverage deviation and normalized prediction set volumes across various conformal prediction strategies ($\alpha=0.1$) for multi dimensional responses datasets. Volumes are scaled by the power $1/d$ and normalized by the minimum observed volume. The $x$-axis displays the $L_1$-ERT metric, which estimates the conditional coverage deviation $\mathbb{E}[|\mathbb{P}(Y \in C(X) \mid X) - (1-\alpha)|]$.}
\label{figure:pareto:multi}
\end{figure}

\paragraph{Uni-variate regression.}
We show the same results for the uni-variate datasets, with similar results in \mbox{Figures~\ref{figure:1D:l1:WSC} \& \ref{figure:1D:L2:vol} \& \ref{figure:pareto:1D}}.

\begin{figure}[h!]
    \center
\begin{minipage}{0.48\columnwidth} \centering \includegraphics[width=\columnwidth]{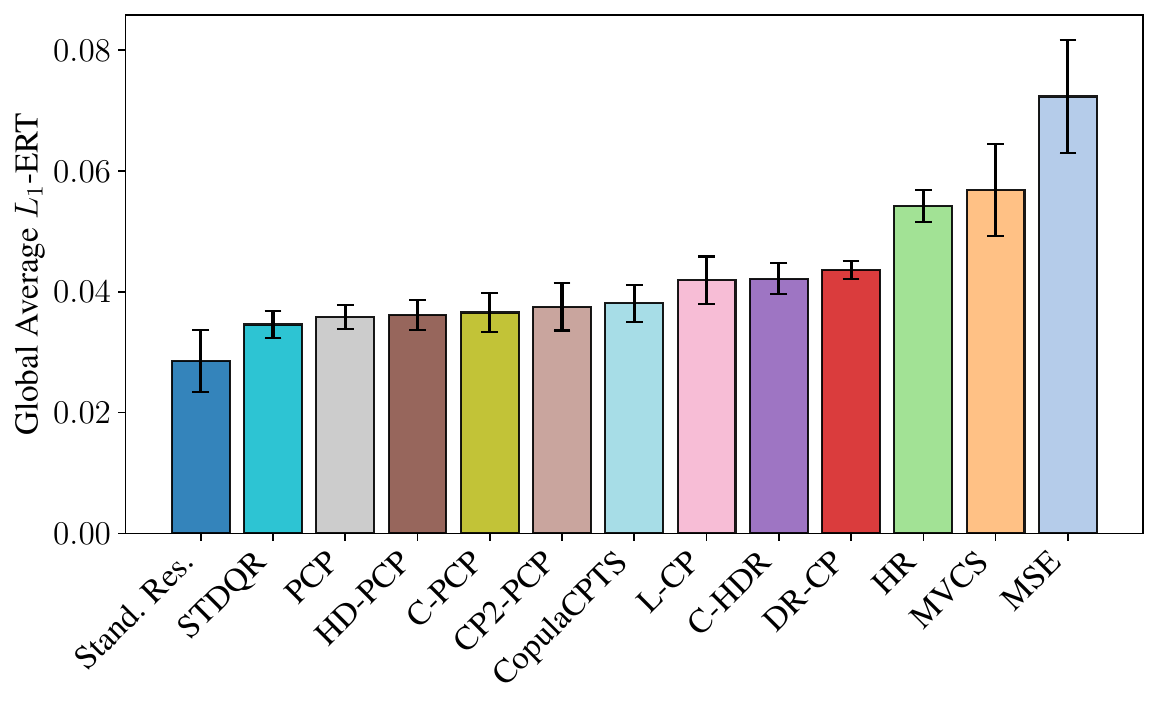}
\end{minipage}\hfill
\begin{minipage}{0.48\columnwidth} \centering \includegraphics[width=\columnwidth]{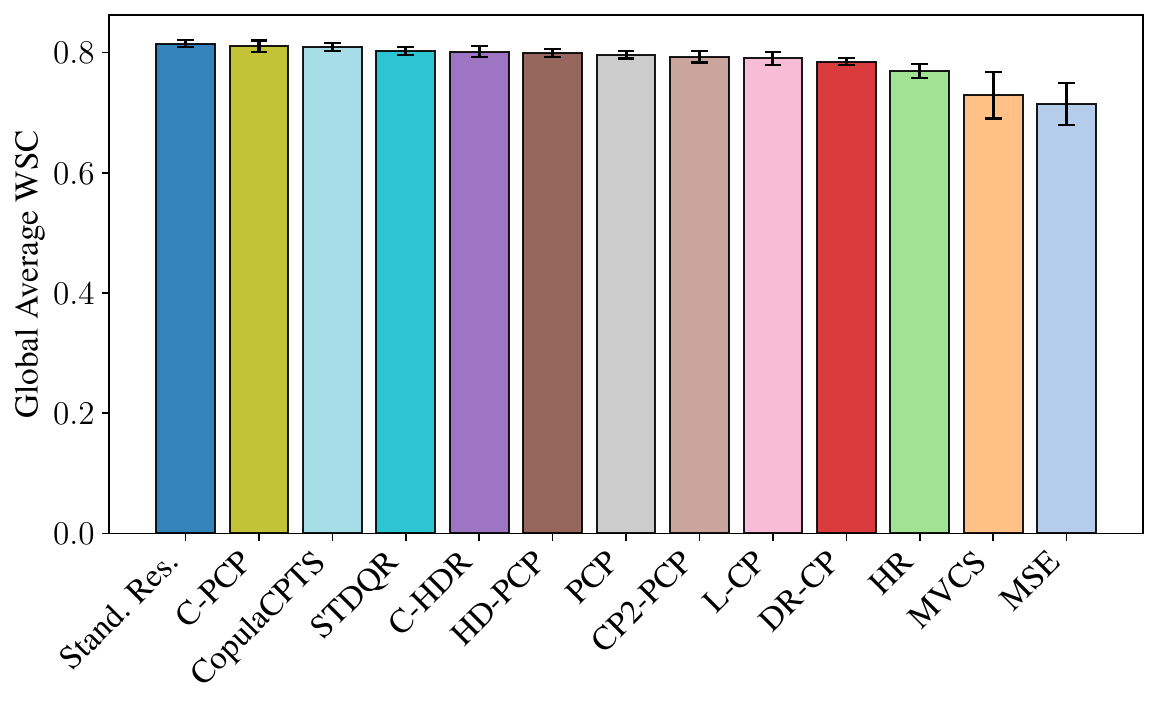}
\end{minipage}\hfill
    \vspace{-2mm}
    \caption{Metric values averaged across all datasets for all methods in uni-variate regression. \textbf{Left:} $L_1$-ERT (lower is better). \textbf{Right:} WSC (closer to $0.9$ is better)}
    \label{figure:1D:l1:WSC}
\end{figure}

\begin{figure}[h!]
    \center
\begin{minipage}{0.48\columnwidth} \centering \includegraphics[width=\columnwidth]{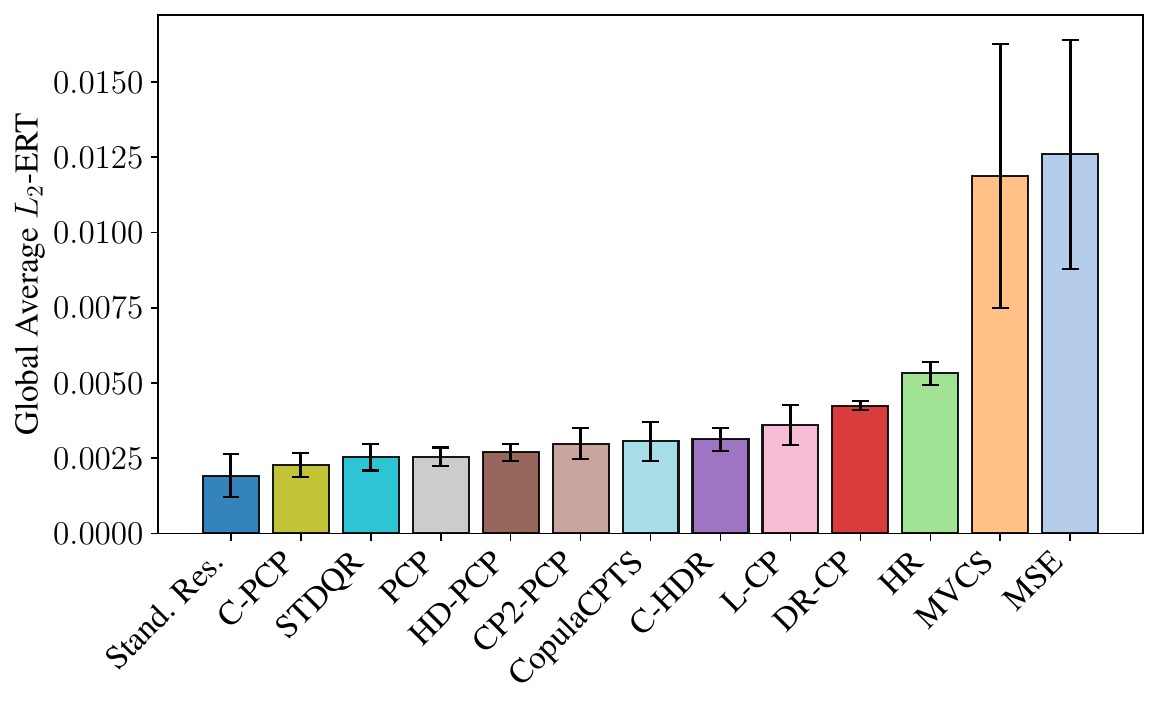}
\end{minipage}\hfill
\begin{minipage}{0.48\columnwidth} \centering \includegraphics[width=\columnwidth]{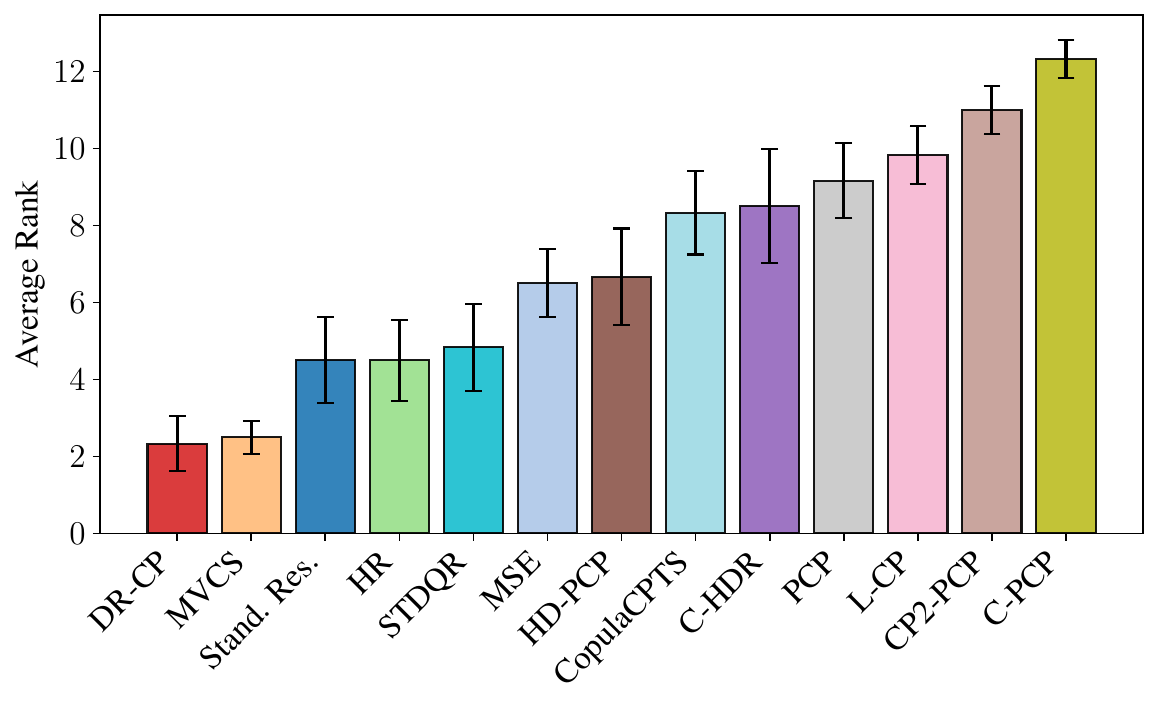}
\end{minipage}\hfill
    \vspace{-2mm}
    \caption{Metric values averaged across all datasets for all methods in univariate regression. \textbf{Left:} $L_2$-ERT (lower is better). \textbf{Right:} Normalized set sizes averaged all datasets in multivariate regression, where the normalization is done by dividing each volume by the smallest volume across all methods (smaller is better)}
    \label{figure:1D:L2:vol}
\end{figure}

\begin{figure}[h!]
\center
\includegraphics[width=0.6\columnwidth]{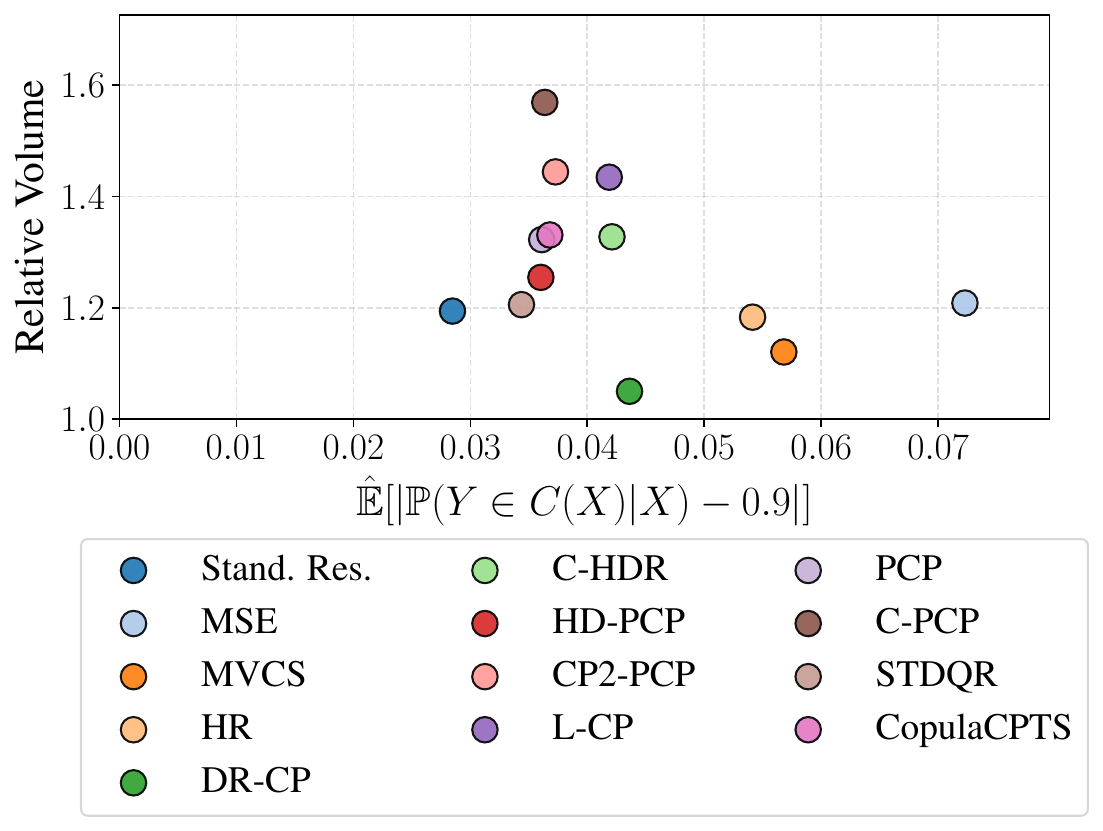}
\vspace{-2mm}
\caption{Comparison of conditional coverage deviation and normalized prediction set volumes across various conformal prediction strategies ($\alpha=0.1$) for uni-variate responses datasets. Volumes are scaled by the power $1/d$ and normalized by the minimum observed volume. The $x$-axis displays the $L_1$-ERT metric, which estimates the conditional coverage deviation $\mathbb{E}[|\mathbb{P}(Y \in C(X) \mid X) - (1-\alpha)|]$.}
\label{figure:pareto:1D}
\end{figure}

\clearpage
\subsection{Classification}

In classification, most of the strategies are commonly tailored to specific problems such as long-tailed classification, so we choose to only compare the two most used conformal prediction strategies for classification, given a predictive model that returns probability estimates $\hat f(X)\in\Delta_d$. The first one is the negative likelihood prediction \citep{sadinle2019least} and uses the score $S(X,Y)=-p(X)_Y$. The second one \citep{romano2020classification, angelopoulos2020uncertainty} uses the cumulative likelihood scores. We first define the permutation 
$\pi(x)$ of $\{1, \ldots, K\}$ that sorts the probabilities in decreasing order, i.e.,
\[
\hat{f}_{\pi_1(x)}(x) \geq \hat{f}_{\pi_2(x)}(x) \geq \cdots \geq \hat{f}_{\pi_K(x)}(x).
\]
Then, the score function is defined as:
\[
S(X, Y) = \sum_{j=1}^{\pi_k(X)} \hat{f}_{\pi_j(X)}(X),
\quad \text{where } Y = \pi_k(X).
\]

  For MNIST, FashionMNIST, and CIFAR, we trained a CNN composed of two convolutional layers followed by max pooling, then two fully connected layers with dropout and ReLU activations. The model was trained with cross entropy loss to learn $\hat{f}$. We used early stopping when the accuracy fell below $1-\alpha$, since otherwise both conformal strategies tend to produce many empty sets, which would make the results uninformative.

For the CIFAR100 experiment, we trained a ResNet model with cross-entropy loss to learn $\hat{f}$. To learn the classifier for ERT, we re-used this pretrained model, but replaced its final layer with a new one. This avoided the cost of learning a large feature space from scratch.

We report the ERT values in \Cref{tab:scores:classif}. For the classification problem, both strategies remain far from conditional. In general, we believe that calibrating the predictors leads to better conditional coverage. Interestingly for CIFAR100, the $L_1$-ERT and the KL-ERT lead to two different conclusions: the former suggests that the likelihood strategy is more conditional than the cumulative one, while the latter suggests the opposite. We attribute this discrepancy to the larger number of empty predictive sets produced by the likelihood strategy, for which the conditional coverage equals zero, that are weighted differently by the KL than the $L_1$. This is supported by the analysis of under-coverage and over-coverage. This situation happens more frequently under the likelihood strategy. As a consequence, this strategy yields a larger value of $\textrm{KL}_-$-ERT than $\textrm{KL}_+$-ERT, since the KL divergence assigns more weight to such extreme situations.

\begin{table}[h!]
\centering
\caption{ERT scores obtained for the classification problems.}
\begin{tabular}{llcccc}
\toprule
Dataset & Method & $L_1$-ERT & KL-ERT & $\textrm{KL}_+$-ERT & $\textrm{KL}_-$-ERT \\
\midrule
\multirow{2}{*}{CIFAR10} & cumulative & $0.072_{0.005}$ & $-0.017_{0.008}$ & $-0.030_{0.005}$ & $0.012_{0.006}$ \\
 & likelihood & $0.016_{0.002}$ & $0.028_{0.006}$ & $0.007_{0.001}$ & $0.022_{0.007}$ \\
 \midrule
\multirow{2}{*}{CIFAR100} & cumulative & $0.041_{0.005}$ & $0.191_{0.024}$ & $0.016_{0.008}$ & $0.175_{0.026}$ \\
 & likelihood & $0.007_{0.003}$ & $0.409_{0.025}$ & $0.085_{0.022}$ & $0.323_{0.020}$ \\
 \midrule
\multirow{2}{*}{FashionMNIST} & cumulative & $0.165_{0.004}$ & $-0.260_{0.014}$ & $-0.185_{0.010}$ & $-0.075_{0.004}$ \\
 & likelihood & $0.098_{0.005}$ & $-0.068_{0.008}$ & $-0.042_{0.006}$ & $-0.026_{0.005}$ \\
 \midrule
\multirow{2}{*}{MNIST} & cumulative & $0.150_{0.006}$ & $-0.216_{0.017}$ & $-0.159_{0.012}$ & $-0.057_{0.006}$ \\
 & likelihood & $0.145_{0.003}$ & $-0.187_{0.007}$ & $-0.128_{0.005}$ & $-0.059_{0.002}$ \\
\bottomrule
\end{tabular}
\label{tab:scores:classif}
\end{table}

\clearpage

\subsection{Results per datasets}

\paragraph{Uni-variate regression.}

\centering
\begin{tabular}{l l c c c c c}
\toprule
Dataset & Method & $L_1$-ERT & $L_2$-ERT & WSC & Size & Time/1k \\
\midrule
\multirow{13}{*}{\rotatebox{90}{ailerons}} & Stand. Res. & $\mathbf{0.0179_{0.0063}}$ & $\mathbf{0.0004_{0.0005}}$ & $\mathbf{0.823_{0.010}}$ & $\underline{1.35_{0.02}}$ & $0.005_{0.001}$ \\
  & MSE & $0.0455_{0.0048}$ & $0.0034_{0.0006}$ & $0.786_{0.015}$ & $1.39_{0.04}$ & $0.209_{0.050}$ \\
  & MVCS & $\underline{0.0311_{0.0049}}$ & $\underline{0.0015_{0.0007}}$ & $0.809_{0.011}$ & $1.37_{0.03}$ & $0.050_{0.010}$ \\
  & HR & $0.0500_{0.0053}$ & $0.0038_{0.0010}$ & $0.787_{0.022}$ & $1.50_{0.04}$ & $\mathbf{0.001_{0.000}}$ \\
  & DR-CP & $0.0494_{0.0139}$ & $0.0041_{0.0023}$ & $0.785_{0.030}$ & $\mathbf{1.21_{0.37}}$ & $0.005_{0.001}$ \\
  & C-HDR & $0.0450_{0.0157}$ & $0.0033_{0.0015}$ & $0.803_{0.022}$ & $1.45_{0.34}$ & $28.300_{22.902}$ \\
  & HD-PCP & $0.0405_{0.0146}$ & $0.0028_{0.0020}$ & $0.789_{0.033}$ & $1.51_{0.17}$ & $36.439_{26.716}$ \\
  & CP2-PCP & $0.0468_{0.0154}$ & $0.0041_{0.0017}$ & $0.794_{0.050}$ & $1.61_{0.14}$ & $65.635_{42.957}$ \\
  & L-CP & $0.0396_{0.0160}$ & $0.0029_{0.0020}$ & $0.796_{0.022}$ & $1.75_{0.54}$ & $\underline{0.002_{0.000}}$ \\
  & PCP & $0.0426_{0.0182}$ & $0.0033_{0.0024}$ & $0.791_{0.036}$ & $1.57_{0.19}$ & $27.889_{23.093}$ \\
  & C-PCP & $0.0390_{0.0164}$ & $0.0025_{0.0015}$ & $\underline{0.811_{0.031}}$ & $1.87_{0.21}$ & $85.880_{69.424}$ \\
  & STDQR & $0.0319_{0.0142}$ & $0.0021_{0.0014}$ & $0.804_{0.023}$ & $1.52_{0.21}$ & $31.947_{21.252}$ \\
  & CopulaCPTS & $0.0327_{0.0129}$ & $0.0022_{0.0011}$ & $0.811_{0.032}$ & $1.69_{0.55}$ & $36.448_{27.553}$ \\
\midrule
\multirow{13}{*}{\rotatebox{90}{bank8FM}} & Stand. Res. & $\mathbf{0.0138_{0.0095}}$ & $\mathbf{0.0006_{0.0006}}$ & $\underline{0.800_{0.013}}$ & $0.91_{0.05}$ & $0.006_{0.001}$ \\
  & MSE & $0.0975_{0.0110}$ & $0.0279_{0.0044}$ & $0.572_{0.034}$ & $0.81_{0.03}$ & $0.098_{0.020}$ \\
  & MVCS & $0.0589_{0.0113}$ & $0.0099_{0.0035}$ & $0.700_{0.029}$ & $\mathbf{0.71_{0.03}}$ & $0.049_{0.009}$ \\
  & HR & $0.0528_{0.0107}$ & $0.0054_{0.0017}$ & $0.726_{0.023}$ & $0.80_{0.03}$ & $\mathbf{0.003_{0.000}}$ \\
  & DR-CP & $0.0450_{0.0084}$ & $0.0044_{0.0018}$ & $0.780_{0.029}$ & $\underline{0.77_{0.06}}$ & $0.007_{0.001}$ \\
  & C-HDR & $0.0430_{0.0065}$ & $0.0032_{0.0016}$ & $0.796_{0.028}$ & $1.09_{0.16}$ & $12.659_{3.011}$ \\
  & HD-PCP & $0.0355_{0.0092}$ & $\underline{0.0023_{0.0016}}$ & $0.795_{0.022}$ & $0.81_{0.06}$ & $17.214_{3.679}$ \\
  & CP2-PCP & $0.0360_{0.0105}$ & $0.0030_{0.0021}$ & $0.786_{0.030}$ & $1.02_{0.11}$ & $33.233_{5.569}$ \\
  & L-CP & $0.0365_{0.0135}$ & $0.0033_{0.0024}$ & $0.780_{0.029}$ & $1.05_{0.09}$ & $\underline{0.003_{0.001}}$ \\
  & PCP & $0.0386_{0.0073}$ & $0.0035_{0.0019}$ & $0.787_{0.036}$ & $0.85_{0.07}$ & $12.287_{2.979}$ \\
  & C-PCP & $0.0419_{0.0106}$ & $0.0031_{0.0017}$ & $\mathbf{0.806_{0.029}}$ & $1.19_{0.19}$ & $37.293_{9.280}$ \\
  & STDQR & $\underline{0.0346_{0.0107}}$ & $0.0026_{0.0019}$ & $0.790_{0.021}$ & $0.83_{0.05}$ & $18.579_{3.534}$ \\
  & CopulaCPTS & $0.0426_{0.0153}$ & $0.0038_{0.0034}$ & $0.793_{0.066}$ & $0.89_{0.08}$ & $16.706_{3.119}$ \\
\midrule
\multirow{13}{*}{\rotatebox{90}{cpu\_act}} & Stand. Res. & $0.0368_{0.0099}$ & $0.0027_{0.0015}$ & $\underline{0.798_{0.023}}$ & $1.13_{0.08}$ & $0.006_{0.001}$ \\
  & MSE & $0.0964_{0.0105}$ & $0.0194_{0.0045}$ & $0.675_{0.032}$ & $1.14_{0.07}$ & $0.152_{0.050}$ \\
  & MVCS & $0.0855_{0.0050}$ & $0.0311_{0.0055}$ & $0.574_{0.053}$ & $\underline{1.05_{0.05}}$ & $0.053_{0.012}$ \\
  & HR & $0.0559_{0.0044}$ & $0.0057_{0.0012}$ & $0.769_{0.015}$ & $1.07_{0.03}$ & $\mathbf{0.002_{0.000}}$ \\
  & DR-CP & $0.0394_{0.0075}$ & $0.0042_{0.0033}$ & $0.772_{0.038}$ & $\mathbf{0.97_{0.13}}$ & $0.007_{0.001}$ \\
  & C-HDR & $0.0473_{0.0108}$ & $0.0039_{0.0018}$ & $0.766_{0.028}$ & $1.46_{0.17}$ & $27.366_{7.185}$ \\
  & HD-PCP & $\mathbf{0.0299_{0.0098}}$ & $0.0022_{0.0016}$ & $0.795_{0.025}$ & $1.09_{0.11}$ & $34.721_{7.917}$ \\
  & CP2-PCP & $0.0459_{0.0090}$ & $0.0040_{0.0021}$ & $0.762_{0.039}$ & $1.68_{1.20}$ & $66.768_{14.695}$ \\
  & L-CP & $0.0508_{0.0109}$ & $0.0048_{0.0025}$ & $0.756_{0.031}$ & $1.47_{0.08}$ & $\underline{0.004_{0.001}}$ \\
  & PCP & $0.0307_{0.0058}$ & $\mathbf{0.0015_{0.0006}}$ & $0.793_{0.020}$ & $1.14_{0.11}$ & $27.063_{7.232}$ \\
  & C-PCP & $0.0433_{0.0092}$ & $0.0032_{0.0010}$ & $0.776_{0.019}$ & $1.62_{0.06}$ & $77.444_{17.833}$ \\
  & STDQR & $\underline{0.0305_{0.0079}}$ & $\underline{0.0016_{0.0010}}$ & $0.797_{0.018}$ & $1.11_{0.12}$ & $35.379_{6.900}$ \\
  & CopulaCPTS & $0.0370_{0.0240}$ & $0.0026_{0.0028}$ & $\mathbf{0.815_{0.061}}$ & $1.17_{0.10}$ & $33.708_{5.958}$ \\
\midrule
\bottomrule
\end{tabular}

\centering
\begin{tabular}{l l c c c c c}
\toprule
Dataset & Method & $L_1$-ERT & $L_2$-ERT & WSC & Size & Time/1k \\
\midrule
\multirow{13}{*}{\rotatebox{90}{house\_8L}} & Stand. Res. & $0.0223_{0.0032}$ & $0.0008_{0.0004}$ & $\underline{0.839_{0.009}}$ & $1.62_{0.04}$ & $0.005_{0.001}$ \\
  & MSE & $0.0580_{0.0026}$ & $0.0067_{0.0011}$ & $0.781_{0.020}$ & $1.63_{0.03}$ & $0.201_{0.048}$ \\
  & MVCS & $0.0436_{0.0052}$ & $0.0051_{0.0008}$ & $0.813_{0.012}$ & $1.50_{0.03}$ & $0.055_{0.026}$ \\
  & HR & $0.0533_{0.0039}$ & $0.0051_{0.0009}$ & $0.798_{0.018}$ & $1.64_{0.04}$ & $\mathbf{0.002_{0.000}}$ \\
  & DR-CP & $0.0404_{0.0039}$ & $0.0046_{0.0009}$ & $0.806_{0.019}$ & $\mathbf{1.35_{0.12}}$ & $0.006_{0.001}$ \\
  & C-HDR & $0.0296_{0.0063}$ & $0.0014_{0.0007}$ & $0.836_{0.017}$ & $1.50_{0.11}$ & $54.250_{22.089}$ \\
  & HD-PCP & $0.0283_{0.0063}$ & $0.0018_{0.0007}$ & $0.829_{0.015}$ & $2.92_{2.67}$ & $72.573_{28.698}$ \\
  & CP2-PCP & $\mathbf{0.0200_{0.0049}}$ & $\underline{0.0006_{0.0004}}$ & $0.835_{0.011}$ & $1.76_{0.09}$ & $139.272_{51.333}$ \\
  & L-CP & $0.0275_{0.0093}$ & $0.0013_{0.0010}$ & $0.837_{0.020}$ & $1.64_{0.08}$ & $\underline{0.002_{0.001}}$ \\
  & PCP & $0.0324_{0.0037}$ & $0.0023_{0.0005}$ & $0.822_{0.013}$ & $3.06_{2.66}$ & $53.370_{21.916}$ \\
  & C-PCP & $\underline{0.0218_{0.0037}}$ & $\mathbf{0.0005_{0.0004}}$ & $\mathbf{0.849_{0.016}}$ & $1.95_{0.31}$ & $162.786_{63.893}$ \\
  & STDQR & $0.0281_{0.0048}$ & $0.0016_{0.0007}$ & $0.835_{0.014}$ & $\underline{1.49_{0.05}}$ & $65.639_{28.135}$ \\
  & CopulaCPTS & $0.0273_{0.0055}$ & $0.0014_{0.0007}$ & $0.832_{0.021}$ & $2.51_{2.54}$ & $70.871_{27.325}$ \\
\midrule
\multirow{13}{*}{\rotatebox{90}{miami}} & Stand. Res. & $\mathbf{0.0334_{0.0061}}$ & $\mathbf{0.0020_{0.0010}}$ & $\mathbf{0.823_{0.015}}$ & $\underline{0.87_{0.03}}$ & $0.005_{0.001}$ \\
  & MSE & $0.0837_{0.0048}$ & $0.0112_{0.0014}$ & $0.685_{0.026}$ & $0.98_{0.05}$ & $0.131_{0.026}$ \\
  & MVCS & $0.0655_{0.0065}$ & $0.0166_{0.0017}$ & $0.684_{0.023}$ & $0.88_{0.04}$ & $0.048_{0.001}$ \\
  & HR & $0.0657_{0.0071}$ & $0.0067_{0.0008}$ & $0.747_{0.014}$ & $0.89_{0.03}$ & $\mathbf{0.002_{0.000}}$ \\
  & DR-CP & $0.0426_{0.0097}$ & $0.0036_{0.0012}$ & $0.771_{0.022}$ & $0.88_{0.03}$ & $0.005_{0.001}$ \\
  & C-HDR & $0.0442_{0.0055}$ & $0.0030_{0.0006}$ & $0.803_{0.016}$ & $1.04_{0.04}$ & $11.236_{2.096}$ \\
  & HD-PCP & $0.0445_{0.0071}$ & $0.0036_{0.0010}$ & $0.780_{0.026}$ & $0.88_{0.04}$ & $15.867_{2.345}$ \\
  & CP2-PCP & $0.0395_{0.0063}$ & $0.0029_{0.0013}$ & $0.787_{0.025}$ & $1.05_{0.04}$ & $29.755_{4.660}$ \\
  & L-CP & $0.0430_{0.0078}$ & $0.0033_{0.0011}$ & $0.785_{0.026}$ & $0.97_{0.04}$ & $\underline{0.002_{0.000}}$ \\
  & PCP & $0.0390_{0.0061}$ & $0.0028_{0.0009}$ & $0.781_{0.023}$ & $0.97_{0.03}$ & $10.868_{2.037}$ \\
  & C-PCP & $\underline{0.0389_{0.0073}}$ & $\underline{0.0021_{0.0009}}$ & $0.808_{0.024}$ & $1.14_{0.05}$ & $32.202_{6.722}$ \\
  & STDQR & $0.0398_{0.0066}$ & $0.0030_{0.0004}$ & $0.785_{0.019}$ & $\mathbf{0.87_{0.03}}$ & $14.499_{1.771}$ \\
  & CopulaCPTS & $0.0401_{0.0059}$ & $0.0025_{0.0009}$ & $\underline{0.813_{0.023}}$ & $0.92_{0.05}$ & $15.120_{3.060}$ \\
\midrule
\multirow{13}{*}{\rotatebox{90}{sulfur}} & Stand. Res. & $0.0466_{0.0044}$ & $0.0050_{0.0017}$ & $\underline{0.807_{0.016}}$ & $1.43_{0.04}$ & $0.006_{0.001}$ \\
  & MSE & $0.0526_{0.0095}$ & $0.0069_{0.0030}$ & $0.788_{0.023}$ & $1.49_{0.04}$ & $0.196_{0.023}$ \\
  & MVCS & $0.0563_{0.0047}$ & $0.0071_{0.0013}$ & $0.795_{0.010}$ & $\underline{1.43_{0.04}}$ & $0.048_{0.001}$ \\
  & HR & $0.0471_{0.0070}$ & $0.0052_{0.0022}$ & $0.790_{0.015}$ & $\mathbf{1.36_{0.03}}$ & $\mathbf{0.002_{0.000}}$ \\
  & DR-CP & $0.0448_{0.0071}$ & $0.0046_{0.0024}$ & $0.795_{0.018}$ & $1.48_{0.04}$ & $0.006_{0.001}$ \\
  & C-HDR & $0.0436_{0.0085}$ & $0.0039_{0.0011}$ & $0.804_{0.024}$ & $1.58_{0.04}$ & $13.660_{1.837}$ \\
  & HD-PCP & $0.0381_{0.0063}$ & $0.0035_{0.0015}$ & $0.805_{0.018}$ & $1.51_{0.05}$ & $17.520_{2.751}$ \\
  & CP2-PCP & $0.0369_{0.0091}$ & $0.0035_{0.0013}$ & $0.794_{0.025}$ & $1.67_{0.05}$ & $32.987_{5.117}$ \\
  & L-CP & $0.0538_{0.0078}$ & $0.0061_{0.0015}$ & $0.791_{0.011}$ & $1.52_{0.04}$ & $\underline{0.003_{0.000}}$ \\
  & PCP & $\mathbf{0.0319_{0.0060}}$ & $\mathbf{0.0020_{0.0007}}$ & $0.803_{0.017}$ & $1.65_{0.03}$ & $13.102_{1.832}$ \\
  & C-PCP & $\underline{0.0343_{0.0109}}$ & $\underline{0.0023_{0.0011}}$ & $\mathbf{0.812_{0.020}}$ & $1.74_{0.05}$ & $39.555_{7.945}$ \\
  & STDQR & $0.0425_{0.0065}$ & $0.0045_{0.0015}$ & $0.805_{0.013}$ & $1.50_{0.11}$ & $16.344_{1.878}$ \\
  & CopulaCPTS & $0.0486_{0.0112}$ & $0.0059_{0.0039}$ & $0.792_{0.039}$ & $1.45_{0.10}$ & $16.460_{2.794}$ \\
\midrule
\bottomrule
\end{tabular}

\newpage

\paragraph{Multivariate regression}

\centering
\begin{tabular}{l l c c c c c}
\toprule
Dataset & Method & $L_1$-ERT & $L_2$-ERT & WSC & Size & Time/1k \\
\midrule
\multirow{13}{*}{\rotatebox{90}{bias}} & Stand. Res. & $\underline{0.0427_{0.0094}}$ & $0.0031_{0.0019}$ & $\mathbf{0.803_{0.026}}$ & $1.25_{0.06}$ & $\underline{0.005_{0.001}}$ \\
  & MSE & $0.0529_{0.0061}$ & $0.0047_{0.0015}$ & $0.792_{0.022}$ & $\mathbf{1.11_{0.02}}$ & $0.130_{0.049}$ \\
  & MVCS & $0.0515_{0.0064}$ & $0.0045_{0.0017}$ & $0.794_{0.018}$ & $\underline{1.16_{0.02}}$ & $0.039_{0.014}$ \\
  & HR & $0.0492_{0.0054}$ & $0.0041_{0.0013}$ & $0.786_{0.028}$ & $1.21_{0.05}$ & $0.006_{0.001}$ \\
  & DR-CP & $0.0451_{0.0049}$ & $0.0041_{0.0015}$ & $0.768_{0.018}$ & $1.39_{0.04}$ & $0.010_{0.003}$ \\
  & C-HDR & $0.0458_{nan}$ & $0.0047_{nan}$ & $0.755_{nan}$ & $1.58_{nan}$ & $14.883_{nan}$ \\
  & HD-PCP & $0.0434_{0.0078}$ & $0.0032_{0.0020}$ & $\underline{0.795_{0.026}}$ & $1.24_{0.03}$ & $16.330_{3.683}$ \\
  & CP2-PCP & $0.0528_{0.0070}$ & $0.0053_{0.0015}$ & $0.758_{0.021}$ & $1.38_{0.04}$ & $25.098_{6.455}$ \\
  & L-CP & $0.0509_{0.0066}$ & $0.0056_{0.0021}$ & $0.764_{0.026}$ & $1.47_{0.05}$ & $\mathbf{0.003_{0.001}}$ \\
  & PCP & $\mathbf{0.0408_{0.0062}}$ & $\mathbf{0.0027_{0.0019}}$ & $0.790_{0.023}$ & $1.30_{0.03}$ & $7.529_{2.876}$ \\
  & C-PCP & $0.0482_{0.0025}$ & $0.0037_{0.0010}$ & $0.774_{0.025}$ & $1.48_{0.08}$ & $22.545_{6.448}$ \\
  & STDQR & $0.0443_{0.0069}$ & $0.0033_{0.0019}$ & $0.789_{0.028}$ & $1.24_{0.03}$ & $15.160_{3.211}$ \\
  & CopulaCPTS & $0.0436_{0.0095}$ & $\underline{0.0030_{0.0018}}$ & $0.789_{0.034}$ & $1.24_{0.03}$ & $12.566_{3.187}$ \\
\midrule
\multirow{13}{*}{\rotatebox{90}{casp}} & Stand. Res. & $0.0543_{0.0046}$ & $0.0074_{0.0013}$ & $0.837_{0.010}$ & $1.42_{0.05}$ & $\underline{0.005_{0.002}}$ \\
  & MSE & $0.0703_{0.0035}$ & $0.0110_{0.0010}$ & $0.834_{0.004}$ & $1.28_{0.01}$ & $0.176_{0.059}$ \\
  & MVCS & $0.0545_{0.0019}$ & $0.0077_{0.0014}$ & $0.844_{0.012}$ & $1.23_{0.05}$ & $0.048_{0.014}$ \\
  & HR & $0.0471_{0.0033}$ & $0.0047_{0.0008}$ & $0.842_{0.005}$ & $1.31_{0.03}$ & $0.006_{0.001}$ \\
  & DR-CP & $0.0465_{0.0033}$ & $0.0053_{0.0007}$ & $0.843_{0.008}$ & $\underline{1.21_{0.01}}$ & $0.010_{0.002}$ \\
  & C-HDR & $\underline{0.0442_{0.0035}}$ & $0.0046_{0.0006}$ & $0.855_{0.010}$ & $\mathbf{1.11_{0.29}}$ & $170.009_{164.271}$ \\
  & HD-PCP & $0.0468_{0.0030}$ & $0.0047_{0.0008}$ & $0.840_{0.008}$ & $2.28_{2.41}$ & $318.492_{395.020}$ \\
  & CP2-PCP & $\mathbf{0.0401_{0.0061}}$ & $\mathbf{0.0032_{0.0007}}$ & $\mathbf{0.867_{0.023}}$ & $1.92_{1.27}$ & $224.649_{133.714}$ \\
  & L-CP & $0.0481_{0.0038}$ & $0.0062_{0.0008}$ & $0.850_{0.007}$ & $1.63_{0.57}$ & $\mathbf{0.002_{0.000}}$ \\
  & PCP & $0.0466_{0.0048}$ & $\underline{0.0043_{0.0007}}$ & $0.832_{0.010}$ & $1.48_{0.16}$ & $169.144_{163.409}$ \\
  & C-PCP & $0.1154_{0.2193}$ & $0.0583_{0.1561}$ & $0.764_{0.264}$ & $1.45_{0.03}$ & $311.180_{294.205}$ \\
  & STDQR & $0.1086_{0.1960}$ & $0.0491_{0.1403}$ & $0.764_{0.245}$ & $2.09_{1.83}$ & $73.434_{102.011}$ \\
  & CopulaCPTS & $0.0525_{0.0181}$ & $0.0061_{0.0018}$ & $\underline{0.859_{0.040}}$ & $1.88_{1.14}$ & $167.442_{187.060}$ \\
\midrule
\multirow{13}{*}{\rotatebox{90}{house}} & Stand. Res. & $0.0528_{0.0076}$ & $0.0054_{0.0019}$ & $\underline{0.809_{0.014}}$ & $\mathbf{1.01_{0.04}}$ & $\underline{0.004_{0.001}}$ \\
  & MSE & $0.0848_{0.0046}$ & $0.0142_{0.0015}$ & $0.720_{0.019}$ & $1.18_{0.02}$ & $0.137_{0.036}$ \\
  & MVCS & $0.0989_{0.0057}$ & $0.0224_{0.0031}$ & $0.690_{0.034}$ & $1.28_{0.05}$ & $0.027_{0.012}$ \\
  & HR & $0.0692_{0.0056}$ & $0.0097_{0.0014}$ & $0.746_{0.017}$ & $1.28_{0.02}$ & $0.005_{0.001}$ \\
  & DR-CP & $0.0643_{0.0059}$ & $0.0093_{0.0015}$ & $0.766_{0.016}$ & $\underline{1.04_{0.04}}$ & $0.007_{0.002}$ \\
  & C-HDR & $0.0495_{0.0058}$ & $0.0052_{0.0013}$ & $0.804_{0.016}$ & $1.15_{0.05}$ & $16.772_{6.236}$ \\
  & HD-PCP & $0.0624_{0.0047}$ & $0.0081_{0.0018}$ & $0.787_{0.012}$ & $1.82_{2.06}$ & $21.125_{6.773}$ \\
  & CP2-PCP & $\underline{0.0444_{0.0073}}$ & $\underline{0.0041_{0.0012}}$ & $0.808_{0.012}$ & $1.22_{0.04}$ & $41.761_{13.457}$ \\
  & L-CP & $0.0497_{0.0064}$ & $0.0053_{0.0012}$ & $0.800_{0.015}$ & $1.29_{0.07}$ & $\mathbf{0.001_{0.000}}$ \\
  & PCP & $0.0628_{0.0081}$ & $0.0084_{0.0022}$ & $0.774_{0.017}$ & $1.89_{2.05}$ & $15.975_{6.212}$ \\
  & C-PCP & $\mathbf{0.0442_{0.0069}}$ & $\mathbf{0.0038_{0.0012}}$ & $\mathbf{0.813_{0.017}}$ & $2.05_{2.38}$ & $50.679_{24.782}$ \\
  & STDQR & $0.0607_{0.0047}$ & $0.0078_{0.0016}$ & $0.784_{0.013}$ & $1.59_{1.30}$ & $20.781_{8.768}$ \\
  & CopulaCPTS & $0.0546_{0.0065}$ & $0.0066_{0.0025}$ & $0.799_{0.028}$ & $1.28_{0.24}$ & $19.358_{6.187}$ \\
\midrule
\bottomrule
\end{tabular}

\centering
\begin{tabular}{l l c c c c c}
\toprule
Dataset & Method & $L_1$-ERT & $L_2$-ERT & WSC & Size & Time/1k \\
\midrule
\multirow{13}{*}{\rotatebox{90}{rf1}} & Stand. Res. & $0.0743_{0.0092}$ & $0.0137_{0.0038}$ & $0.773_{0.016}$ & $\underline{0.31_{0.01}}$ & $\underline{0.006_{0.001}}$ \\
  & MSE & $0.0954_{0.0085}$ & $0.0189_{0.0051}$ & $0.703_{0.029}$ & $0.95_{0.00}$ & $0.178_{0.034}$ \\
  & MVCS & $0.1045_{0.0079}$ & $0.0294_{0.0036}$ & $0.645_{0.035}$ & $\mathbf{0.30_{0.01}}$ & $0.049_{0.001}$ \\
  & HR & $0.0904_{0.0124}$ & $0.0241_{0.0074}$ & $0.719_{0.042}$ & $0.62_{0.03}$ & $0.025_{0.001}$ \\
  & DR-CP & $0.1052_{0.0088}$ & $0.0248_{0.0057}$ & $0.655_{0.032}$ & $0.37_{0.01}$ & $0.034_{0.002}$ \\
  & C-HDR & $0.0785_{0.0071}$ & $0.0145_{0.0047}$ & $0.765_{0.028}$ & $0.39_{0.02}$ & $21.148_{2.452}$ \\
  & HD-PCP & $0.1194_{0.0089}$ & $0.0308_{0.0037}$ & $0.614_{0.037}$ & $0.41_{0.04}$ & $26.383_{4.228}$ \\
  & CP2-PCP & $\mathbf{0.0631_{0.0062}}$ & $\mathbf{0.0096_{0.0026}}$ & $\underline{0.778_{0.023}}$ & $0.39_{0.01}$ & $47.723_{5.485}$ \\
  & L-CP & $0.0813_{0.0082}$ & $0.0153_{0.0045}$ & $0.751_{0.034}$ & $0.40_{0.01}$ & $\mathbf{0.003_{0.000}}$ \\
  & PCP & $0.1208_{0.0065}$ & $0.0308_{0.0036}$ & $0.599_{0.036}$ & $0.41_{0.04}$ & $18.907_{2.283}$ \\
  & C-PCP & $\underline{0.0663_{0.0113}}$ & $\underline{0.0103_{0.0044}}$ & $\mathbf{0.787_{0.025}}$ & $0.39_{0.01}$ & $55.434_{6.881}$ \\
  & STDQR & $0.1210_{0.0088}$ & $0.0321_{0.0039}$ & $0.601_{0.041}$ & $0.39_{0.02}$ & $29.116_{3.245}$ \\
  & CopulaCPTS & $0.0993_{0.0083}$ & $0.0269_{0.0063}$ & $0.703_{0.031}$ & $0.40_{0.05}$ & $24.389_{2.094}$ \\
\midrule
\multirow{13}{*}{\rotatebox{90}{rf2}} & Stand. Res. & $0.0834_{0.0055}$ & $0.0197_{0.0041}$ & $0.771_{0.021}$ & $0.38_{0.02}$ & $\underline{0.005_{0.001}}$ \\
  & MSE & $0.0894_{0.0064}$ & $0.0189_{0.0046}$ & $0.778_{0.019}$ & $0.96_{0.01}$ & $0.207_{0.045}$ \\
  & MVCS & $0.1191_{0.0067}$ & $0.0383_{0.0030}$ & $0.759_{0.026}$ & $0.47_{0.02}$ & $0.033_{0.013}$ \\
  & HR & $0.1006_{0.0134}$ & $0.0286_{0.0081}$ & $0.744_{0.062}$ & $1.20_{0.12}$ & $0.036_{0.011}$ \\
  & DR-CP & $0.0921_{0.0084}$ & $0.0219_{0.0042}$ & $0.775_{0.017}$ & $\underline{0.34_{0.02}}$ & $0.024_{0.006}$ \\
  & C-HDR & $0.0768_{0.0061}$ & $0.0158_{0.0036}$ & $\mathbf{0.790_{0.019}}$ & $0.35_{0.01}$ & $32.829_{3.846}$ \\
  & HD-PCP & $0.1047_{0.0084}$ & $0.0266_{0.0039}$ & $0.761_{0.017}$ & $\mathbf{0.34_{0.01}}$ & $20.817_{2.523}$ \\
  & CP2-PCP & $\mathbf{0.0685_{0.0073}}$ & $\underline{0.0137_{0.0039}}$ & $0.786_{0.017}$ & $0.35_{0.02}$ & $35.426_{4.708}$ \\
  & L-CP & $0.0776_{0.0063}$ & $0.0168_{0.0027}$ & $0.775_{0.022}$ & $0.36_{0.02}$ & $\mathbf{0.003_{0.001}}$ \\
  & PCP & $0.1024_{0.0079}$ & $0.0249_{0.0047}$ & $0.768_{0.024}$ & $0.35_{0.01}$ & $13.920_{2.513}$ \\
  & C-PCP & $\underline{0.0708_{0.0067}}$ & $\mathbf{0.0125_{0.0034}}$ & $0.787_{0.020}$ & $0.35_{0.02}$ & $39.690_{6.311}$ \\
  & STDQR & $0.1051_{0.0074}$ & $0.0267_{0.0041}$ & $0.759_{0.018}$ & $0.35_{0.02}$ & $22.148_{3.143}$ \\
  & CopulaCPTS & $0.0897_{0.0068}$ & $0.0197_{0.0040}$ & $\underline{0.789_{0.036}}$ & $0.60_{0.79}$ & $18.410_{2.872}$ \\
\midrule
\multirow{13}{*}{\rotatebox{90}{taxi}} & Stand. Res. & $0.0067_{0.0029}$ & $\mathbf{0.0000_{0.0002}}$ & $0.862_{0.008}$ & $3.32_{0.05}$ & $\underline{0.004_{0.001}}$ \\
  & MSE & $0.0273_{0.0027}$ & $0.0013_{0.0003}$ & $0.814_{0.009}$ & $\mathbf{1.89_{0.01}}$ & $0.172_{0.060}$ \\
  & MVCS & $0.0251_{0.0036}$ & $0.0016_{0.0007}$ & $0.827_{0.005}$ & $3.23_{0.07}$ & $0.041_{0.013}$ \\
  & HR & $\mathbf{0.0059_{0.0048}}$ & $\mathbf{0.0000_{0.0003}}$ & $0.865_{0.003}$ & $3.48_{0.04}$ & $0.006_{0.001}$ \\
  & DR-CP & $0.0151_{0.0040}$ & $0.0005_{0.0004}$ & $0.838_{0.012}$ & $\underline{2.26_{0.64}}$ & $0.007_{0.002}$ \\
  & C-HDR & $0.0142_{0.0025}$ & $0.0002_{0.0002}$ & $0.863_{0.007}$ & $2.56_{0.43}$ & $17.261_{4.656}$ \\
  & HD-PCP & $0.0980_{0.1509}$ & $0.0295_{0.0535}$ & $0.761_{0.163}$ & $3.60_{1.01}$ & $17.042_{6.860}$ \\
  & CP2-PCP & $0.0140_{0.0108}$ & $0.0001_{0.0003}$ & $\mathbf{0.880_{0.018}}$ & $4.34_{1.84}$ & $32.258_{2.918}$ \\
  & L-CP & $\underline{0.0061_{0.0042}}$ & $\mathbf{0.0000_{0.0002}}$ & $0.863_{0.009}$ & $3.13_{0.23}$ & $\mathbf{0.002_{0.000}}$ \\
  & PCP & $0.0664_{0.1472}$ & $0.0216_{0.0565}$ & $0.789_{0.157}$ & $4.05_{1.56}$ & $14.244_{7.030}$ \\
  & C-PCP & $0.1232_{0.1490}$ & $0.0343_{0.0510}$ & $0.741_{0.245}$ & $5.87_{2.65}$ & $36.764_{17.908}$ \\
  & STDQR & $0.0518_{0.0943}$ & $0.0098_{0.0218}$ & $0.812_{0.107}$ & $3.23_{0.45}$ & $14.112_{4.665}$ \\
  & CopulaCPTS & $0.0091_{0.0056}$ & $\mathbf{0.0000_{0.0002}}$ & $\underline{0.872_{0.011}}$ & $3.81_{1.06}$ & $22.428_{6.120}$ \\
\midrule
\bottomrule
\end{tabular}

\end{appendices}

%%%%%%%%%%%%%%%%%%%%%%%%%%%%%%%%%%%%%%%%%%%%%%%%%%%%%%%%%%%%%%%%%%%%%%%%%%%%%%%
%%%%%%%%%%%%%%%%%%%%%%%%%%%%%%%%%%%%%%%%%%%%%%%%%%%%%%%%%%%%%%%%%%%%%%%%%%%%%%%

\end{document}